\theoremstyle{plain}
\newtheorem{theorem}{Theorem}
\newtheorem{proposition}{Proposition}
\newtheorem{lemma}{Lemma}
\newtheorem{corollary}{Corollary}
\theoremstyle{definition}
\newtheorem{definition}{Definition}
\theoremstyle{remark}
\newtheorem{remark}{Remark}
\newcommand{\cH}{\mathcal{H}}
\newcommand{\cX}{\mathcal{X}}
\newcommand{\cP}{\mathcal{P}}
\newcommand{\CR}{\mathrm{CR}}
\newcommand{\opt}{F^*}
\newcommand{\alphab}{\bar{\alpha}}
\newcommand{\psib}{\bar{\psi}}
\newcommand{\betab}{\bar{\beta}}
\newcommand{\hb}{\bar{h}}
\DeclareMathOperator*{\argmin}{arg\,min}
\DeclareMathOperator*{\argmax}{arg\,max}
\DeclareMathOperator*{\E}{\mathbf{E}}
\newcommand{\re}{\mathbb{R}}
\newcommand{\linner}{\left\langle}
\newcommand{\rinner}{\right\rangle}
\renewcommand{\epsilon}{\varepsilon}
\title{Adaptive Learning Rate for Follow-the-Regularized-Leader: \\ Competitive Analysis and Best-of-Both-Worlds}
\author{
  Shinji Ito\footnote{
    NEC Corporation and RIKEN; \texttt{i-shinji@nec.com}.
  }
  \and
  Taira Tsuchiya\footnote{
    The University of Tokyo; 
    \texttt{tsuchiya@mist.i.u-tokyo.ac.jp}.
  }
  \and
  Junya Honda\footnote{
    Kyoto University and RIKEN;
    \texttt{honda@i.kyoto-u.ac.jp}.
  }
}
\begin{document}
\maketitle

\begin{abstract}
Follow-The-Regularized-Leader (FTRL) is known as an effective and versatile approach in online learning, where appropriate choice of the learning rate is crucial for smaller regret. To this end, we formulate the problem of adjusting FTRL's learning rate as a sequential decision-making problem and introduce the framework of competitive analysis. We establish a lower bound for the competitive ratio and propose update rules for learning rate that achieves an upper bound within a constant factor of this lower bound. Specifically, we illustrate that the optimal competitive ratio is characterized by the (approximate) monotonicity of components of the penalty term, showing that a constant competitive ratio is achievable if the components of the penalty term form a monotonically non-increasing sequence, and derive a tight competitive ratio when penalty terms are $\xi$-approximately monotone non-increasing. Our proposed update rule, referred to as \textit{stability-penalty matching}, also facilitates constructing the Best-Of-Both-Worlds (BOBW) algorithms for stochastic and adversarial environments. In these environments our result contributes to achieve tighter regret bound and broaden the applicability of algorithms for various settings such as multi-armed bandits, graph bandits, linear bandits, and contextual bandits.
\end{abstract}

\section{Introduction}
In the research field of online learning and bandit algorithms,
the \textit{follow-the-regularized-leader} (FTRL) framework 
offers a promising approach to achieving sublinear regret.
In this framework,
we choose an action $a_t$ in each round $t$,
on the basis of $x_t \in \cX$,
a solution to the following convex optimization problem:
\begin{align}
	\label{eq:defFTRLx}
	x_t \in \argmin_{ x \in \cX }
	\left\{
		\sum_{s=1}^{t-1} \hat{f}_s(x)
		+
		\frac{1}{\eta_t}
		\psi(x)
	\right\},
\end{align}
where $\cX$ is a convex set,
$\{ \hat{f}_s \}$ are estimators or surrogates of the loss functions,
$\{ \eta_t \}$ are learning rate parameters
that are positive and monotone non-decreasing,
and $\psi$ is a convex regularizer function.
This approach can be interpreted as a comprehensive framework that includes Online Gradient Descent~\citep{zinkevich2003online} and the Hedge algorithm~\citep{littlestone1994weighted,arora2012multiplicative,freund1997decision},
which demonstrates its effectiveness across various online learning and bandit problems, such as
multi-armed bandits~\citep{auer2002nonstochastic},
linear bandits~\citep{abernethy2008competing,cesa2012combinatorial},
and episodic MDPs~\citep{lee2020bias}.

To harness the effectiveness of FTRL, it is crucial to appropriately set the learning rate.
Here, a fixed learning rate determined by time horizon $T$ often suffices when $T$ is predefined and the goal is the worst-case optimality.
On the other hand,
adaptive update of the learning rate based on feedback received at each time step has been considered
when $T$ is not predetermined and/or the goal is to achieve the optimality beyond the worst case with better practical performance.
Such methods of adaptive learning rate
have been shown to be beneficial in achieving data-dependent bounds
\citep{cesa2007improved,orabona2015scale,erven2011adaptive}
and in constructing \textit{best-of-both-worlds} (BOBW) algorithms
\citep{gaillard2014second,ito2021parameter,jin2023improved}
that attain (nearly) optimal performance in both adversarial and stochastic settings.
Other literature on adaptive learning rates is also mentioned in Appendix~\ref{sec:related}.

This paper aims to develop a generic methodology for sequentially adjusting learning rate
in FTRL,
and to investigate its limitations.
A standard analysis for FTRL (e.g., in \citealp[Exercise 28.12]{lattimore2020bandit}) provides an upper bound on the regret $R_T$ as follows:
\begin{align}
	\label{eq:boundRT}
	R_T
	\lesssim
	\underset{\mathrm{stability~terms}}
		{\uuline{
		\sum_{t=1}^T  
		\eta_t z_t
		}
		}
		+
		\underset{\mathrm{penalty~terms}}
		{\uwave{
		\frac{1}{\eta_1}
		h_1
		+
		\sum_{t=2}^T 
		\left( \frac{1}{ \eta_t } - \frac{1}{\eta_{t-1}} \right)
		h_t
		}} ,
\end{align}
where $z_t$ and $h_t$ vary 
depending on the problem setup
and the regularizer function $\psi$.
For example,
in
the Hedge algorithm,
i.e.,
when \eqref{eq:defFTRLx} is specified by
$\cX = \cP(K) = \{ x \in [0,1]^K \mid \| x \|_1 = 1 \}$,
$\hat{f}_t(x) =  \ell_t^\top x$ with $\ell_t \in [0,1]^K$,
and $\psi(x)$ is the negative Shannon entropy,
$z_t$ and $h_t$ are bounded as
$z_t \le 
O\left( \sum_{i=1}^K \ell_{ti}^2 x_{ti} \right)
\le
O\left( \ell_t^\top x_t \right)
\le O(1)$
and
$h_t \le -\psi(x_t) \le \log K$.
In general FTRL,
a standard way of defining $h_t$ is
to set $h_t = \max_{x}\psi(x) - \psi(x_t)$.
Some concrete examples of $z_t$ will be discussed later,
such as in Section~\ref{sec:BOBW}.
Many existing methods for sequentially updating the learning rate adjust 
$\eta_t$ based solely on 
$z_t$ \citep{cesa2007improved,orabona2015scale,erven2011adaptive}.
Recently,
there has been consideration for adjusting the learning rate in response to $h_t$ as well \citep{ito2022nearly,tsuchiya2023best,kong2023best},
and approaches that adjust according to both $z_t$ and $h_t$ have emerged \citep{jin2023improved,tsuchiya2023stability}.
However, 
these update methods using $h_t$ are often somewhat ad-hoc,
designed for specific objectives (e.g., BOBW bounds),
and
the optimality of these update rules themselves have not been investigated.
More literature on FTRL with Tsallis entropy regularization is referenced in Appendix~\ref{sec:related}.

\subsection{Main contribution}
We first formulate the problem of choosing the learning rate
as an online decision-making problem to
minimize the right-hand side of
\eqref{eq:boundRT},
which is denoted by $F(\eta_{1:T}; z_{1:T}, h_{1:T})$.
For any update rule $\pi$,
we denote by $F^{\pi}(z_{1:T}, h_{1:T})$ the value of $F(\eta_{1:T}; z_{1:T}, h_{1:T})$ for $\eta_{1:T}$ determined by $\pi$,
where the update rule $\pi$ is specified as a series of functions:
$\pi = \{ \pi_t: (z_{1:t}, h_{1:t}) \mapsto \eta_t \}_{t \in \mathbb{N}}$.
We also define $\opt (z_{1:T}, h_{1:T})$
as
the minimum of $F(\eta_{1:T}; z_{1:T}, h_{1:T})$ achieved by 
the optimal sequence $\eta^*_1 \ge \eta^*_2 \ge \cdots \ge \eta^*_T$ of learning rates given
the entire series of $z_{1:T}$ and $h_{1:T}$ in advance.
Note that each $\eta^*_t$ may depend on $z_{1:T}$ and $h_{1:T}$ including the
``future feedback''
after the $t$-th
round.
To evaluate the performance of policies $\pi$ and the complexity of this online decision-making problem,
we focus on the \textit{competitive ratio} defined as
	$
	\CR(\pi; z_{1:T}, h_{1:T}) = \frac{F^{\pi}(z_{1:T}, h_{1:T})}{\opt (z_{1:T}, h_{1:T}) }
	$.

\begin{table}[t]
	\centering
	\caption{Upper bounds on $F^{\pi}$ achieved by proposed update rules $\pi$ for learning rates.}
	\label{tab:CR}
	\begin{tabular}{lcc}
		\toprule 
		Input for $\eta_t$ & $F^*$-dependent bound  & $(z_{1:T}, h_{1:T})$-dependent bound \\
		\midrule 
		$z_{1:t}, h_{1:t}$
		& $ 4 \sqrt{\xi} \opt $ 
		&
$\min\left\{ \sqrt{\sum_{t=1}^T z_t h_t \log T} ,  \sqrt{h_{\max} \sum_{t=1}^T z_t } \right\}$
		\\
		\addlinespace
		$z_{1:t-1}, h_{1:t-1}, \xi$
		& $ 4 \sqrt{\xi} \opt 
		+ O(z_{\max} + h_1)
		$ 
		& $ \min\left\{ \sqrt{\xi \sum_{t=1}^T z_t h_t \log T} ,  \sqrt{\xi h_{\max} \sum_{t=1}^T z_t } \right\}$
		\\
		\addlinespace
		$z_{1:t-1}, h_{1:t-1}, \hat{h}_{t}$
		& --
		& $\min\left\{ \sqrt{\sum_{t=1}^T z_t \hat{h}_{t+1} \log T} ,  \sqrt{\hat{h}_{\max} \sum_{t=1}^T z_t } \right\}$
		\\
		\midrule
		Lower bound
		&
		$\frac{\sqrt{T-1}}{\sqrt{T} + \xi}\sqrt{\xi} F^*$
		&
		--
		\\
		\bottomrule
	\end{tabular}
\end{table}

This study reveals that the optimal competitive ratio can be characterized by \textit{approximate monotonicity} of $h_{1:T}$.
For any fixed $\xi \ge 1$,
a sequence $h_{1:T}$ is called $\xi$-approximately monotone non-increasing if $\xi h_{t'} \ge h_t$ for all $t$ and $t' < t$.
Letting $H_{\xi}^T \subseteq \re_{>0}^T$ denote the set of all $\xi$-approximately monotone non-increasing sequences,
we have the following lower bound on the competitive ratio:
\begin{theorem}
	\label{thm:CRLB}
	For any $T \in \mathbb{N}$,
	any $\xi \ge 1$,
	and for any policy
	$\pi = \{ \pi_t: (z_{1:t}, h_{1:t}) \mapsto \eta_t \}$,
	there exist
	$z_{1:T} \in \re_{\ge 0}^T$
	and 
	$h_{1:T} \in H_{\xi}^T$ such that
	$\CR(\pi; z_{1:T}, h_{1:T}) \ge \frac{\sqrt{T-1}}{\sqrt{T}+\xi} \sqrt{\xi}$.
\end{theorem}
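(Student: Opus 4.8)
The plan is to exhibit, for an arbitrary (deterministic) policy $\pi$, an instance on which $\CR$ is large. Since $\pi$ is a fixed sequence of maps $\pi_t$, this is equivalent to running $\pi$ against an \emph{adaptive} adversary: having simulated $\pi$ through round $t-1$ (so that $\eta_{1:t-1}$ and $z_{1:t-1},h_{1:t-1}$ are known), the adversary chooses $z_t,h_t$ before $\pi$ commits $\eta_t$. First I would rewrite the objective by telescoping the penalty, $F(\eta_{1:T};z,h)=\sum_{t=1}^T z_t\eta_t+\frac{h_1}{\eta_1}+\sum_{t=2}^T\bigl(\frac{1}{\eta_t}-\frac{1}{\eta_{t-1}}\bigr)h_t$, and record the basic but decisive fact that a \emph{constant} sequence $\eta_t\equiv c$ collapses the penalty to $h_1/c$; hence $\opt(z,h)\le\min_{c>0}\bigl(c\sum_t z_t+h_1/c\bigr)=2\sqrt{h_1\sum_t z_t}$ for \emph{any} $z,h$. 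The moral is that upward jumps in $h$ never help the offline optimum — it can always ``flatten'' $\eta$ around them — so all the difficulty for $\pi$ comes from being unable to do so online.

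The instance I would use has $T-1$ ``baseline'' rounds with $(z_t,h_t)=(1,1)$ for $t=1,\dots,T-1$, followed by a final round $T$ carrying an upward jump $h_T=\xi$ (note $\xi h_{t'}=\xi\ge h_T$ for $t'<T$, so $h_{1:T}\in H_{\xi}^T$) together with a stability spike $z_T=\mu$, where $\mu$ is chosen as a function of the value $r:=\eta_{T-1}$ that $\pi$ commits after the baseline rounds (which is a single number once $\pi$ is fixed, because those rounds are indistinguishable). The purpose of the spike is to force $\pi$ to \emph{strictly} decrease its learning rate at round $T$: for $\mu$ large enough the best response, balancing $\mu\eta_T+\xi/\eta_T$, is $\eta_T=\sqrt{\xi/\mu}<r$, so the jump term $\frac{\xi}{\eta_T}+\frac{1-\xi}{\eta_{T-1}}$ fails to telescope and contributes $\approx\xi/\eta_T$. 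Using $\sum_{t<T}\eta_t\ge(T-1)r$ this gives $F^{\pi}\gtrsim (T-1)r-\tfrac{\xi-1}{r}+2\sqrt{\xi\mu}$, whereas $\opt\le 2\sqrt{T-1+\mu}$; the discrepancy between $2\sqrt{\xi\mu}$ in the numerator and $2\sqrt{\mu}$ in the denominator is precisely the $\sqrt{\xi}$ that drives the bound.

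The remaining work is to choose $\mu$ (given $r$) so as to maximize this ratio and then check that \emph{no} value of $r$ lets $\pi$ escape the claimed bound $\tfrac{\sqrt{T-1}}{\sqrt T+\xi}\sqrt{\xi}$; this is a short one-variable optimization once the two expressions above are in hand. To handle the case where $\pi$ tries to keep $r$ tiny so the spike cannot ``catch'' it, I would, if needed, pair the construction with the alternative instance that simply continues the baseline to length $T$ — all $(z_t,h_t)=(1,1)$ — for which $\opt=2\sqrt T$ but $F^{\pi}\ge\sum_{t=1}^T\eta_t+1/\eta_T\ge(T-1)r+1/r$ blows up when $r$ is too small; the adversary then takes whichever of the two instances is worse for $\pi$.

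The step I expect to be the main obstacle is making the argument uniform over \emph{all} policies — i.e., over every non-increasing trajectory $\pi$ might produce on the baseline rounds — which is exactly why $\mu$ must be calibrated to the realized $\eta_{T-1}$ and why the heart of the proof becomes the min-over-$\eta_{T-1}$ / max-over-$\mu$ saddle computation; the supporting ingredients (the telescoping identity yielding $\opt\le 2\sqrt{h_1\sum_t z_t}$, and the verification that the spike genuinely prevents the online learning rate from being flattened across the jump) are routine but essential.
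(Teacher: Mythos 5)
Your approach is correct and proves the theorem, but by a genuinely different construction than the paper's. The paper places the upward jump in $h$ at round~$2$ and compares two instances that coincide at round~$1$ but differ in later $z$: one with $z_t=0$ for $t\ge 2$ (so $\opt=2$), and one with $z_t=1$ for all $t$ (so $\opt\le 2\sqrt{T}$); the policy's shared choice of $\eta_1$ cannot serve both, and optimizing the max of the two ratios over $\eta_1$ gives exactly $\frac{\sqrt{T-1}}{\sqrt{T}+\xi}\sqrt{\xi}$. You instead keep the first $T-1$ rounds constant, place the jump at round~$T$, and tune $z_T=\mu$ to the realized $r=\eta_{T-1}$. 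This works: the penalty telescopes to $\frac{\xi}{\eta_T}-\frac{\xi-1}{r}$, so $F^\pi \ge (T-1)r-\frac{\xi-1}{r}+2\sqrt{\xi\mu}$ by AM--GM, while $\opt \le 2\sqrt{T-1+\mu}$, and the ratio tends to $\sqrt{\xi}$ as $\mu\to\infty$ regardless of $r$. In fact your construction establishes the strictly stronger statement that $\sup_{z,h}\CR(\pi;z,h)\ge\sqrt{\xi}$ already for every fixed $T\ge 2$ (the paper's constant only approaches $\sqrt{\xi}$ as $T\to\infty$), at the cost of an unbounded $z_T$, whereas the paper's hard instances keep $z_t\in\{0,1\}$ and $h_t\in\{1,\xi\}$. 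One small simplification: the ``paired'' all-baseline instance you reserve for tiny $r$ is not actually needed --- the unconditional AM--GM estimate $\mu\eta_T+\xi/\eta_T\ge 2\sqrt{\xi\mu}$, combined with $\mu\gg\xi/r^2$, already overwhelms the $-\frac{\xi-1}{r}$ term, so the spike instance alone covers every $r$.
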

This lower bound implies that conditions on $h_{1:T}$ such as approximate monotonicity are essential
in order to establish non-trivial upper bounds on the competitive ratio.
The proof of this theorem is given in the appendix.

This paper also provides a policy $\pi = \{ \pi_t: (z_{1:t}, h_{1:t}) \mapsto \eta_t \}_{t \in \mathbb{N}}$
achieving a competitive-ratio upper bound that matches the lower bound in Theorem~\ref{thm:CRLB} up to a constant.
This policy is expressed by the solution of the following formula:
\begin{align}
	\label{eq:stpn}
	\eta_1 z_1
	=
	\frac{1}{\eta_1}h_1,
	\quad
	\eta_t z_t
	=
	\left(
		\frac{1}{\eta_t}
		-
		\frac{1}{\eta_{t-1}}
	\right)
	h_t
	\quad
	(t \ge 2),
\end{align}
i.e.,
the learning rate under which stability and penalty match in each round,
which is referred to as \textit{stability-penalty matching} (SPM) in this paper.
This formula of \eqref{eq:stpn} leads to the initialization of
$\eta_1 = \sqrt{z_1/h_1}$ and the update rule of
$\eta_{t} =
\frac{2}
{
	1
	+
	\sqrt{1 + 4 \eta_{t-1}^2 z_t / h_t}
}
\eta_{t-1}
$
for $t \ge 2$.
\begin{theorem}
	\label{thm:FUB}
	The policy $\pi = \{ \pi_t: (z_{1:t}, h_{1:t}) \mapsto \eta_t \}_{t \in \mathbb{N}}$ given by \eqref{eq:stpn} achieves
	$\CR(\pi; z_{1:T}, h_{1:T}) \le 4 \sqrt{\xi}$
	for any $\xi\ge 1$,
	$z_{1:T} \in \re_{\ge 0}^{T}$,
	and $h_{1:T} \in H_{\xi}^T$.
	In addition,
	this policy achieves 
	\begin{align}
		\label{eq:Fzh}
		F^{\pi}( z_{1:T}, h_{1:T})
		=
		O
		\left(
			\min \left\{ 
				\inf_{ \epsilon \ge \frac{1}{T}}
				\left\{
				\sqrt{
				\sum_{t=1}^T z_t h_t \log
				\left(
					\epsilon T
				\right)
				+
				\frac{
				z_{\max} h_{\max}
				}
				{\epsilon}
				}
				\right\},
				\sqrt{
					h_{\max}
					\sum_{t=1}^T z_t 
				}
			\right\}
		\right)
	\end{align}
	for any $z_{1:T} \in \re_{\ge 0}^{T}$ and $h_{1:T} \in \re_{>0}^T$,
	where 
	$h_{\max} = \max_{t \in [T]} h_{t}$
	and
	$z_{\max} = \max_{t \in [T]} z_{t}$.
\end{theorem}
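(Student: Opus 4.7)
By the SPM condition~\eqref{eq:stpn}, each stability term $\eta_t z_t$ equals its penalty counterpart $(1/\eta_t - 1/\eta_{t-1}) h_t$, so $F^\pi = 2\sum_t \eta_t z_t$. Writing $a_t := 1/\eta_t$ (with $a_0 := 0$), SPM is equivalent to the recursion $a_t(a_t - a_{t-1}) = z_t/h_t$, and expanding $a_t^2 - a_{t-1}^2 = a_t(a_t-a_{t-1}) + a_{t-1}(a_t-a_{t-1})$ combined with $a_{t-1} \le a_t$ yields the central telescoping bound $\sum_{s\le t}z_s/h_s \le a_t^2 \le 2\sum_{s\le t}z_s/h_s$. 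This identity will underlie all subsequent estimates.

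For the competitive ratio $\CR(\pi; z_{1:T}, h_{1:T}) \le 4\sqrt{\xi}$, I would first reduce to the monotone case via a scaling argument. Let $\tilde h_t := \min_{s\le t} h_s$; then $\tilde h$ is monotone non-increasing and $\tilde h_t \ge h_t/\xi$ by $\xi$-approximate monotonicity. Both $F^\pi$ and $F^*$ are coordinatewise non-decreasing in $h$ and degree-$1/2$ homogeneous (the scaling $h \mapsto \lambda h$ corresponds to $a \mapsto a/\sqrt\lambda$ in the recursion, giving $F^\pi(z,\lambda h) = \sqrt\lambda F^\pi(z,h)$; substituting $\eta \mapsto \sqrt\lambda\eta$ in the objective gives $F^*(z,\lambda h) = \sqrt\lambda F^*(z,h)$). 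Combining these, $F^\pi(h) \le F^\pi(\xi\tilde h) = \sqrt\xi F^\pi(\tilde h)$ and $F^*(h) \ge F^*(\tilde h)$, reducing the task to $F^\pi(\tilde h) \le 4 F^*(\tilde h)$ for monotone $\tilde h$. For this monotone case I would decompose $[T]$ into epochs $E_1, E_2, \ldots$ on which $h$ decreases by a factor of $2$; the restricted recursion bound gives per-epoch contribution to $F^\pi$ at most $O(\sqrt{h_{t_j}\sum_{E_j}z_t})$, while $F^*$ is bounded below by the maximum of two complementary estimates: the per-term AM-GM $F^* \ge 2\sum_t\sqrt{z_t c_t}$ with $c_t := h_t - h_{t+1}$ and $c_T := h_T$, and the per-prefix bound $F^* \ge 2\sup_\tau\sqrt{h_\tau\sum_{s\le\tau}z_s}$ obtained by Abel summation on the penalty plus AM-GM. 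A case analysis depending on whether $h$ drops sharply (tight AM-GM sum) or stays nearly constant over long spans (tight per-prefix bound) shows that one of the two lower bounds always dominates the epoch sum, yielding the constant $4$.

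For the explicit bound~\eqref{eq:Fzh}: the $O(\sqrt{h_{\max}\sum z_t})$ branch follows by induction, since $a_t^2 \ge \sum_{s\le t}z_s/h_{\max}$ gives $\eta_t z_t \le z_t\sqrt{h_{\max}/\sum_{s\le t}z_s}$, and the standard telescoping $\sum_t z_t/\sqrt{\sum_{s\le t}z_s} \le 2\sqrt{\sum_t z_t}$ yields $F^\pi \le 4\sqrt{h_{\max}\sum_t z_t}$. The $O(\sqrt{\sum z_t h_t \log(\epsilon T) + z_{\max}h_{\max}/\epsilon})$ branch follows by squaring SPM to get $\eta_t z_t = \sqrt{z_t h_t (1 - a_{t-1}/a_t)}$ and applying Cauchy-Schwarz: $(\sum_t\eta_t z_t)^2 \le (\sum_t z_t h_t)\sum_t(1 - a_{t-1}/a_t)$; using $1-x \le -\log x$, the second factor telescopes to at most $1 + \log(a_T/a_1)$, which a standard truncation of $a_t$ at level $\Theta(\sqrt{\epsilon T/h_{\max}})$ (bounding the pre-truncation rounds by a single-round worst case) converts into $\log(\epsilon T)$ at the price of the additive $z_{\max}h_{\max}/\epsilon$ term.

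The main obstacle will be the monotone competitive-ratio step: while the epoch decomposition produces matching upper bounds on $F^\pi$ and lower bounds on $F^*$ of the same order $\sqrt{h_{t_j}\sum_{E_j}z_t}$, pinning down the constant $4$ rather than some larger numerical constant requires careful bookkeeping of the factor-$2$ losses that appear in the recursion inequality $a_t^2 \le 2\sum z_s/h_s$, in the halving of $h$ across epochs, and in the AM-GM steps for both the $F^\pi$ upper bound and the $F^*$ lower bound. The key subtlety is that neither the per-term AM-GM bound nor the per-prefix $\sup_\tau$ bound on $F^*$ is universally tight, so one must argue that the epoch sum is always dominated by one of them with constant at most $2$.
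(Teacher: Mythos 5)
You correctly identify the core identity $F^\pi = 2\sum_t \eta_t z_t$ under SPM, the recursion bound $\beta_t^2 \ge \sum_{s\le t} z_s/h_s$, and the $\sqrt{\xi}$-homogeneity reduction to the monotone case ($F^\pi(h) \le \sqrt\xi F^\pi(\tilde h)$, $F^*(h) \ge F^*(\tilde h)$). This much agrees with the paper, which packages the first two observations into its function $G(z,h) = \sum_t (\sum_{s\le t} z_s/h_s)^{-1/2} z_t$ and the bound $F^\pi \le 2G$.

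The gap you yourself flag as the main obstacle is genuine. For the monotone-case competitive ratio you propose two complementary lower bounds on $F^*$ (per-term AM--GM with $c_t = h_t - h_{t+1}$, and a per-prefix $\sup_\tau$ bound) together with a case analysis, and you note you do not know how to make one of them always dominate with constant~$2$. The paper avoids this entirely by using a \emph{single} lower bound at the granularity of the dyadic epochs: it groups rounds into $\mathcal{T}_j = \{t : \theta_{j-1} \ge h_t > \theta_j\}$ with $\theta_j = h_{\max}2^{-j}$, Abel-sums the penalty \emph{across epochs} (not across rounds or over prefixes), and then applies AM--GM once per epoch, giving $\opt \ge 2\sum_j \sqrt{(\theta_j - \theta_{j+1}) \sum_{\mathcal{T}_j} z_t}$. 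Since the upper bound $G \le 2\sum_j\sqrt{\theta_{j-1}\sum_{\mathcal{T}_j}z_t}$ is organized over the \emph{same} epochs, and $\theta_j - \theta_{j+1} = \theta_{j-1}/4$, the two match epoch by epoch and no case analysis is needed. Your two extremes (per-term and per-prefix) sit on either side of this epoch-level bound but neither one alone suffices; the resolution is the intermediate-granularity Abel summation.

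There is also a quieter gap in your argument for the $\sqrt{\sum z_t h_t \log(\epsilon T) + z_{\max}h_{\max}/\epsilon}$ branch. Your Cauchy--Schwarz gives $\bigl(\sum_t \eta_t z_t\bigr)^2 \le \bigl(\sum_t z_t h_t\bigr)\bigl(1 + \log(\beta_T/\beta_1)\bigr)$, and you propose a truncation of $\beta_t$ at level $\Theta(\sqrt{\epsilon T/h_{\max}})$ to turn the log into $\log(\epsilon T)$. But $\beta_T^2 \le 2\sum_s z_s/h_s \le 2T z_{\max}/h_{\min}$, so after truncating the \emph{early} rounds the remaining log factor is $\log(\beta_T/\beta^*) \lesssim \log\bigl(\epsilon T \cdot h_{\max}/h_{\min}\bigr)$, which is not $O(\log(\epsilon T))$ when $h_{\max}/h_{\min}$ is large. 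The truncation has to be performed on the $h_t$-scale, not the $\beta_t$-scale. The paper does exactly this: it splits rounds into dyadic $h$-levels $\mathcal{T}_1,\dots,\mathcal{T}_{J+1}$, controls the first $J$ levels via Cauchy--Schwarz over the $J$ buckets (giving the $\sqrt{J} = \sqrt{\log(\epsilon T)}$ factor), and dumps all rounds with $h_t \le 2^{-J}h_{\max}$ into a tail term $\sqrt{2^{-J}T h_{\max}z_{\max}}$. This is what produces a log factor that depends only on $J = \Theta(\log(\epsilon T))$ and not on the range of the $h_t$'s. Your per-term factorization $\eta_t z_t = \sqrt{z_t h_t (1-\beta_{t-1}/\beta_t)}$ is a nice alternative decomposition and in fact recovers the $h_{\max}$ branch cleanly, but to get the $\log(\epsilon T)$ branch you would still need to bucket by $h_t$ rather than by $\beta_t$.
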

The upper bound of \eqref{eq:Fzh} holds for any sequences of $z_{1:T}$ and $h_{1:T}$
without any requirement on the monotonicity.
Upper bounds in this form is useful in developing and analyzing BOBW bandit algorithms,
as can be seen in Section~\ref{sec:introBOBW} and Section~\ref{sec:BOBW}.

Theorems~\ref{thm:CRLB} and \ref{thm:FUB} together imply that
the tight competitive ratio under the condition on the approximate monotonicity of $h_{1:T}$
is of $\Theta(\sqrt{\xi})$,
and that such a tight competitive ratio is achieved by the policy given by \eqref{eq:stpn}.

We note that in the implementation of policy by \eqref{eq:stpn},
we need to know $h_t$ and $z_t$ at the time of determining $\eta_t$.
Such a knowledge is not always available in practice
as $h_z$ and $z_t$ may depend on $\eta_t$.
To deal with such situations, 
we also develop learning-rate policies that do not require values of $h_t$ and $z_t$
when determining $\eta_t$.
Bounds on $F^{\pi}$ achieved by such policies are summarized in Table~\ref{tab:CR}.
The ``Input'' row in this table represents the knowledge required in determining $\eta_t$.
For example,
if the input is $z_{1:t-1}, h_{1:t-1}, \hat{h}_t$,
the policy can be expressed as $\pi = \{ \pi_t : (z_{1:t-1}, h_{1:t-1}, \hat{h}_t) \mapsto \eta_t \}_{t\in \mathbb{N}}$.
The value $\hat{h}_t$ in this table is an arbitrary upper bound on $h_t$ that is available when determining $\eta_t$.
A typical example of $\hat{h}_t$ is to set $\hat{h}_t = h_{t-1}$,
which is justified when $h_t = O(h_{t-1})$ holds
and this condition can be ensured,
e.g.,
via Lemmas~\ref{lem:boundqr1} and \ref{lem:boundqr2} in this paper and via lemmas in \citet[Appendix C.3]{jin2023improved}.
Another example of $\hat{h}_t$ is to define
$\hat{h}_t = \xi \tilde{h}_{t-1} := \xi \min_{ s \in [t-1] } h_{s} $,
which is an upper bound of $h_t$ if $h_{1:T} \in H_{\xi}^T$.
Bounds shown in Table~\ref{tab:CR} are achieved by variants of the policy given by \eqref{eq:stpn},
which are provided in Section~\ref{sec:UB}.


\begin{table}[t]
	\centering
	\caption{Bounds on $z_t$ and regret for FTRL with $\alpha$-Tsallis entropy and SPM learning rates.
	Based on the values of $B(\alpha)$ in the upper table,
	we establish the BOBW regret bounds in the lower table.
	}
	\label{tab:SPAS}
	\begin{tabular}{lccccc}
		\toprule 
		Setting & Parameters & Bound on $z_t$ & $B(\alpha)$ & $\min_{\alpha} B(\alpha)$ \\
		\midrule 
		Multi-armed bandit 
		& $K$: \# arms
		& $\frac{1}{1-\alpha} \sum_{i\neq i^*} q_{ti}^{1-\alpha}$ 
		& $\frac{K-1}{\alpha (1-\alpha)}$  
		& $K-1$ \\
		\addlinespace
		Graph bandit 
		& $K$: \# arms,
		& $\frac{
		\zeta^{\alpha} (1-q_{ti^*})^{1-\alpha}
		}{1-\alpha} $
		& ${\frac{\zeta(K/\zeta)^{1-\alpha}}{\alpha (1-\alpha)} 
		}$  
		& 
		$
		\zeta \log_+ \left( \frac{K}{\zeta} \right)
		$
		\\
		&
		$\zeta$: independence number
		&
		&
		&
		\\
		\addlinespace
		Linear bandit
		& $K$: \# arms, 
		& $\frac{ d (1-q_{ti^*})^{1-\alpha}}{1-\alpha} $ 
		& ${\frac{ d K^{1-\alpha}}{\alpha (1-\alpha)} }$  
		& $d \log K$ \\
		&
		$d$: dimensionality
		&
		&
		&
		\\
		\addlinespace
		Contextual bandit
		& $M$: \# arms, $K$: \# experts
		& $\frac{K (1-q_{ti^*})^{1-\alpha}}{1-\alpha} $ 
		& ${\frac{ M K^{1-\alpha}}{\alpha (1-\alpha)} }$  
		& $M\log K$ \\
		\bottomrule
	\end{tabular}

	\begin{tabular}{lc}
		\addlinespace
		\toprule 
		Environment & Regret upper bound \\
		\midrule 
		Adversarial
		& 
		$
		O\left( 
			\sqrt{ B(\alpha) T } 
		\right)
		$
		\\
		\addlinespace
		Stochastic
		&
		$
		O \left(
			\frac{B(\alpha)}{\Delta_{\min}}\log_+ \left(
				\frac{\Delta_{\min}^2 T}{B(\alpha)}
			\right)
		\right)
		$
		\\
		\addlinespace
		Corrupted Stochastic
		&
		$
		O \left(
			\frac{B(\alpha)}{\Delta_{\min}}\log_+ \left(
				\frac{\Delta_{\min}^2 T}{B(\alpha)}
			\right)
			+
			\sqrt{
			\frac{C B(\alpha)}{\Delta_{\min}}\log_+ \left(
				\frac{\Delta_{\min}T}{C}
			\right)
			}
		\right)
		$
		\\
		\bottomrule
	\end{tabular}
\end{table}

\subsection{Application: best-of-both-worlds regret bounds}
\label{sec:introBOBW}
Bounds on $F$ dependent on $(z_{1:T}, h_{1:T} )$ such as \eqref{eq:Fzh}
are useful in developing BOBW bandit algorithms.
Examples dealt with in this paper are summarized in Table~\ref{tab:SPAS},
where we use the notation of $\log_+(x) = \max\{ 1, \log (x) \}$.
The regret bounds presented in Table~\ref{tab:SPAS} are achieved through an algorithmic framework detailed in Algorithm~\ref{alg:FTRL}.
%
Notably, Algorithm~\ref{alg:FTRL} in Section~\ref{sec:BOBW} adopts a methodology similar to those found in prior studies, such as \citet{auer2002nonstochastic,eldowa2023minimax,cesa2012combinatorial,zimmert2021tsallis}, with the distinct exceptions of its learning rate and regularization definitions.
Specifically, the employed regularization function utilizes a hybrid regularizer based on Tsallis entropy,
a concept previously explored in \citet{zimmert2019beating,tsuchiya2023best,masoudian2021improved,jin2023improved} and thus,
is not a novel contribution of this work.
The seminal contribution of this paper lies in the innovative update rules for the learning rate, demonstrating their effectiveness through
BOBW results.
These findings underscore the proposed SPM learning rates capability to significantly enhance performance.

As demonstrated in Table~\ref{tab:SPAS},
the SPM learning rates introduced in this paper achieve BOBW regret bounds with tight dependencies on $T$,
for any value of $\alpha \in (0, 1)$ in the $\alpha$-Tsallis entropy.
Specifically,
we attain an $O(\log T)$ bound in stochastic environments and an $O(\sqrt{T})$ bound in adversarial environments.
When designing FTRL-based BOBW algorithms,
various regularizers have been investigated,
including $\alpha$-Tsallis entropy~\citep{zimmert2021tsallis,jin2023improved},
log barrier~\citep{wei2018more,ito2022adversarially},
and Shannon entropy~\citep{ito2022nearly}.
However,
achieving bounds in a tight order for both stochastic and adversarial scenarios has been confined only when we use the $1/2$-Tsallis entropy regularizer (for instance, see \citealp{jin2023improved}).
This research marks the first instance of demonstrating optimality in terms of $T$ for $\alpha \neq 1/2$,
thereby presenting a method that allows the $\alpha$ parameter to be adjusted.
This adaptability ensures the achievement of optimal bounds relative to problem-specific parameters,
such as the independence number in graph bandits or the number of experts in contextual bandits.

The bounds presented in Table~\ref{tab:SPAS} of this study offer a comparative analysis with existing results as follows:
For Multi-Armed Bandits (MAB),
the primary benchmark is the work of \citet{jin2023improved}.
When $\alpha \neq 1/2$,
their bounds are $O(\sqrt{T \log T})$ in the adversarial setting and 
$O(\log T)$
in the stochastic setting.
Our study improves upon these by achieving $O(\sqrt{T})$ and $O(\log T)$, respectively,
thus presenting a tight dependency on $T$.
However, the bounds by \citet{jin2023improved} have advantages in considering the suboptimality gap of individual arms and allowing for multiple optimal arms.
In the case of $\alpha = 1/2$,
our results essentially replicate the bounds of Tsallis-INF \citep{zimmert2021tsallis,masoudian2021improved},
ignoring constant factors.
In graph bandits,
compared to the bounds by \citet{dann2023blackbox},
our results show an improved dependency on $\log K$,
achieving the same bounds as the algorithm by \citet{eldowa2023minimax} in the adversarial setting,
which are tight within a constant factor.
This can be seen as an extension of the adversarial-only results by \citet{eldowa2023minimax} to the BOBW results.
For contextual and linear bandits,
our bounds are nearly identical to those reported by \citet{dann2023blackbox},
but notably better when considering corrupted settings.
Our method achieves the refined bound of \citet{masoudian2021improved}, where 
$\log T $
is replaced by 
$\log (\Delta T / C)$,
indicating a superior performance of our bounds under certain conditions suggested by \citet{masoudian2021improved}.

The proposed approach,
similarly to other FTRL-based algorithms,
achieves bounds of $o(\sqrt{T})$-regret in stochastic regimes with adversarial corruption,
and more generally,
in adversarial regimes with self-bounding constraints~\citep{zimmert2021tsallis}.
The specific form of these bounds is presented in Table~\ref{tab:SPAS},
where $C \ge 0$ represents the corruption level,
indicating the magnitude of adversarial corruption.
Compared to the $O\left(\frac{1}{\Delta_{\min}} \log T + \sqrt{ \frac{C}{\Delta_{\min}} \log T } \right)$-bounds commonly found in existing studies~\citep{dann2023blackbox,zimmert2021tsallis},
our work refines these to a form of $
O\left(\frac{1}{\Delta_{\min}} \log_+ \left( \Delta_{\min}^2 T \right) + \sqrt{ \frac{C}{\Delta_{\min}} \log_+ \frac{ \Delta_{\min} T}{C} } \right)
$.
Similar bounds for the multi-armed bandit problem have been demonstrated by \citet{masoudian2021improved},
and for an understanding of the significance of these refined bounds,
we refer to this paper.
This study is the first to achieve such refined bounds for $\alpha$-Tsallis entropy with $\alpha \neq 1/2$ and to extend their applicability beyond multi-armed bandit problems.


\section{Problem Setup}
\label{sec:setup}
We consider the problem of updating the learning rate $\eta_t$ 
so that the
RHS
of \eqref{eq:boundRT} is minimized.
To this end,
we define
$F( \beta_{1:T} ; z_{1:T}, h_{1:T} )$ by
\begin{align}
  F( \beta_{1:T} ; z_{1:T}, h_{1:T} ) :=  \sum_{t=1}^T \left(   \frac{z_t}{\beta_t} + (\beta_{t}-\beta_{t-1}) h_t \right) ,
\end{align}
for $\beta_{1:T} = (\beta_t)_{t=1}^T \in \re_{>0}^T$,
$z_{1:T} = (z_t)_{t=1}^T \in \re_{\ge 0}^T$,
and
$h_{1:T} = (h_t)_{t=1}^T \in \re_{> 0}^T$,
where we let $\beta_0=0$ for notational simplicity.
The value of $F$ is equal to the main components of the
RHS
of \eqref{eq:boundRT},
under the variable transformation of $\beta_t = 1/\eta_t$.
We address a sequential decision-making problem where the objective is to choose $\beta_t$ based on the information
up to the $t$-th round,
given by 
$(z_{1:t}, h_{1:t})$,
or up to the $(t-1)$-th round,
given by 
$(z_{1:t-1}, h_{1:t-1})$,
with the goal of minimizing the value of $F$.

For any policy $\pi$ of choosing $\beta_t$,
let 
$F^{\pi}( z_{1:T}, h_{1:T} )$
be the value of
$F(\beta_{1:T}, z_{1:T}, h_{1:T} )$
for $\beta_{1:T}$ determined by $\pi$.
We measure the performance of policies $\pi$ based on the competitive ratio
given by
\begin{align}
	\label{eq:defCR}
	\CR (\pi; z_{1:T}, h_{1:T}) = \frac{F^\pi ( z_{1:T}, h_{1:T} )}{ \opt (z_{1:T}, h_{1:T}) },
\end{align}
where $\opt$ represents the minimum value of $F$ achieved by the offline optimization procedure
depending on the entire series of $z_{1:t}$ and $h_{1:t}$ in hindsight,
i.e.,
\begin{align}
	\opt (z_{1:T}, h_{1:T})
	=
	\inf \{
		F( \beta_{1:T} ; z_{1:T}, h_{1:T} ) \mid
		0 \le \beta_1 \le \beta_2 \le \cdots \le \beta_{T} 
	\}.
\end{align}%
\begin{remark}%
The constraint of $\beta_t \le \beta_{t+1}$ is equivalent to 
  the constraint that the learning rate $\eta_t$ is monotone non-increasing,
  i.e.,
  $\eta_t \ge \eta_{t+1}$.
  Although this constraint is not absolutely necessary in the algorithm design,
  it is often needed when obtaining regret upper bounds of the form of \eqref{eq:boundRT}.
\end{remark}
In interpreting the competitive ratio as defined in \eqref{eq:defCR} of this paper,
it is essential to be aware of its practical implications and limitations.
A smaller competitive ratio implies that,
upon fixing any sequences of $z_{1:T}$ and $h_{1:T}$,
the performance closely approximates that for the optimal sequence $\beta_{1:T}$ of learning rates.
However,
in the context of actual applications to FTRL,
the scenario is more complex because the values of $z_{t}$ and $h_{t}$ are influenced by the learning rate $\beta_{1:t}$ itself.
This leads to a critical insight:
Our competitive analysis does not incorporate how changes in the learning rate might affect 
$z_{1:T}$ and $h_{1:T}$ directly.
In other words, 
the ``optimality'' of the learning rate update rules, 
in the sense of the competitive ratio,
merely signifies optimality from the perspective of dependency on $z_{1:T}$ and $h_{1:T}$,
without considering the effects that learning rates have on $z_{1:T}$ and $h_{1:T}$.
Despite this limitation,
bounds dependent on $z_{1:T}$ and $h_{1:T}$ provide various benefits,
such as data-dependent bounds \citep{cesa2007improved,erven2011adaptive,de2014follow,orabona2015scale} and BOBW bounds \citep{zimmert2021tsallis} that are also discussed in Section~\ref{sec:BOBW},
and are thus of practical utility.

This paper shows that the optimal competitive ratio for some reasonable classes of policies can be characterized by
\textit{approximate monotonicity} of $h_{1:T}$:
\begin{definition}
	\label{def:approxmonotone}
	Let $\xi \ge 1$.
	We call a sequence $h_{1:T}$ is \textit{$\xi$-approximately non-increasing}
	if
	$ 
	\xi h_{t'} 
	\ge
	h_{t}
	$
	holds
	for any $t$ and $t'$ such that $t' < t$.
\end{definition}
Note that $1$-approximately non-increasing sequences are monotone non-increasing.
For any $\xi \ge 1$,
let $H_{\xi}^T$ denote the set of 
$\xi$-approximately non-increasing sequences,
i.e.,
\begin{align}
	\label{eq:defHxiT}
	H_{\xi}^T
	=
	\left\{ 
		h_{1:T} \in \re_{> 0}^{T} \mid
		t' < t \Longrightarrow \xi h_{t'} \ge h_t
	\right\}.
\end{align}
In our analysis,
we use the following property of $\xi$-approximately non-increasing sequences:
\begin{lemma}
	\label{lem:apxmonotonicity}
	Suppose $h_{1:T} \in H_{\xi}^T$.
	Then,
	$\tilde{h}_{1:T} \in H_{1}^T$ defined by $\tilde{h}_t = \min_{s \in \{ 1, 2, \ldots , t \}} h_s$ satisfies
	$\tilde{h}_t \le h_t \le \xi \tilde{h}_t$ for all $t$.
\end{lemma}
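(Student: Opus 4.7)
The plan is to verify the three claims of the lemma in turn, each of which reduces to a short and elementary argument directly from the definition of $\tilde{h}_t$ and the hypothesis $h_{1:T}\in H_\xi^T$.

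First, I would observe that $\tilde{h}_t \le h_t$ is immediate: by construction $\tilde{h}_t$ is the minimum of a finite set that includes $h_t$ itself, so it cannot exceed $h_t$. Next, to show $\tilde{h}_{1:T}\in H_1^T$, I would use the recursion $\tilde{h}_t = \min\{\tilde{h}_{t-1}, h_t\}$, which gives $\tilde{h}_t\le \tilde{h}_{t-1}$ for every $t\ge 2$. Iterating, $\tilde{h}_{t'}\ge \tilde{h}_t$ whenever $t'<t$, which is precisely the condition for membership in $H_1^T$ (the $\xi=1$ case of Definition~\ref{def:approxmonotone}).

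The remaining and slightly less automatic inequality is $h_t \le \xi \tilde{h}_t$. Here I would let $s^\star \in \argmin_{s\in\{1,\ldots,t\}} h_s$, so that $\tilde{h}_t = h_{s^\star}$. There are two cases: if $s^\star = t$, then $h_t = \tilde{h}_t \le \xi \tilde{h}_t$ since $\xi \ge 1$. If $s^\star < t$, then the $\xi$-approximate non-increasing property of $h_{1:T}$ applied with $t'=s^\star$ gives $\xi h_{s^\star}\ge h_t$, i.e.\ $\xi \tilde{h}_t \ge h_t$. Combining both cases completes the argument.

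I don't anticipate any real obstacle in this proof: the only substantive use of the hypothesis is in the last step, and it is exactly the definition of $\xi$-approximate monotonicity applied to the minimizing index. The other two statements are structural facts about a running minimum, and no delicate estimation is needed.
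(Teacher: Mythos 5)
Your proof is correct and follows essentially the same route as the paper's: the first two claims are treated as structural facts about the running minimum, and the key inequality $h_t \le \xi\tilde{h}_t$ is obtained by combining the definition of $\xi$-approximate monotonicity with $\xi\ge 1$. The paper condenses your two-case analysis into the single chain $\xi\tilde{h}_t = \min\{\xi h_t,\min_{s<t}\xi h_s\}\ge\min\{\xi h_t,h_t\}=h_t$, but this is the same argument presented without branching.
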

This lemma implies that the parameter $\xi \ge 1$ represents the ratio of how well the sequence $h_{1:T}$ can be approximated by 
a monotone non-increasing sequence.
All omitted proofs are given in the appendix.
Sequence $\tilde{h}_{1:T} \in H_{1}^T$ given in Lemma~\ref{lem:apxmonotonicity}
will be utilized in Section~\ref{sec:UB}.
For any nonnegative integer $n \in \mathbb{Z}_{\ge 0}$,
we denote $[n] = \{ 1, 2, \ldots, n \}$.
We also use the natation of
$z_{\max} = \sup_{t} z_t$
and
$h_{\max} = \sup_{t} h_t$.


\section{Stability-Penalty Matching}
\label{sec:UB}
Assume that at the time of choosing $\beta_t$,
we are given an access to $\hat{h}_{t}$,
an upper bound or an approximated value of $h_t$.
Consider the following two update rules:
\begin{align}
	\label{eq:stability-known}
	\mbox{Rule 1}
	\quad
	&
	\pi = \left\{ 
		\pi_t:
		(z_{1:t}, h_{1:t-1}, \hat{h}_t)
		\mapsto
		\beta_t
	\right\}
	&
	&
	\beta_0 = 0,
	\quad
	\beta_{t} = \beta_{t-1} + \frac{z_t}{\beta_t \hat{h}_t}
	\quad
	(t \ge 1),
	\\
	\label{eq:stability-agnostic}
	\mbox{Rule 2}
	\quad
	&
	\pi = \left\{ 
		\pi_t:
		(z_{1:t-1}, h_{1:t-1}, \hat{h}_t)
		\mapsto
		\beta_t
	\right\}
	&
	&
	\beta_1 > 0,
	\quad
	\beta_{t} = \beta_{t-1} + \frac{z_{t-1}}{\beta_{t-1} \hat{h}_t}
	\quad
	(t \ge 2).
\end{align}
We set learning rates by $\eta_t = 1/\beta_t$ with $\beta_t$ given by these rules.
We refer to these update rule as \textit{stability-penalty-matching} (SPM) learning rate,
as they are designed so that the $t$-th stability term 
$
 z_t / \beta_t$ 
(or the $(t-1)$-th stability term 
$
 z_{t-1}/\beta_{t-1}$)
matches the $t$-th penalty term 
$
(\beta_t - \beta_{t-1}) h_t
$.

The update rule of \eqref{eq:stability-known} can be viewed as a quadratic equation in $\beta_t$,
whose positive solution is
$
\beta_t = 
\frac{\beta_{t-1}}{2}
\left(
	1 + \sqrt{1 + {z_t}/({\beta_{t-1}^2 \hat{h}_t})}
\right)
$.
Specifically in our analysis,
we consider two typical settings of $\hat{h}_t$:
One is to set $\hat{h}_t = h_t$,
and
the other is to set
\begin{align}
	\label{eq:defhath}
	\hat{h}_1 =
	\xi h_1,
	\quad
	\hat{h}_t
	=
	\xi \tilde{h}_{t-1}
	=
	\xi
	\min_{s \in [t-1]} 
	h_s
	\quad
	(t \ge 2),
\end{align}
where the latter
ensures
$h_{t} \le \hat{h}_{t+1} \le \hat{h}_{t}$
and $  \hat{h}_{t+1} \le \xi h_{t}$,
which are used in our analysis.
These inequalities follow from Lemma~\ref{lem:apxmonotonicity}.

\begin{remark}
	The SPM learning rate can replicate several existing learning rate update rules under certain parameter settings.
	For example,
	if we set $h_t = \bar{h}$ for all $t$,
	\eqref{eq:stability-known} and \eqref{eq:stability-agnostic} lead to
	$\beta_t = \Theta\left( \sqrt{ \bar{h} \sum_{s=1}^t z_s } \right)$
	and
	$\beta_t = \Theta\left(\beta_1 + \sqrt{ \bar{h} \sum_{s=1}^{t-1} z_s } \right)$,
	respectively,
	which correspond to AdaFTRL-type learning rates~\citep{cesa2007improved,de2014follow,erven2011adaptive,orabona2015scale,ito2021parameter}.
	This approach is known to achieve regret bounds of $O\left( \sqrt{\bar{h} \sum_{t=1}^T z_t }\right)$.
	By considering another example,
	when $z_t = \Theta(\hat{h}_t)$,
	\eqref{eq:stability-known} leads to $\beta_t = \Theta(\sqrt{t})$.
	As a corresponding case,
	in Tsallis-INF using the $1/2$-Tsallis entropy \citep{zimmert2021tsallis},
	we can see that $z_t \approx h_t$,
	and it is known to be advantageous to use a learning rate of $\beta_t = \Theta(\sqrt{t})$.
	Further,
	when we set $z_t = p_t^{1-\alpha}$ and $\hat{h}_t = p_{t-1}^{\alpha}$ for some $p_t \in (0,1)$ and $\alpha \in (0, 1)$,
	\eqref{eq:stability-agnostic} leads to
	$\beta_t = \Theta \left( \beta_1 + \sqrt{ \sum_{s=1}^{t-1} p_s^{1-2\alpha} } \right)$,
	which replicates the learning rate designed by \citet{jin2023improved} for FTRL-based MAB algorithms with $(1-\alpha)$-Tsallis entropy regularizers.
\end{remark}

We show that SPM update rules achieve the following:
\begin{theorem}
	\label{thm:CRUB}
  Suppose $h_t \le \hat{h}_{t}$ holds for all $t$.
  If $\beta_t$ is given by \eqref{eq:stability-known},
  it holds that
  \begin{align}
	\label{eq:Fbound1}
\!\!\!\!	F(\beta_{1:T}; z_{1:T}, h_{1:T})
	=
	O\!
	\left(\!
		\min \left\{\!
			\inf_{\epsilon \ge \frac{1}{T}}\! \left\{\!
			\sqrt{
			\sum_{t=1}^T z_t \hat{h}_t \log (\epsilon T)
			+
			\frac{z_{\max}\hat{h}_{\max}}{\epsilon}
			},
			\sqrt{
				\hat{h}_{\max}
				\sum_{t=1}^T z_t 
			}
			\right\}\!
		\right\}\!
	\right)\!.\!
  \end{align}
  If $\beta_t$ is given by \eqref{eq:stability-agnostic},
  it holds that
  \begin{align}
\lefteqn{
\!\!\!\!\!     
	F(\beta_{1:T}; z_{1:T}, h_{1:T})
}\nonumber\\
&
\!\!\!\!\!
	=
	O\!
	\left(\!
		\min\! \left\{ \!
			\inf_{\epsilon \ge \frac{1}{T}}\! \left\{\!
			\sqrt{
			\sum_{t=1}^T z_t \hat{h}_{t+1} \log (\epsilon T)
			+
			\frac{z_{\max}\hat{h}_{\max}}{\epsilon}
			},
			\sqrt{
				\hat{h}_{\max}
				\sum_{t=1}^T z_t 
			}
			\right\}\!
		\right\}
      +
      \frac{z_{\max}}{\beta_1}
      +
      \beta_{1} \hat{h}_{1}
	\!\right)\!.
	\label{eq:Fbound2}
  \end{align}
  We also have the following bounds dependent on $\opt$:
  \begin{itemize}
	\item If $\pi = \{ \pi_t : ( z_{1:t} , h_{1:t} ) \mapsto \beta_t \}$ is given by \eqref{eq:stability-known} with $\hat{h}_t = h_t$,
	it holds for any $T$, $\xi \ge 1$, $z_{1:T} \in \re_{\ge 0}^T$, and $h_{1:T}\in H_{\xi}^T$
	that
	$F^{\pi} ( z_{1:T}, h_{1:T} ) \le  4 \sqrt{ \xi } \opt( z_{1:T}, h_{1:T} )$.
	\item If $\pi = \{ \pi_t : ( z_{1:t-1} , h_{1:t} ) \mapsto \beta_t \}$ is given by \eqref{eq:stability-agnostic} with $\hat{h}_t = h_t$,
	it holds for any $T$, $\xi \ge 1$, $z_{1:T} \in \re_{\ge 0}^T$, and $h_{1:T}\in H_{\xi}^T$
	that 
	$F^{\pi} ( z_{1:T}, h_{1:T} ) \le  4 \sqrt{ \xi } \opt( z_{1:T}, h_{1:T} ) + O\left( z_{\max}{\beta_1} + \beta_1 \hat{h}_1 \right)  $.
	\item For any fixed $\xi \ge 1$,
	if $\pi = \{ \pi_t : ( z_{1:t} , h_{1:t-1} ) \mapsto \beta_t \}$ is given by \eqref{eq:stability-known} with $\hat{h}_t$ defined as \eqref{eq:defhath},
	it holds for any $T$, $z_{1:T} \in \re_{\ge 0}^T$, and $h_{1:T}\in H_{\xi}^T$
	that $F^{\pi} ( z_{1:T}, h_{1:T} ) \le 4 \sqrt{ \xi } \opt( z_{1:T}, h_{1:T} ) + O\left(\sqrt{\xi h_{\max} z_{\max} }\right)$.
	\item For any fixed $\xi \ge 1$,
	if $\pi = \{ \pi_t : ( z_{1:t-1} , h_{1:t-1} ) \mapsto \beta_t \}$ is given by \eqref{eq:stability-agnostic} with $\hat{h}_t$ defined as \eqref{eq:defhath},
	it holds for any $T$, $z_{1:T} \in \re_{\ge 0}^T$, and $h_{1:T}\in H_{\xi}^T$
	that $F^{\pi} ( z_{1:T}, h_{1:T} ) \le 4 \sqrt{ \xi } \opt( z_{1:T}, h_{1:T} ) + O\left( z_{\max}{\beta_1} + \beta_1 \hat{h}_1 \right)  $.
  \end{itemize}
\end{theorem}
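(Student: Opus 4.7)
The plan is to exploit the matching identity built into the update rules. For Rule~1, multiplying $\beta_t = \beta_{t-1} + z_t/(\beta_t \hat{h}_t)$ by $\beta_t \hat{h}_t$ yields the two key identities $\hat{h}_t(\beta_t - \beta_{t-1}) = z_t/\beta_t$ and $\beta_t(\beta_t - \beta_{t-1}) = z_t/\hat{h}_t$. Since $\beta_t \ge \beta_{t-1} \ge 0$, the latter gives the two-sided estimate $z_t/\hat{h}_t \le \beta_t^2 - \beta_{t-1}^2 \le 2 z_t/\hat{h}_t$, so telescoping produces $\sum_{s \le t} z_s/\hat{h}_s \le \beta_t^2 \le 2 \sum_{s \le t} z_s/\hat{h}_s$. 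Using the hypothesis $\hat{h}_t \ge h_t$, the objective collapses to
\[
  F(\beta_{1:T}; z_{1:T}, h_{1:T}) \;\le\; \sum_t \bigl[ z_t/\beta_t + (\beta_t - \beta_{t-1}) \hat{h}_t \bigr] \;=\; 2 \sum_t z_t/\beta_t,
\]
so everything reduces to bounding $\sum_t z_t/\beta_t$ and comparing it with $\opt$.

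To prove \eqref{eq:Fbound1}, I would pursue two independent upper bounds on $\sum_t z_t/\beta_t$. Writing $S_t = \sum_{s \le t} z_s/\hat{h}_s$, the lower bound $\beta_t \ge \sqrt{S_t}$ gives $\sum_t z_t/\beta_t \le \sum_t z_t/\sqrt{S_t}$. For the $\sqrt{\hat{h}_{\max} \sum_t z_t}$ branch, use $S_t \ge (\sum_{s \le t} z_s)/\hat{h}_{\max}$ together with the classical telescoping inequality $\sum_t a_t/\sqrt{\sum_{s \le t} a_s} \le 2 \sqrt{\sum_t a_t}$. For the $\sqrt{\sum_t z_t \hat{h}_t \log(\epsilon T)}$ branch, apply Cauchy--Schwarz:
\[
  \sum_t \frac{z_t}{\sqrt{S_t}} \;=\; \sum_t \sqrt{z_t \hat{h}_t}\cdot \sqrt{\frac{S_t - S_{t-1}}{S_t}} \;\le\; \sqrt{\sum_t z_t \hat{h}_t}\cdot \sqrt{\sum_t \frac{S_t - S_{t-1}}{S_t}}.
\]
A standard logarithmic bound on $\sum_t (S_t - S_{t-1})/S_t$, obtained after discarding rounds with $S_t < \epsilon S_T$ whose contribution is absorbed into the $z_{\max} \hat{h}_{\max}/\epsilon$ term, delivers the $O(\log(\epsilon T))$ factor.

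For the $\opt$-dependent bullets, I would first treat $\xi = 1$. Abel summation with the convention $h_{T+1} = 0$ rewrites $F(\beta;z,h) = \sum_t [ z_t/\beta_t + \beta_t (h_t - h_{t+1}) ]$; termwise AM--GM yields the lower bound $\opt(z,h) \ge 2 \sum_t \sqrt{z_t (h_t - h_{t+1})}$. On the algorithm side, the same Abel rewrite combined with $F^\pi \le 2 \sum_t \hat{h}_t(\beta_t - \beta_{t-1})$ reduces the task to showing $\sum_t \beta_t (h_t - h_{t+1}) \lesssim \sum_t \sqrt{z_t(h_t - h_{t+1})}$. I would obtain this by Cauchy--Schwarz on the pairing $\beta_t \sqrt{h_t - h_{t+1}} \cdot \sqrt{h_t - h_{t+1}}$ together with the monotone-$h$ estimate $\beta_t^2 \le 2 (\sum_{s \le t} z_s)/h_t$ derived above. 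To pass from $h$ to a $\xi$-approximately monotone sequence, I invoke Lemma~\ref{lem:apxmonotonicity}: replacing $h$ by the monotone envelope $\tilde{h}_t = \min_{s \le t} h_s$ changes each of $F^\pi$ and $\opt$ by at most a factor $\xi$, and the $\sqrt{\cdot}$ structure of the AM--GM bound turns this into the claimed $\sqrt{\xi}$ factor.

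The case of Rule~2 is parallel but with a one-step shift: the $t$-th penalty is matched against the $(t-1)$-th stability term, so the running identity reads $\beta_t^2 - \beta_{t-1}^2 \in [z_{t-1}/\hat{h}_t, 2 z_{t-1}/\hat{h}_t]$. The shift leaves two unmatched boundary contributions, namely $z_T/\beta_T$ at the end and $\beta_1 \hat{h}_1$ at the start, producing the additive $O(z_{\max}/\beta_1 + \beta_1 \hat{h}_1)$ error in \eqref{eq:Fbound2} and in the last bullet. For the data-agnostic choice of $\hat{h}_t$ from \eqref{eq:defhath}, I would use $h_t \le \hat{h}_{t+1} \le \xi h_t$ (Lemma~\ref{lem:apxmonotonicity}) to swap $\hat{h}$ for $h$ at an extra cost $O(\sqrt{\xi h_{\max} z_{\max}})$, matching the third bullet. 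The main obstacle I anticipate is the sharp constant in the $\xi = 1$ competitive-ratio step: both $F^\pi$ and $\opt$ are expressed through different matching schedules, and choosing the Cauchy--Schwarz pairing so that the resulting constant is exactly $4$ (and not a worse numerical factor) is the delicate calculation on which the whole chain of bullets hinges.
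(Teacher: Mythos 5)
Your first step is the same as the paper's Lemma~\ref{lem:FG}: the matching identity gives $\beta_t^2 \ge \sum_{s\le t} z_s/\hat h_s$ and $F \le 2\sum_t z_t/\beta_t$, and the branch $\sqrt{\hat h_{\max}\sum_t z_t}$ then follows from the telescoping inequality. The remaining steps, however, have genuine gaps. For the log branch, your discarding criterion $S_t < \epsilon S_T$ does not close: on the discarded rounds the contribution is $\lesssim \hat h_{\max}\sqrt{S_\tau}$, and since $S_T = \sum_s z_s/\hat h_s$ can be arbitrarily larger than $T z_{\max}/\hat h_{\max}$ when some $\hat h_s$ are small, the quantity $\hat h_{\max}^2 S_\tau$ is not controlled by $z_{\max}\hat h_{\max}/\epsilon$; also the kept rounds give $\log(1/\epsilon)$ rather than $\log(\epsilon T)$. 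The paper instead partitions the time index by the \emph{scale of $h_t$} (buckets $\mathcal T_j = \{t : \theta_j < h_t \le \theta_{j-1}\}$ in Lemma~\ref{lem:boundG}), which groups comparable $h$-values and is what makes both the $\log$ factor and the $z_{\max}\hat h_{\max}/\epsilon$ remainder come out cleanly; a time-threshold on $S_t$ does not do this.

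The more serious gap is in the competitive-ratio bullets. The termwise AM--GM bound $\opt \ge 2\sum_t\sqrt{z_t(h_t-h_{t+1})}$ (with $h_{T+1}=0$) is valid but far too weak: for $z_t\equiv1$, $h_t\equiv1$ it equals $2$, while $\opt = 2\sqrt T$. Consequently the intermediate inequality you rely on, $\sum_t\beta_t(h_t-h_{t+1}) \lesssim \sum_t\sqrt{z_t(h_t-h_{t+1})}$, is simply false: in the same constant-$h$ example the left side is $\beta_T = \Theta(\sqrt T)$ and the right side is $\Theta(1)$, so no constant works. The Cauchy--Schwarz pairing $\beta_t\sqrt{h_t-h_{t+1}}\cdot\sqrt{h_t-h_{t+1}}$ produces a factor $\sqrt{\sum_t(h_t-h_{t+1})} = \sqrt{h_1}$ that has no counterpart on the right, so it also does not yield the claimed reduction. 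The paper's fix is again the bucketing: Lemma~\ref{lem:LBopt} lower-bounds $\opt$ by $2\sum_j\sqrt{(\theta_j-\theta_{j+1})\sum_{t\in\mathcal T_j}z_t}$ using the \emph{same} dyadic levels $\theta_j = h_{\max}2^{-j}$ that appear in the upper bound on $G$, so the two sides match termwise (giving $G \le 2H \le 2\opt$ and hence $F^\pi \le 4\opt$ for monotone $h$; the $\sqrt\xi$ then comes in via the monotone envelope $\tilde h_t$, exactly as you describe for that part). You need a lower bound that aggregates $z_t$ over a whole level set of $h_t$, not a per-round AM--GM.
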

\begin{corollary}
	For the class of policy $\left\{\pi = \{ \pi_t : (z_{1:t}, h_{1:t})\mapsto \beta_t \} \right\}$
	and for any $\xi \ge 1$,
	the competitive ratio is bounded as follows:
	\begin{align}
		\inf_{\pi = \{ \pi_t: (z_{1:t}, h_{1:t}) \mapsto \beta_t \}}
		\sup_{T \in \mathbb{N}, z_{1:T} \in \re_{\ge 0}^T, h_{1:T} \in H_{\xi}^T}
			\CR(\pi; z_{1:T}, h_{1:T}  )
		\in
		\left[
			\sqrt{\xi},
			4 \sqrt{\xi}
		\right].
	\end{align}
\end{corollary}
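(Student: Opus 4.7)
The plan is to derive the corollary as a direct consequence of Theorem~\ref{thm:CRLB} and Theorem~\ref{thm:CRUB}, since the parametrization of policies by $\beta_t=1/\eta_t$ gives a one-to-one correspondence, so both theorems apply to the class in the infimum.

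For the upper bound $4\sqrt{\xi}$, I would exhibit a single policy in the class that attains it. Take the specific policy $\pi^\star = \{\pi^\star_t:(z_{1:t},h_{1:t})\mapsto\beta_t\}$ defined by rule \eqref{eq:stability-known} with $\hat h_t=h_t$; this is well-defined because at time $t$ both $z_t$ and $h_t$ are part of the input. The first bullet of Theorem~\ref{thm:CRUB} says that for every $T$, every $z_{1:T}\in\re_{\ge0}^T$, and every $h_{1:T}\in H_\xi^T$ we have $F^{\pi^\star}(z_{1:T},h_{1:T})\le 4\sqrt{\xi}\,F^\star(z_{1:T},h_{1:T})$, i.e.\ $\CR(\pi^\star;z_{1:T},h_{1:T})\le 4\sqrt{\xi}$. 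Taking the supremum over inputs and then the infimum over the policy class then yields the upper bound $4\sqrt{\xi}$.

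For the lower bound $\sqrt{\xi}$, I would fix an arbitrary policy $\pi$ in the class and apply Theorem~\ref{thm:CRLB} for every $T\in\mathbb{N}$: it provides witnesses $z_{1:T}\in\re_{\ge0}^T$ and $h_{1:T}\in H_\xi^T$ such that $\CR(\pi;z_{1:T},h_{1:T})\ge \frac{\sqrt{T-1}}{\sqrt{T}+\xi}\sqrt{\xi}$. Hence
\begin{equation*}
\sup_{T,z_{1:T},h_{1:T}} \CR(\pi;z_{1:T},h_{1:T})\ \ge\ \sup_{T\in\mathbb{N}}\frac{\sqrt{T-1}}{\sqrt{T}+\xi}\sqrt{\xi}\ =\ \lim_{T\to\infty}\frac{\sqrt{T-1}}{\sqrt{T}+\xi}\sqrt{\xi}\ =\ \sqrt{\xi},
\end{equation*}
where the last limit is a routine computation (numerator and denominator both $\sim\sqrt{T}$). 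Taking the infimum over $\pi$ preserves the inequality, giving $\inf_\pi\sup\CR\ge\sqrt{\xi}$.

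There is essentially no new work beyond quoting the two theorems; the only thing to verify is that the policy $\pi^\star$ used for the upper bound genuinely lies in the class $\{\pi_t:(z_{1:t},h_{1:t})\mapsto\beta_t\}$ (which it does, since \eqref{eq:stability-known} with $\hat h_t=h_t$ needs only $z_{1:t}$ and $h_{1:t}$), and that Theorem~\ref{thm:CRLB} applies to every policy in this class (which holds because its adversarial construction is valid against any such $\pi$). The only mildly subtle point is that the lower bound $\sqrt{\xi}$ is attained only in the limit $T\to\infty$ rather than at any finite $T$, so it is justified as a supremum rather than a maximum; this is the closest thing to an obstacle, and it is handled simply by taking the sup over $T$ before the inf over $\pi$.
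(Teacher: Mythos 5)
Your proof is correct and takes the only natural route: the upper bound is a direct quotation of the first bullet of Theorem~\ref{thm:CRUB} applied to Rule~\eqref{eq:stability-known} with $\hat h_t=h_t$ (which, as you note, is indeed in the class $\{\pi_t:(z_{1:t},h_{1:t})\mapsto\beta_t\}$), and the lower bound follows by letting $T\to\infty$ in Theorem~\ref{thm:CRLB} before taking the infimum over $\pi$. This matches the paper's (implicit) derivation; there is no gap, and the observation that $\sqrt{\xi}$ is attained only as a supremum over $T$ is correctly handled.
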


For any $z_{1:T} \in \re_{\ge 0}^T$ and $h_{1:T} \in \re_{>0}^T$,
define $G(z_{1:T}, h_{1:T})$ by
\begin{align}
	\label{eq:defG}
	G( z_{1:T}, h_{1:T} )
	=
	\sum_{t=1}^T
	\left(\sum_{s=1}^t \frac{z_s}{h_{s}} \right)^{-1/2} z_t .
\end{align}
Using this function $G$,
we can provide upper bounds on $F$ as follows:
\begin{lemma}
	\label{lem:FG}
	Suppose $h_t \le \hat{h}_t $ holds for all $t$.
	If $\beta_t$ is given by \eqref{eq:stability-known},
	$F(\beta_{1:T}; z_{1:T}, h_{1:T}) \le 2 G(z_{1:T}, \hat{h}_{1:T})$.
	If $\beta_t$ is given by \eqref{eq:stability-agnostic},
	$F(\beta_{1:T}; z_{1:T}, h_{1:T}) \le 2 G(z_{1:T}, \hat{h}_{2:T+1}) + 7 \frac{z_{\max}}{\beta_1} + \beta_1 h_1$.
\end{lemma}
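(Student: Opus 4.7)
The plan is to prove both parts via the same two-step template: first use the defining SPM equation to cancel the penalty against the stability sum, then use a telescoping lower bound on $\beta_t$ to relate the stability sum to $G$.

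\textbf{Reduction step.} For Rule~\eqref{eq:stability-known}, the update is equivalent to $\beta_t(\beta_t-\beta_{t-1}) = z_t/\hat{h}_t$, hence $(\beta_t-\beta_{t-1})\hat{h}_t = z_t/\beta_t$. Using $h_t \le \hat{h}_t$, the $t$-th penalty is at most the $t$-th stability term, so $F(\beta_{1:T};z_{1:T},h_{1:T}) \le 2\sum_{t=1}^T z_t/\beta_t$. For Rule~\eqref{eq:stability-agnostic}, the analogous identity $(\beta_t-\beta_{t-1})\hat{h}_t = z_{t-1}/\beta_{t-1}$ for $t\ge 2$ combined with $h_t\le\hat{h}_t$ yields $\beta_1 h_1 + \sum_{t\ge 2}(\beta_t-\beta_{t-1})h_t \le \beta_1 h_1 + \sum_{t=1}^{T-1} z_t/\beta_t$, and therefore $F \le \beta_1 h_1 + 2\sum_{t=1}^T z_t/\beta_t$.

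\textbf{Telescope step.} For Rule~\eqref{eq:stability-known}, writing $\beta_t^2-\beta_{t-1}^2 = \beta_{t-1}(\beta_t-\beta_{t-1}) + \beta_t(\beta_t-\beta_{t-1}) \ge z_t/\hat{h}_t$ and summing with $\beta_0 = 0$ produces $\beta_t \ge \sqrt{\sum_{s\le t} z_s/\hat{h}_s}$. Substituting term-wise gives $\sum_t z_t/\beta_t \le G(z_{1:T},\hat{h}_{1:T})$, which proves the first assertion. For Rule~\eqref{eq:stability-agnostic}, applying $\beta_{t-1}\le\beta_t$ in $\beta_t^2-\beta_{t-1}^2 = (\beta_t+\beta_{t-1})(\beta_t-\beta_{t-1})$ gives $\beta_t^2-\beta_{t-1}^2 \ge 2\beta_{t-1}(\beta_t-\beta_{t-1}) = 2 z_{t-1}/\hat{h}_t$, and telescoping yields $\beta_t^2 \ge \beta_1^2 + 2\sum_{s=1}^{t-1} z_s/\hat{h}_{s+1}$.

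\textbf{Main obstacle.} The hard part is converting the Rule~\eqref{eq:stability-agnostic} stability bound into $G(z_{1:T},\hat{h}_{2:T+1}) = \sum_t z_t/\sqrt{\sum_{s\le t} z_s/\hat{h}_{s+1}}$, because the lower bound on $\beta_t$ uses $\sum_{s<t}$ (an index shift) and the $t=1$ term $z_1/\beta_1$ is not produced by the telescope at all. My plan is to split each index $t$ into a ``regular'' case, where $z_t/\hat{h}_{t+1}$ is at most the previous partial sum plus $\beta_1^2$ so that $\beta_t \ge (1/\sqrt{2})\sqrt{\sum_{s\le t} z_s/\hat{h}_{s+1}}$ and the contribution is absorbed into a $\sqrt{2}\,G$ term, and a ``jump'' case, in which we fall back on $\beta_t \ge \beta_1$ to bound $z_t/\beta_t \le z_{\max}/\beta_1$. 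Each jump at least doubles the potential $\beta_1^2 + \sum_{s\le t} z_s/\hat{h}_{s+1}$, so a careful geometric accounting of these events, together with the isolated boundary term $z_1/\beta_1 \le z_{\max}/\beta_1$ and the factor $2$ inherited from the reduction step, collects the additive error into the stated $7\,z_{\max}/\beta_1$, yielding $F \le 2G(z_{1:T},\hat{h}_{2:T+1}) + 7\,z_{\max}/\beta_1 + \beta_1 h_1$.
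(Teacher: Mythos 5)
Your proposal matches the paper's proof essentially step for step: both parts reduce $F$ to $\beta_1 h_1 + 2\sum_t z_t/\beta_t$ via the SPM matching, both derive the telescoping lower bounds $\beta_t^2 \ge \sum_{s\le t} z_s/\hat{h}_s$ and $\beta_t^2 \ge \beta_1^2 + 2\sum_{s<t} z_s/\hat{h}_{s+1}$, and for Rule~\eqref{eq:stability-agnostic} both split the time indices by whether the potential nearly doubles, putting the ``regular'' indices into a $\sqrt{2}\,G$ bound and controlling the ``jump'' indices by a geometric series starting at $z_{\max}/\beta_1$. The paper phrases the split as $\beta'_{t+1}\le\sqrt{2}\,\beta'_t$ versus $\beta'_{t+1}>\sqrt{2}\,\beta'_t$ and runs the geometric accounting directly on the $\beta'_t$ at jump times, which is the same device you describe, and it yields the constant $2(2+\sqrt{2})<7$; you would just need to make the per-jump bound geometric (not the flat $z_{\max}/\beta_1$ for every jump) exactly as your doubling observation already suggests.
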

The value of $G$ can be bounded as follows:
\begin{lemma}
	\label{lem:boundG}
	Let $\theta_0 > \theta_1 > \theta_2 > \cdots > \theta_J > 0$ be an arbitrary positive and monotone decreasing sequence
	such that $\theta_0 \ge h_{\max}$.
	Denote
	$\mathcal{T}_{j} = \{ t \in [T] \mid \theta_{j-1} \ge h_t > \theta_j \}$ for $j \in [J]$
	and $\mathcal{T}_{J+1} = \{ t \in [T] \mid \theta_{J} \ge h_t \}$.
	We then have
	$
		G(z_{1:T}, h_{1:T})
		\le
		2
		\sum_{j=1}^{ J+1}
		\sqrt{
			\theta_{j-1}
			\sum_{t \in \mathcal{T}_j} z_t
		}.
	$
	Consequently,
	by choosing $\theta_j = h_{\max} 2^{-j}$,
	we obtain
	\begin{align}
		G(z_{1:T},h_{1:T})
		\le
		\min\left\{
			\inf_{J \in \mathbb{N}} \left\{
			\sqrt{
				8 
				J
				\sum_{t=1}^T h_t z_t
			}
			+
			2 \sqrt{ 2^{-J} T h_{\max} z_{\max}}
			\right\},
			2 \sqrt{h_{\max} \sum_{t=1}^T z_t}
		\right\}.
	\end{align}
\end{lemma}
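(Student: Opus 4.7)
The plan is to establish the ``sliced'' bound first, then deduce both the general $\theta_j = h_{\max} 2^{-j}$ form and the single-block baseline $2\sqrt{h_{\max}\sum_t z_t}$ as corollaries.

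For the first inequality, I would partition $[T] = \bigcup_{j=1}^{J+1} \mathcal{T}_j$ and rewrite
\[
G(z_{1:T},h_{1:T}) = \sum_{j=1}^{J+1} \sum_{t \in \mathcal{T}_j} \left(\sum_{s=1}^t \frac{z_s}{h_s}\right)^{-1/2} z_t.
\]
The key step is to upper-bound each inner sum. Within a fixed block $\mathcal{T}_j$ and for $t \in \mathcal{T}_j$, I would discard the contributions of indices $s \le t$ with $s \notin \mathcal{T}_j$ (all terms are nonnegative) and then invoke $h_s \le \theta_{j-1}$ for $s \in \mathcal{T}_j$ to get
\[
\left(\sum_{s=1}^t \frac{z_s}{h_s}\right)^{-1/2} \le \left(\sum_{\substack{s \le t \\ s \in \mathcal{T}_j}} \frac{z_s}{h_s}\right)^{-1/2} \le \sqrt{\theta_{j-1}}\left(\sum_{\substack{s \le t \\ s \in \mathcal{T}_j}} z_s\right)^{-1/2}.
\]
Then I would apply the standard telescoping inequality $\sum_{t=1}^n a_t / \sqrt{\sum_{s\le t} a_s} \le 2\sqrt{\sum_{t=1}^n a_t}$ (restricted to indices in $\mathcal{T}_j$, with $a_t = z_t$) to obtain
$\sum_{t \in \mathcal{T}_j} (\sum_{s \le t,\, s\in\mathcal{T}_j} z_s)^{-1/2} z_t \le 2\sqrt{\sum_{t \in \mathcal{T}_j} z_t}$. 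Summing over $j$ yields the claimed first bound.

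For the consequence, I would set $\theta_j = h_{\max} 2^{-j}$ and split the sum $\sum_{j=1}^{J+1}$ into the first $J$ terms and the last term. For $j \in [J]$ and $t \in \mathcal{T}_j$, monotonicity gives $\theta_{j-1} = 2\theta_j < 2 h_t$, so $\theta_{j-1} \sum_{t \in \mathcal{T}_j} z_t \le 2 \sum_{t \in \mathcal{T}_j} h_t z_t$. Cauchy--Schwarz over the $J$ blocks then gives
\[
2\sum_{j=1}^J \sqrt{\theta_{j-1} \sum_{t\in\mathcal{T}_j} z_t} \le 2\sqrt{J}\,\sqrt{2 \sum_{t=1}^T h_t z_t} = \sqrt{8 J \sum_{t=1}^T h_t z_t}.
\]
For the leftover block $\mathcal{T}_{J+1}$ I would just crudely use $|\mathcal{T}_{J+1}| \le T$ and $z_t \le z_{\max}$ together with $\theta_J = h_{\max} 2^{-J}$ to get $2\sqrt{2^{-J} T h_{\max} z_{\max}}$. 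This proves the inner bound in the $\min$ for every $J \in \mathbb{N}$. The outer bound $2\sqrt{h_{\max} \sum_t z_t}$ is just the single-block case $J=0$ (i.e., $\theta_0 = h_{\max}$ and $\mathcal{T}_1 = [T]$), or equivalently can be derived directly by replacing $h_s$ with $h_{\max}$ everywhere in $G$ and applying the same telescoping inequality.

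The main obstacle, modulo routine calculation, is being clean about the ``restrict-to-block'' step: one must note that $\sum_{s \le t,\, s\in\mathcal{T}_j} z_s \ge z_t$ when $t \in \mathcal{T}_j$, so the inverse square root is finite whenever $z_t > 0$ (and the whole summand vanishes when $z_t = 0$), which is what legitimises the telescoping inequality inside each block. The rest is a careful indexing and Cauchy--Schwarz exercise.
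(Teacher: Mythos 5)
Your proposal is correct and follows essentially the same route as the paper's own proof: partition $[T]$ into the level sets $\mathcal{T}_j$, drop cross-block contributions, replace $h_s$ by $\theta_{j-1}$ within a block, and apply the telescoping square-root inequality block by block; the Cauchy--Schwarz step and the crude bound on $\mathcal{T}_{J+1}$ also match the paper. The only superficial difference is that you invoke $\sum_t a_t/\sqrt{\sum_{s\le t} a_s}\le 2\sqrt{\sum_t a_t}$ as a standard fact, whereas the paper re-derives it inline via the conjugate.
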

On the other hand,
$F^*(z_{1:T}, h_{1:T})$ can be bounded from below as follows:
\begin{lemma}
	\label{lem:LBopt}
	Let $\theta_0 > \theta_1 > \theta_2 > \cdots > \theta_J > \theta_{J+1}=0$ be an arbitrary positive and monotone decreasing sequence.
	Denote
	$\mathcal{T}_{j} = \{ t \in [T] \mid \theta_{j-1} \ge h_t > \theta_j \}$ for $j \in [J]$.
	Suppose that $h_{1:T}$ is a monotone non-increasing sequence.
	We then have
		$
		\opt \left(z_{1:T}, {h}_{1:T} \right)
		\ge
		2
		\sum_{j=1}^{J}
		\sqrt{
			(\theta_{j} - \theta_{j+1}) \sum_{t \in \mathcal{T}_j} z_t
		}.
		$
\end{lemma}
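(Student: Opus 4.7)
The plan is to reduce the infimum defining $\opt$ to a block-level expression to which AM-GM applies cleanly. Because $h_{1:T}$ is monotone non-increasing and the strips $(\theta_{j}, \theta_{j-1}]$ are disjoint, each $\mathcal{T}_j$ is a consecutive run of indices. I would let $\tau_j := \max \mathcal{T}_j$, fix an arbitrary feasible $0 \le \beta_1 \le \cdots \le \beta_T$, and introduce the block-level surrogates $\tilde{\beta}_j := \beta_{\tau_j}$ with $\tilde{\beta}_0 := \beta_0 = 0$. Monotonicity of $\beta$ is inherited by $\tilde{\beta}$, and empty blocks (for which $Z_j := \sum_{t \in \mathcal{T}_j} z_t = 0$) contribute zero on both sides of the claim and may be omitted.

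Next, I would estimate stability and penalty block by block. For stability, $\beta_t \le \tilde{\beta}_j$ on $\mathcal{T}_j$ gives $\sum_{t \in \mathcal{T}_j} z_t/\beta_t \ge Z_j/\tilde{\beta}_j$. For penalty, the bound $h_t > \theta_j$ on $\mathcal{T}_j$ together with $\beta_t - \beta_{t-1} \ge 0$ telescopes across the block to $\sum_{t \in \mathcal{T}_j}(\beta_t - \beta_{t-1}) h_t \ge \theta_j(\tilde{\beta}_j - \tilde{\beta}_{j-1})$, using that consecutive blocks abut, so the $\beta_{t-1}$ at the start of block $j$ is exactly $\tilde{\beta}_{j-1}$. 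Summing over $j \in [J]$ and discarding the nonnegative contributions from the tail indices $\{t : h_t \le \theta_J\}$ yields
\[
F(\beta_{1:T}; z_{1:T}, h_{1:T}) \;\ge\; \sum_{j=1}^J \frac{Z_j}{\tilde{\beta}_j} \;+\; \sum_{j=1}^J \theta_j(\tilde{\beta}_j - \tilde{\beta}_{j-1}).
\]

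The final step is Abel summation on the penalty: using $\tilde{\beta}_0 = 0$ and $\theta_{J+1} = 0$ one rewrites $\sum_{j=1}^J \theta_j(\tilde{\beta}_j - \tilde{\beta}_{j-1}) = \sum_{j=1}^J (\theta_j - \theta_{j+1})\tilde{\beta}_j$. Then AM-GM applied termwise gives $Z_j/\tilde{\beta}_j + (\theta_j - \theta_{j+1})\tilde{\beta}_j \ge 2\sqrt{(\theta_j - \theta_{j+1}) Z_j}$. Summing over $j$ and taking the infimum over the feasible $\beta_{1:T}$ delivers the claimed lower bound on $\opt(z_{1:T}, h_{1:T})$.

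The only delicate point is choosing the block surrogate so that Abel summation produces exactly the coefficients $\theta_j - \theta_{j+1}$ demanded by the lemma; once $\tilde{\beta}_j := \beta_{\tau_j}$ is identified and the identity $\beta_{\tau_j^- - 1} = \tilde{\beta}_{j-1}$ (with $\tau_j^- := \min \mathcal{T}_j$) is exploited, everything else is AM-GM and routine handling of empty blocks and the endpoint $\theta_{J+1} = 0$.
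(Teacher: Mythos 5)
Your proposal is correct and follows essentially the same route as the paper's proof: block decomposition by the strips $(\theta_j, \theta_{j-1}]$, dropping the tail below $\theta_J$, lower-bounding the block stability by $Z_j/\tilde\beta_j$, telescoping the block penalty to $\theta_j(\tilde\beta_j - \tilde\beta_{j-1})$, Abel summation, and AM-GM per block. The only cosmetic difference is that the paper defines $\tau(j) = \max\{t \in [T] : h_t > \theta_j\}$ directly (with $\tau(0)=0$), which absorbs the empty-block case without a separate argument, whereas you use $\tau_j = \max\mathcal{T}_j$ and handle empty blocks as a side remark.
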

To see the relation between $F^*$ and $G$,
define $H(z_{1:T}, h_{1:T})$ by
\begin{align*}
	H(z_{1:T}, h_{1:T}) = 
	\sum_{j=1}^\infty \sqrt{ \theta_{j-1} \sum_{t \in \mathcal{T}_j} z_t } ,
	\quad
	\mbox{where}
	\quad
\theta_{j} = h_{\max} 2^{-j}, \quad \mathcal{T}_j = \{ t \in [T] | \theta_{j-1} \ge h_t > \theta_{j} \}
\end{align*}
with $\theta_{j} = h_{\max} 2^{-j}$ and $\mathcal{T}_j = \{ t \in [T] | \theta_{j-1} \ge h_t > \theta_{j} \}$.
Lemma~\ref{lem:boundG} implies
$G(z_{1:T}, h_{1:T}) \le 2 H(z_{1:T}, h_{1:T})$
holds for any $z_{1:T}$ and $h_{1:T}$.
Further,
Lemma~\ref{lem:LBopt} means that
$\opt(z_{1:T}, h_{1:T}) \ge H(z_{1:T}, h_{1:T})$ holds if $h_{1:T}$ is monotone non-increasing.
\begin{remark}
	For the policy $\pi$ given by \eqref{eq:stability-known} with $\hat{h}_t = h_t$,
	if $h_{1:T}$ is monotone non-increasing,
	we can see that 
	each of $F^{\pi}(z_{1:T}, h_{1:T})$,
	$\opt(z_{1:T}, h_{1:T})$,
	$G(z_{1:T}, h_{1:T})$
	and
	$H(z_{1:T}, h_{1:T})$ 
	is in the constant factor of the others.
	In fact,
	we have
	$
	F^\pi(z_{1:T}, h_{1:T})
	\le
	2 G(z_{1:T}, h_{1:T})
	\le
	4 H (z_{1:T}, h_{1:T})
	\le 
	4 \opt ( z_{1:T}, h_{1:T} )
	\le
	4 F^\pi( z_{1:T}, h_{1:T})
	$.
\end{remark}
\begin{lemma}
  \label{lem:Gopt}
	If $h_{1:T}$ is $\xi$-approximately non-increasing for some $\xi \ge 1$
	we have
	\begin{align}
		G(z_{1:T}, h_{1:T}) 
		\le 2 \sqrt{\xi} \opt(z_{1:T}, h_{1:T}) .
	\end{align}
\end{lemma}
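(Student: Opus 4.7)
The plan is to reduce the $\xi$-approximately non-increasing case to the genuinely monotone case via the surrogate $\tilde h_{1:T}$ from Lemma~\ref{lem:apxmonotonicity}. Recall that $\tilde h_t = \min_{s\in[t]} h_s$ is monotone non-increasing and satisfies $\tilde h_t \le h_t \le \xi \tilde h_t$ for all $t$. I will show the chain
\[
G(z_{1:T}, h_{1:T}) \;\le\; \sqrt{\xi}\, G(z_{1:T}, \tilde h_{1:T}) \;\le\; 2\sqrt{\xi}\, \opt(z_{1:T}, \tilde h_{1:T}) \;\le\; 2\sqrt{\xi}\, \opt(z_{1:T}, h_{1:T}).
\]

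For the first inequality I will use the definition \eqref{eq:defG}: since $h_s \le \xi \tilde h_s$, we have $z_s/h_s \ge z_s/(\xi \tilde h_s)$, hence $\sum_{s=1}^t z_s/h_s \ge \xi^{-1} \sum_{s=1}^t z_s/\tilde h_s$ for every $t$. Taking the reciprocal square root and multiplying by $z_t \ge 0$ gives
\[
\left(\sum_{s=1}^t \frac{z_s}{h_s}\right)^{-1/2} z_t \;\le\; \sqrt{\xi}\left(\sum_{s=1}^t \frac{z_s}{\tilde h_s}\right)^{-1/2} z_t,
\]
and summing over $t \in [T]$ yields the claimed inequality. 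For the third inequality, note that $\tilde h_t \le h_t$ and any admissible $\beta_{1:T}$ (with $\beta_0 = 0$) satisfies $\beta_t - \beta_{t-1} \ge 0$; therefore $F(\beta_{1:T}; z_{1:T}, \tilde h_{1:T}) \le F(\beta_{1:T}; z_{1:T}, h_{1:T})$ pointwise in $\beta_{1:T}$, and passing to the infimum gives $\opt(z_{1:T}, \tilde h_{1:T}) \le \opt(z_{1:T}, h_{1:T})$.

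The middle inequality is the genuinely monotone comparison $G \le 2\opt$, which is essentially the content of the remark immediately preceding the lemma. I will write this out by combining Lemma~\ref{lem:boundG} and Lemma~\ref{lem:LBopt} with the geometric grid $\theta_j = \tilde h_{\max} 2^{-j}$: Lemma~\ref{lem:boundG} gives $G(z_{1:T}, \tilde h_{1:T}) \le 2 \sum_{j\ge 1} \sqrt{\theta_{j-1} \sum_{t\in \mathcal{T}_j} z_t}$, while for a monotone non-increasing sequence Lemma~\ref{lem:LBopt} gives $\opt(z_{1:T}, \tilde h_{1:T}) \ge 2\sum_{j\ge 1} \sqrt{(\theta_j-\theta_{j+1})\sum_{t\in \mathcal{T}_j} z_t}$; since $\theta_j - \theta_{j+1} = \theta_{j-1}/4$, the two sums differ by exactly a factor of two, delivering $G(z_{1:T}, \tilde h_{1:T}) \le 2 \opt(z_{1:T}, \tilde h_{1:T})$.

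The only non-routine step is the first one, where one has to notice that the $\sqrt{\xi}$-loss arises naturally by pulling $\xi$ out of the inner sum under the reciprocal square root; everything else is bookkeeping and an application of results already proved. A minor nuisance is aligning the grids in the monotone comparison between $G$ and $\opt$, but this is handled cleanly by the choice $\theta_j = \tilde h_{\max} 2^{-j}$ as in the preceding remark.
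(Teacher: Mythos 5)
Your proposal is correct and follows essentially the same three-step reduction as the paper: pass from $h$ to the monotone surrogate $\tilde h$ at a cost of $\sqrt{\xi}$ on the $G$ side, establish $G \le 2\opt$ for monotone sequences via $H$ and the dyadic grid (Lemmas~\ref{lem:boundG} and \ref{lem:LBopt}), and use pointwise monotonicity of $F$ in $h$ to get $\opt(z,\tilde h) \le \opt(z, h)$. The only cosmetic difference is that you write $G(z, h) \le \sqrt{\xi}\,G(z, \tilde h)$ directly, whereas the paper goes through $G(z, \xi\tilde h)$ and the scaling identities $G(z, \xi\tilde h) = \sqrt{\xi}\,G(z, \tilde h)$, $\opt(z, \xi\tilde h) = \sqrt{\xi}\,\opt(z, \tilde h)$; the two are identical.
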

\begin{lemma}
 \label{lem:HH4}
	If $h_{1:T+1} \in H_1^{T+1}$,
	we then have
		$
		H(z_{1:T}, h_{1:T})
		\le
		H(z_{1:T}, h_{2:T+1})
		+
		4 \sqrt{h_{\max} z_{\max}}
		$.
\end{lemma}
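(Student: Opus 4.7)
Let $M:=h_{\max}=h_1$ and $M':=h_2=\max_{t\in[T]}h_{t+1}$ (both equalities holding by monotonicity of $h_{1:T+1}\in H_1^{T+1}$), and set $\theta_j:=M\cdot 2^{-j}$ and $\theta'_j:=M'\cdot 2^{-j}$. Writing out the definitions, $H^L:=H(z_{1:T},h_{1:T})=\sum_{j\ge 1}\sqrt{\theta_{j-1}Z^L_j}$ with $Z^L_j=\sum_{t\in[T]:h_t\in(\theta_j,\theta_{j-1}]}z_t$, and similarly $H^R:=H(z_{1:T},h_{2:T+1})=\sum_{j\ge 1}\sqrt{\theta'_{j-1}Z^R_j}$ using $h_{t+1}$ and $\theta'_j$. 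The structural fact I would rely on is that, by monotonicity of $h_{1:T+1}$, for every $\theta\ge 0$ the sets $\{t\in[T]:h_t>\theta\}$ and $\{t\in[T]:h_{t+1}>\theta\}$ are both prefixes of $[T]$ and differ in at most one index (the unique $t$ with $h_t>\theta\ge h_{t+1}$, if it exists). Hence the weighted cumulative functions $\tilde N(\theta):=\sum_{t\in[T]:h_t>\theta}z_t$ and $\tilde N'(\theta):=\sum_{t\in[T]:h_{t+1}>\theta}z_t$ satisfy $0\le \tilde N(\theta)-\tilde N'(\theta)\le z_{\max}$ uniformly in $\theta$.

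\textbf{Main argument.} I would compare $H^L$ to $H^R$ through an intermediate quantity $\tilde H:=\sum_{j\ge 1}\sqrt{\theta_{j-1}\tilde Z_j}$ with $\tilde Z_j:=\tilde N'(\theta_j)-\tilde N'(\theta_{j-1})$, which uses the \emph{left} thresholds $\theta_j$ but the \emph{shifted} cumulative function. The first comparison $H^L\le \tilde H + c_1\sqrt{Mz_{\max}}$ is bucketwise: the pointwise bound above yields $Z^L_j-\tilde Z_j\le z_{\max}$, and the inequality $\sqrt a-\sqrt b\le\sqrt{a-b}$ (for $a\ge b\ge 0$) gives a per-bucket error of at most $\sqrt{\theta_{j-1}z_{\max}}$; summing the geometric series $\sum_{j\ge 1}\sqrt{\theta_{j-1}}=\sqrt{2M}/(1-1/\sqrt 2)$ then contributes a constant times $\sqrt{Mz_{\max}}$. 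The second comparison $\tilde H\le H^R + c_2\sqrt{M'z_{\max}}$ exploits that $\tilde N'$ vanishes on $[M',\infty)$, so only buckets of $\tilde H$ with $\theta_{j-1}<2M'$ carry mass; writing $j_0:=\lceil\log_2(M/M')\rceil$ and $\delta:=j_0-\log_2(M/M')\in[0,1)$, one has $\theta_{j_0+m}=2^{-\delta}\theta'_m$, and pairing each bucket of $\tilde H$ (for $m\ge 1$) with the $m$-th bucket of $H^R$ introduces at most a factor-$\sqrt 2$ slack in each threshold, plus one boundary bucket at $j=j_0$ whose extra contribution is bounded by $\sqrt{\theta_{j_0-1}\tilde N'(\theta_{j_0})}\le\sqrt{2M'Z^R_1}$ and so can be reabsorbed into $H^R$ or into a term of order $\sqrt{M'z_{\max}}$.

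\textbf{Main obstacle.} The delicate part is tightening the constants so that the two error terms combine to at most $4\sqrt{h_{\max}z_{\max}}$. A naive bucketwise use of $\sqrt a-\sqrt b\le\sqrt{a-b}$ overcounts, because $\sum_j(Z^L_j-\tilde Z_j)=0$ by conservation of total mass, so in fact many buckets have negative signed differences that can be dropped. I expect the sharp bound to follow by tracking the signed contributions: each ``moving'' index $t$ (with $h_t$ and $h_{t+1}$ in different buckets $j<j'$) contributes $+z_t$ to bucket $j$ of the difference and $-z_t$ to bucket $j'$, so the net effect on $H^L-\tilde H$ is of the form $\sqrt{\theta_{j-1}z_t}-\sqrt{\theta_{j'-1}z_t}$, which decays geometrically in $j'-j$ along the dyadic scale. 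Summing these geometric remainders together with the max-shift step gives a total error of at most $4\sqrt{h_{\max}z_{\max}}$, and combining the two comparisons completes the proof.
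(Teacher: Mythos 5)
Your first comparison, bounding $H^L := H(z_{1:T},h_{1:T})$ by $\tilde H + \sum_{j\ge 1}\sqrt{\theta_{j-1}z_{\max}}$ where $\tilde H$ shares the grid $\theta_j = h_1 2^{-j}$ but buckets the shifted sequence $h_{2:T+1}$, is correct and is essentially the whole of the paper's proof: by monotonicity the two prefix sets $\{t:h_t>\theta_j\}$ and $\{t:h_{t+1}>\theta_j\}$ differ by at most one index, so each per-bucket surplus $Z^L_j - \tilde Z_j$ is at most $z_{\max}$, and $\sqrt a - \sqrt b \le \sqrt{a-b}$ plus the geometric sum finishes. (One arithmetic slip: $\sum_{j\ge 1}\sqrt{\theta_{j-1}} = \sqrt{h_1}/(1-1/\sqrt 2)\approx 3.41\sqrt{h_1}$, not $\sqrt{2h_1}/(1-1/\sqrt 2)$; with your extra $\sqrt 2$ the constant would exceed $4$.)

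The genuine gap is your second comparison. You interpret $H(z_{1:T},h_{2:T+1})$ as using its own grid $\theta'_j = h_2\cdot 2^{-j}$, and then need $\tilde H \le H^R + c_2\sqrt{h_2 z_{\max}}$. That inequality is false: the mismatch between the two dyadic grids is multiplicative, not additive. Take $h_1=3$, $h_2=\cdots=h_{T+1}=1$, $z_t=1$. Every $h_{t+1}=1$ falls in the $j=2$ bucket of $\tilde H$ (coefficient $\theta_1 = 3/2$) but the $j=1$ bucket of $H^R$ (coefficient $\theta'_0=1$), so $\tilde H = \sqrt{(3/2)T}$ while $H^R = \sqrt T$, and $\tilde H - H^R = (\sqrt{3/2}-1)\sqrt T$ is unbounded in $T$. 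In fact, under your reading the lemma's conclusion itself fails on this instance, since $H^L = \sqrt 3 + \sqrt{(3/2)(T-1)}$ exceeds $\sqrt T + 4\sqrt 3$ once $T$ is moderately large. This is a sign that the reading is not the intended one. The paper evaluates both $H(z_{1:T},h_{1:T})$ and $H(z_{1:T},h_{2:T+1})$ on the \emph{same} grid $\theta_j = h_{\max}2^{-j}$ with $h_{\max}=h_1$; this is implicit both in the definition of $H$ and in the proof, which defines $\tau(j)$ and $\tau'(j)$ with a common $\theta_j$. Under the shared-grid convention your $\tilde H$ \emph{equals} $H(z_{1:T},h_{2:T+1})$, the second comparison is vacuous, and the first comparison alone gives the lemma; your ``main obstacle'' discussion about tracking signed per-bucket cancellations is then unnecessary, since the crude per-bucket bound $Z^L_j-\tilde Z_j\le z_{\max}$ already yields $\sum_{j\ge 1}2^{-(j-1)/2} = 1/(1-1/\sqrt 2) < 4$.
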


By using the lemmas presented so far, we can prove Theorem~\ref{thm:FUB}:
\quad\\
\textbf{Proof sketch of Theorem~\ref{thm:FUB}}\quad
	Bounds on $F$ of
	\eqref{eq:Fbound1} and \eqref{eq:Fbound2}
	immediately follow from Lemmas~\ref{lem:FG} and \ref{lem:boundG}.
	In the following,
	we show bounds that depend on $\opt$.
	Suppose $h_{1:T} \in H_{\xi}^{T}$.
	Then,
	$\tilde{h}_t := \min_{s \in [t] } h_{s}$ satisfies
	$\tilde{h}_{t} \le h_t 
	\le \xi \tilde{h}_t 
	\le \xi \tilde{h}_{t-1} 
	$ and
	$\tilde{h}_{1:T} \in H_{1}^T$,
	i.e.,
	$\tilde{h}_{t} \ge \tilde{h}_{t+1} $.
	Hence,
	if $\beta_{1:T}$ is given by \eqref{eq:stability-known} with $\hat{h}_t = h_t$,
	we have
	\begin{align}
		F(\beta_{1:T}; z_{1:T}, h_{1:T})
		\le
		2 G(z_{1:T}, h_{1:T})
		\le
		4 \sqrt{\xi} \opt (z_{1:T}, h_{1:T}),
	\end{align}
	where the first and second inequalities follow from Lemmas~\ref{lem:FG} and \ref{lem:Gopt},
	respectively.
	If $\beta_{1:T}$ is given by \eqref{eq:stability-known} with \eqref{eq:defhath}
	then we have
	\begin{align*}
		F(\beta_{1:T}; z_{1:T}, h_{1:T})
		&
		\le
		2 G(z_{1:T}, \hat{h}_{1:T})
		&
		\mbox{(Lemma~\ref{lem:FG})}
		\\
		&
		=
		2 G(z_{1:T}, \xi \tilde{h}_{0:T-1})
		=
		2\sqrt{\xi} G(z_{1:T}, \tilde{h}_{0:T-1})
		&
		\mbox{(Definitions of $\tilde{h}_t$ and $G$ in \eqref{eq:defG})}
		\\
		&
		\le
		4 \sqrt{\xi} H(z_{1:T}, \tilde{h}_{0:T-1})
		&
		\mbox{(Lemma~\ref{lem:boundG})}
		\\
		&
		\le
		4 \sqrt{\xi} 
		\left(
			H(z_{1:T}, \tilde{h}_{1:T})
			+
			4 \sqrt{h_{\max} z_{\max}}
		\right)
		&
		\mbox{(Lemma~\ref{lem:HH4})}
		\\
		&
		\le
		4 \sqrt{\xi} 
		\left(
			\opt (z_{1:T}, \tilde{h}_{1:T})
			+
			4 \sqrt{h_{\max} z_{\max}}
		\right)
		&
		\mbox{(Lemma~\ref{lem:LBopt})}
		\\
		&
		\le
		4 \sqrt{\xi} 
		\left(
			\opt (z_{1:T}, {h}_{1:T})
			+
			4 \sqrt{h_{\max} z_{\max}}
		\right).
		&
		\mbox{(Definition of $\opt$ and $\tilde{h}_t \le h_t$)}
	\end{align*}
	Other bounds can be shown in a similar manner.
	For a complete proof, please refer to Appendix~\ref{sec:ProofThmFUB}.

\section{Application: best-of-both-worlds bandit algorithm}
\label{sec:BOBW}
This section provides examples of best-of-both-worlds
bandit algorithms based on the stability-penalty-matching learning rate.
In problem examples in this paper,
we consider the following procedure of online learning:
A player is given
the number of actions $K$,
and some information of the setup before the game starts.
In each round of $t \in \{ 1, 2, \ldots \}$,
the environment chooses a loss vector $\ell_t \in [-1, 1]^K$ while
the player chooses an action $I(t) \in [K]$,
and then incurs the loss of $\ell_{t,I(t)} \in [-1, 1]$.
The available feedback and the structure behind $\ell_t$
are different depending on the problem setup.
The performance of the player is measured by the regret defined as follows:
\begin{align}
  R_T(i^*)
  =
  \E \left[
    \sum_{t=1}^T
    \ell_{t,I(t)}
    -
    \sum_{t=1}^T
    \ell_{t,i^*}
  \right],
  \quad
  R_T
  =
  \max_{i^* \in [K]}
  R_T(i^*).
\end{align}

Let $p_t \in \cP(K) = \{ p \in [0, 1]^K \mid \| p \|_1 = 1 \}$ denote
the distribution from which an action $I(t)$ is chosen,
i.e.,
$\Pr[ I_t = i | \cH_{t-1}] =  p_{ti}$,
where $\cH_{t-1} = \{ (\ell_s, I(s)) \}_{s=1}^{t-1}$.
In an \textit{adversarial regime},
the loss $\ell_t$ can be chosen in an adversarial manner depending on $\cH_{t-1}$.
Special cases such as stochastic environments,
in which $\ell_t$ independently follows an identical unknown distribution,
can be captured in the following regime:
\begin{definition}[Adversarial regime with a self-bounding constraint \citep{zimmert2021tsallis}]\label{def:ASC}%
For $\Delta \in \re_{\ge 0}^K$,
  $C \ge 0$,
  and $T \in \mathbb{N}$,
  the environment is in an \textit{adversarial regime with a $(\Delta, C, T)$ self-boundig constraint} if the regret is bounded from below as
  $R_T \ge R'_T - C$,
  where we define 
  \begin{align}
	\label{eq:defASC}
    R'_T 
    =
    \E
    \left[
    \sum_{t=1}^T \Delta_{I(t)} 
    \right]
    =
    \E
    \left[
    \sum_{t=1}^T \sum_{i=1}^K \Delta_i p_{ti}
    \right] .
  \end{align}
\end{definition}
As discussed in \citet[Section 5]{zimmert2021tsallis},
this regime includes stochastic environments with adversarial corruption,
where each $\Delta_i \ge 0$ represents the suboptimality gap for action $i$,
and $C$ corresponds to the magnitude of corruption.
Following prior studies such as \citep{zimmert2021tsallis} and \citep{jin2023improved},
we assume that there is a unique optimal action $i^* \in [K]$,
and that $\Delta_i > 0$ holds for all $i \in [K] \setminus \{ i^* \}$.
Denote $\Delta_{\min} = \min_{ i \in [K] \setminus \{ i^* \} } \Delta_i$.

\subsection{Algorithmic framework for best-of-both-worlds}
This subsection provide an algorithmic framework for online learning problems based on FTRL, which has been considered in a variety of problems 
including multi-armed bandits~\citep{auer2002nonstochastic,zimmert2021tsallis},
combinatorial semi-bandits~\citep{zimmert2019beating},
graph bandits~\citep{alon2017nonstochastic,eldowa2023minimax},
linear bandits~\citep{cesa2012combinatorial},
and contextual bandits~\citep{auer2002nonstochastic}.

Our algorithmic framework computes the probability distribution $q_t \in \cP(K)$ given by
\begin{align}
  \label{eq:defFTRL}
    q_t \in \argmin_{ p \in \cP(K) }
    \left\{
        \sum_{s=1}^{t-1}
        \linner 
            \hat{\ell}_s,
            p
        \rinner
        +
        \beta_t
        \psi(p)
        +
        \betab
        \psib(p)
    \right\},
\end{align}
where $\hat{\ell}_t$ is an unbiased estimator of $\ell_t$.
Regularizers $\psi$ and $\psib$ are defined as follows:
\begin{align}
  \label{eq:defTsallis}
    \psi ( p )
    =
    -
    \frac{1}{\alpha}
    \sum_{i=1}^K
    (p_i^\alpha - p_i) ,
    \quad
    \psib ( p )
    =
    -
    \frac{1}{\alphab}
    \sum_{i=1}^K
    (p_i^{\alphab} - p_i) ,
    \quad
    \mbox{where}
    \quad
    \alpha \in (0,1),
    ~
    \alphab = 1-\alpha.
\end{align}
We refer $\psi$ (and $\psib$) as the \textit{$\alpha$-Tsallis entropy} (and the $\alphab$-Tsallis entropy) in this paper.
The additional regularizer $\betab \psib$ is introduced to ensure the condition of $h_t = O(h_{t-1})$ is satisfied.
Similar techniques,
referred to as \textit{hybrid regularizers},
have also been used in existing studies such as
\citet{masoudian2022best},
\citet{tsuchiya2023stability},
and
\citet{jin2023improved}.
We then choose an action $I(t) \in [K]$ from the distribution
$p_t \in \cP(K) $
defined by 
\begin{align}
  \label{eq:defpt}
	p_t
	=
	\left(1-{\gamma_t} \right) q_t
	+
	\gamma_t p_0,
\end{align}
where $\gamma_t \in [0,1/2]$ and
$p_0 \in \cP(K)$ is a distribution which we refer to as the \textit{exploration basis}.
By a standard analysis of FTRL (e.g., Exercise 28.12 in \citealp{lattimore2020bandit}),
we have
\begin{align}
	R_T
	&
	\le
	\E \left[
		\sum_{t=1}^T
		\left(
			2 \gamma_t
			+
			\linner 
			\hat{\ell}_t, q_{t} - q_{t+1}
			\rinner
			-
			\beta_t D(q_{t+1}, q_t)
      +
			(\beta_t - \beta_{t-1}) 
      h_t
			+
      \betab
      h'
		\right)
	\right],
\end{align}
where 
$D(p, q) = \psi(p) - \psi(q) - \linner \nabla \psi(q), p - q \rinner$ is the Bregman divergence associated with $\psi$,
and
we define
$h_t = -\psi(q_t)$
and
$h' = -\psib(q_1) \le \frac{1}{\alphab}K^{1-\alphab}$.
We note that
$h_t \le h_1 = h_{\max}$ holds for all $t$.

To obtain BOBW regret bounds,
we design
$p_0$,
$\hat{\ell}_t$,
$\alpha$,
$\beta_t$,
$\betab$,
and
$\gamma_t \in [0, 1/2]$,
so that
\begin{align}
  \label{eq:BOBWcondition}
  h_{t}
  =
  O(h_{t-1}),
  \quad
  \E \left[
    2
    \gamma_t
    +
    \linner 
    \hat{\ell}_t, q_{t} - q_{t+1}
    \rinner
    -
    \beta_t D(q_{t+1}, q_t)
    |
    \cH_{t-1}
  \right]
  =
  O \left(
  \frac{z_t}{\beta_t}
  \right)
\end{align}
hold 
for some $z_t \in [0, z_{\max}]$.
We then have
$R_T \le \E \left[ F( \beta_{1:T}; z_{1:T}, h_{1:T} ) \right] + \betab h'$.
By applying the SPM update rule \eqref{eq:stability-agnostic} with $\hat{h}_t = h_{t-1}$,
we obtain
\begin{align}
  R_T 
  =
  O\left(
    \E \left[
    \min \left\{
      \sqrt{
        h_{1}
        \sum_{t=1}^T
        z_t
      },
      \inf_{\epsilon \ge \frac{1}{T}} \left\{
      \sqrt{
        \sum_{t=1}^T
        h_t
        z_t
        \log (\epsilon T)
	+
	\frac{h_1 z_{\max}}{\epsilon}
      }
      \right\}
    \right\}
    \right]
    +
    \kappa
  \right)
\end{align}
as a direct consequence of Theorem \ref{thm:CRUB},
where we denote
$
\kappa =
\frac{
z_{\max}
}{\beta_1}
+
\beta_1 h_1
+
\betab h' 
$.
In an adversarial regime with a $(\Delta, C, T)$ self-bounding constraint,
if
\begin{align}
  \label{eq:htztomega}
  h_t z_t \le
  \omega ( \Delta ) \cdot
  \linner \Delta, q_t \rinner
\end{align}
holds for some $\omega(\Delta) > 0$,
we have
$
R_T = O \left(
  \sqrt{
    (R_T + C)
    \omega(\Delta)
    \log T
  }
  +
  \kappa
\right)
$,
which implies
$
R_T
=
O\left(
  \omega(\Delta) \log T
  +
  \sqrt{C \omega(\Delta) \log T}
  +
  \kappa
\right)
$.

\begin{algorithm}[t]
    \caption{FTRL with Tsallis-entropy regularizers and SPM learning rates}
    \label{alg:FTRL}
    \begin{algorithmic}[1]
    \REQUIRE $K \in \mathbb{N}, 0 \le \alpha < 1$, $\beta_1 > 0$, $\betab \ge 0$, $p_0 \in \cP(K)$.
    \FOR{$t=1,2,\ldots$}
    \STATE Compute $q_t \in \cP(K)$ given by \eqref{eq:defFTRL} with $\psi(p)$ and $\psib(p)$ defined in \eqref{eq:defTsallis}.
    \STATE Set $h_t = -\psi(q_t)$ and $z_t \ge 0$ based on $q_t$. Compute $\gamma_t$ based on $z_t$ and $\beta_t$. Set $p_t$ by \eqref{eq:defpt}.
    \STATE Choose $I(t)$ so that $\Pr[ I(t) = i ] = p_{ti}$ and get feedback from the environment.
    \STATE Compute $\hat{\ell}_t$ based on the feedback.
    \STATE Set $\beta_{t+1}$ by he update rule of \eqref{eq:stability-agnostic} with $\hat{h}_{t+1} = h_t$.
    \ENDFOR
    \end{algorithmic}
\end{algorithm}

The proposed algorithm is summarized in Algorithm~\ref{alg:FTRL}.
We note that the input of $p_0$ is not required if $\gamma_t = 0$ for all $t$.
Feedback information from the environment and the construction of $\hat{\ell}_t$ vary with each problem setting.
From the discussion in this section,
we can show that Algorithm~\ref{alg:FTRL} achieves BOBW regret bounds as follows:
\begin{proposition}
  \label{lem:BOBW}
  Suppose that \eqref{eq:BOBWcondition} holds
  and that some $z_{\max} > 0$ satisfies
  $z_t \le z_{\max}$ for all $t$ with probablity $1$.
  Then
  Algorithm~\ref{alg:FTRL} achieves
  $R_T =
  O \left( \E \left[ \sqrt{h_1 \sum_{t=1}^T z_t} + \kappa \right] \right)
  \le
  O \left(  \sqrt{h_1 z_{\max} T } + \kappa \right)
  $
  in adversarial regimes,
  where $\kappa = \frac{z_{\max}}{\beta_1} + \beta_1 h_1 + \betab \hb$.
  Further,
  in adversarial regimes with $(\Delta, C, T)$ self-bounding constraints,
  if \eqref{eq:htztomega} holds for some $\omega(\Delta)$,
  Algorithm~\ref{alg:FTRL} achieves
  \begin{align}
  R_T
  =
		O\left(
			\omega(\Delta) 
			\log_+ \left( 
				\frac{ h_1 z_{\max}  T  }{
					\omega(\Delta)^2
					+
					C \omega(\Delta)
				}
			\right)
			+
			\sqrt{
				C \omega(\Delta) 
				\log_+ \left( 
					\frac{ h_1 z_{\max} T  }{
						\omega(\Delta)^2
						+
						C \omega(\Delta)
					}
				\right)
			}
			+
			\kappa
		\right).
  \end{align}
\end{proposition}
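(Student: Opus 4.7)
\quad The starting point is the FTRL regret identity derived in the paragraph preceding the proposition, which under~\eqref{eq:BOBWcondition} gives $R_T \le \E[F(\beta_{1:T}; z_{1:T}, h_{1:T})] + \betab h'$. The plan is to bound $F$ via Theorem~\ref{thm:CRUB} applied to the SPM rule~\eqref{eq:stability-agnostic} with $\hat{h}_{t+1}=h_t$. Because~\eqref{eq:BOBWcondition} ensures $h_t = O(h_{t-1})$, the surrogate $\hat{h}_t$ is within a constant factor of $h_t$ and in particular $\hat{h}_{\max}=O(h_1)$, so~\eqref{eq:Fbound2} specializes to
\begin{equation*}
R_T = O\!\left(\E\!\left[\min\!\left\{\inf_{\epsilon\ge 1/T}\sqrt{\textstyle\sum_t h_t z_t \log(\epsilon T) + \frac{h_1 z_{\max}}{\epsilon}},\ \sqrt{h_1 \textstyle\sum_t z_t}\right\}\right] + \kappa\right).
\end{equation*}
The adversarial claim then falls out of the second branch of the min, together with the a.s.\ bound $\sum_t z_t \le z_{\max} T$.

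For the self-bounding regime I would use the first branch and substitute~\eqref{eq:htztomega} to obtain $\sum_t h_t z_t \le \omega(\Delta) \sum_t \langle \Delta, q_t\rangle$. The exploration mixing $p_t = (1-\gamma_t)q_t + \gamma_t p_0$ with $\gamma_t \le 1/2$ yields $\langle \Delta, q_t\rangle \le 2 \langle \Delta, p_t\rangle$, and Definition~\ref{def:ASC} then gives $\E[\sum_t \langle \Delta, q_t\rangle] \le 2 R'_T \le 2(R_T + C)$. Combining these pieces and pulling the square root outside the expectation by Jensen's inequality, I obtain, for every $\epsilon \ge 1/T$,
\begin{equation*}
R_T \le c\sqrt{\omega(\Delta)(R_T + C)\log(\epsilon T) + \frac{h_1 z_{\max}}{\epsilon}} + O(\kappa)
\end{equation*}
with an absolute constant $c$.

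The remaining work is to pick $\epsilon$ and solve the resulting self-referential inequality. I would choose $\epsilon$ so that the two summands inside the radical balance -- equivalently, $\epsilon T \asymp h_1 z_{\max} T / (\omega(\Delta)(R_T+C)\log(\epsilon T))$ -- which after a routine fixed-point argument on $\log(\epsilon T)$ produces a log factor of order $L := \log_+\!\bigl(h_1 z_{\max} T / (\omega(\Delta)(R_T+C))\bigr)$. The resulting bound $R_T \le c'\sqrt{\omega(\Delta)(R_T + C) L} + O(\kappa)$ is then dissolved by the standard self-bounding quadratic trick ($x \le \sqrt{a(x+C)} + d$ implies $x = O(a + \sqrt{aC} + d)$), and a final bootstrap step substitutes the preliminary bound on $R_T$ back into $L$ to collapse the logarithmic factor to $\log_+\!\bigl(h_1 z_{\max} T/(\omega(\Delta)^2 + C\omega(\Delta))\bigr)$, matching the statement.

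\textbf{Main obstacle.}\quad The structural pieces -- Theorem~\ref{thm:CRUB}, the $p_t$-to-$q_t$ passage, and the quadratic trick -- are all textbook; the delicate point is the interaction between the $\log(\epsilon T)$ factor and the feasibility constraint $\epsilon \ge 1/T$. Care is required when the balancing $\epsilon$ falls below $1/T$, in which case the $h_1 z_{\max}/\epsilon$ penalty must be controlled separately and absorbed into the adversarial bound $\sqrt{h_1 z_{\max} T}$; the bootstrap replacing $R_T$ inside $L$ by its preliminary estimate also has to be done consistently for both the $\omega(\Delta) L$ and the $\sqrt{C \omega(\Delta) L}$ regimes, which I expect to be the only nontrivial bookkeeping in the argument.
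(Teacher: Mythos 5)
Your skeleton is the paper's: pass from $R_T$ to $\E[F]$ via the FTRL inequality and~\eqref{eq:BOBWcondition}, bound $F$ by the SPM Theorem~\ref{thm:CRUB}/\ref{thm:FUB} with $\hat h_{t+1}=h_t$, read the adversarial claim off the $\sqrt{h_1\sum_t z_t}$ branch, and for the self-bounding regime use $\langle\Delta,q_t\rangle\le2\langle\Delta,p_t\rangle$, Definition~\ref{def:ASC}, and Jensen to arrive at $R_T\le c\sqrt{\omega(\Delta)(R_T+C)\log(\epsilon T)+h_1 z_{\max}/\epsilon}+O(\kappa)$. Up to that point you match the paper.

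The one place your plan diverges is the order in which you resolve the self-referential inequality, and that route is more delicate than you acknowledge. You propose to choose $\epsilon$ so as to balance $\omega(\Delta)(R_T+C)\log(\epsilon T)$ against $h_1 z_{\max}/\epsilon$. But this balancing value of $\epsilon$ depends on $R_T$, so the resulting $\log(\epsilon T)$ has $R_T$ inside it, and the ``fixed-point plus bootstrap'' you sketch becomes genuinely circular: to upper-bound $L=\log_+(h_1 z_{\max}T/(\omega(\Delta)(R_T+C)))$ you need a \emph{lower} bound on $R_T+C$, which you do not have a priori. One can repair this with a case split (if $R_T+C\ge\omega(\Delta)$ the bootstrap goes through, else $R_T<\omega(\Delta)\le\omega(\Delta)L'$ trivially), but the paper sidesteps the issue entirely: it first applies the quadratic trick with $\epsilon$ still free, obtaining $R_T = O\bigl(\omega(\Delta)\log(\epsilon T)+\sqrt{C\omega(\Delta)\log(\epsilon T)+h_1 z_{\max}/\epsilon}+\kappa\bigr)$ for every $\epsilon\ge1/T$, and only then plugs in the $R_T$-\emph{independent} choice $\epsilon=h_1 z_{\max}/(\omega(\Delta)^2+C\omega(\Delta))$, which makes the extra $h_1 z_{\max}/\epsilon=\omega(\Delta)^2+C\omega(\Delta)$ term absorb harmlessly. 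Reordering these two steps as the paper does removes the bootstrap and the feasibility bookkeeping around $\epsilon\ge1/T$ that you flag as the main obstacle; I would adopt that order.
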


In the subsections below,
we use the following notation:
\begin{align}
  \kappa
  =
  \frac{z_{\max}}{\beta_1} + \beta_1 h_1
  +
  \betab \hb,
  \quad
  q_{t*}
  =
  \min \left\{
\| q_t \|_{\infty},
    1- 
\| q_t \|_{\infty}
  \right\},
  \quad
  \tilde{q}_{ti}
  =
  \min \left\{
    q_{ti},
    q_{t*}
  \right\}.
\end{align}
In the following,
we demonstrate that using Algorithm~\ref{alg:FTRL},
we can achieve the BOBW regret bounds for multi-armed bandit and linear bandit problems as shown in Table~\ref{tab:SPAS}.
The results for graph bandits and for contextual bandits are described in Appendices~\ref{sec:graph} and \ref{sec:contextual},
respectively.

\subsection{Multi-armed bandit}
In the multi-armed bandit problem,
we assume that $\ell_t \in [0, 1]^{K}$
and that 
the player gets only feedback of the incurred loss of $\ell_{t,I(t)}$.
We set arbitrary $\alpha \in (0, 1)$
and set
\begin{align}
  \label{eq:paramMAB}
  \beta_1 \ge
    \frac{4K}{1-\alpha},
  \quad
  z_t = \frac{1}{1-\alpha} \sum_{i =1}^{K} \tilde{q}_{ti}^{1-\alpha},
  \quad
	\gamma_t = 0,
	\quad
	\hat{\ell}_{ti}
	=
	\frac{\mathbf{1}[I(t) = i]}{p_{ti}} \ell_{ti}.
\end{align}
In addition,
we set 
$\betab \ge 0$ as follows:
\begin{align}
  \label{eq:paramMAB2}
  \alpha \le \frac{1}{2}
  ~
  \Longrightarrow
  ~
  \betab = 0,
  \quad
  \quad
  \alpha > \frac{1}{2}
  ~
  \Longrightarrow
  ~
  \betab
  \ge
    \frac{32K}{(1-\alpha)^2 \beta_1} .
\end{align}
As shown in
Appendix~\ref{sec:ProofMAB},
conditions \eqref{eq:paramMAB} and \eqref{eq:paramMAB2} are 
sufficient conditions for \eqref{eq:BOBWcondition}.
Further,
we can show that
$h_t = - \psi(q_t)$
and
$z_t $ in \eqref{eq:paramMAB}
satisfy
$h_1 z_t \le \frac{K-1}{\alpha (1-\alpha)}$ and
\eqref{eq:htztomega} with
\begin{align}
  \label{eq:defomegaMAB}
  \omega(\Delta)
  =
  \frac{2}{\alpha (1-\alpha)} 
  \left(
    \sum_{i \neq i^*} \Delta_i^{-\frac{ \alpha }{1-\alpha}}
  \right)^{1-\alpha}
  \left(
    \sum_{i \neq i^*} \Delta_i^{-\frac{1- \alpha }{\alpha}}
  \right)^{\alpha}
  \le
  2
  \frac{K-1}{\alpha (1-\alpha) \Delta_{\min}} .
\end{align}
Hence,
from Proposition~\ref{lem:BOBW},
we have the following:
\begin{theorem}
  \label{thm:BOBWMAB}
  For the $K$-armed bandit problem,
  Algorithm~\ref{alg:FTRL} with \eqref{eq:paramMAB} and \eqref{eq:paramMAB2}
  achieves BOBW regret bounds in Proposition~\ref{lem:BOBW} with $h_1 z_{\max} = O\left( \frac{K-1}{\alpha (1-\alpha)} \right)$ and
  $\omega(\Delta)$ given by \eqref{eq:defomegaMAB}.
\end{theorem}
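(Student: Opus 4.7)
The plan is to reduce the theorem to a direct invocation of Proposition~\ref{lem:BOBW}, so it suffices to verify its three hypotheses for the multi-armed bandit instantiation: (A) the generic stability/penalty condition \eqref{eq:BOBWcondition}, (B) an almost-sure upper bound $z_t\le z_{\max}$ with $h_1 z_{\max}=O\bigl(\frac{K-1}{\alpha(1-\alpha)}\bigr)$, and (C) the self-bounding inequality \eqref{eq:htztomega} with $\omega(\Delta)$ equal to \eqref{eq:defomegaMAB}. Hypothesis (A) is explicitly noted in the excerpt to be proved in Appendix~\ref{sec:ProofMAB}, so I take it as given; it is the standard FTRL stability bound that holds precisely because $\beta_1$ and $\betab$ are chosen large enough in \eqref{eq:paramMAB} and \eqref{eq:paramMAB2}. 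Once (A)--(C) are in hand, substituting into Proposition~\ref{lem:BOBW} yields the two regret bounds of Theorem~\ref{thm:BOBWMAB} immediately, and the comparison $\omega(\Delta)\le\frac{2(K-1)}{\alpha(1-\alpha)\Delta_{\min}}$ follows from $\sum_{i\ne i^*}\Delta_i^{-p}\le(K-1)\Delta_{\min}^{-p}$ with $p\in\{\alpha/(1-\alpha),(1-\alpha)/\alpha\}$.

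For (B), I would bound $h_1$ and $z_t$ separately via Jensen on the simplex. First, $\sum_i q_{1i}^\alpha\le K^{1-\alpha}$ gives $h_1=\frac{1}{\alpha}\sum_i(q_{1i}^\alpha-q_{1i})\le\frac{K^{1-\alpha}-1}{\alpha}$. For $z_t$, the truncation $\tilde q_{ti}=\min\{q_{ti},q_{t*}\}$ only modifies the top-mass coordinate $j=\arg\max_i q_{ti}$ when $q_{tj}>1/2$, where it replaces $q_{tj}$ by $1-q_{tj}=\sum_{i\ne j}q_{ti}$; subadditivity of $x\mapsto x^{1-\alpha}$ then yields $\tilde q_{tj}^{1-\alpha}\le\sum_{i\ne j}q_{ti}^{1-\alpha}$, and a symmetric argument covers $q_{tj}\le 1/2$ using $q_{ti^*}\le 1-q_{ti^*}$. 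Either way, $z_t\le\frac{2}{1-\alpha}\sum_{i\ne i^*}q_{ti}^{1-\alpha}\le\frac{2(K-1)^\alpha}{1-\alpha}$ by one more Jensen step, so $h_1 z_{\max}=O\bigl(\frac{K^{1-\alpha}(K-1)^\alpha}{\alpha(1-\alpha)}\bigr)=O\bigl(\frac{K-1}{\alpha(1-\alpha)}\bigr)$, as required.

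The main content is (C). Since $\Delta_{i^*}=0$, the right-hand side of \eqref{eq:htztomega} equals $\omega(\Delta)\langle\Delta,q_t\rangle$ with $\langle\Delta,q_t\rangle=\sum_{i\ne i^*}\Delta_i q_{ti}$. I would first sharpen the entropy bound to $h_t\le\frac{2}{\alpha}\sum_{i\ne i^*}q_{ti}^\alpha$ using $q_{ti}^\alpha-q_{ti}\le q_{ti}^\alpha$ for $i\ne i^*$ together with $q_{ti^*}^\alpha-q_{ti^*}\le 1-q_{ti^*}=\sum_{i\ne i^*}q_{ti}\le\sum_{i\ne i^*}q_{ti}^\alpha$ for the top coordinate. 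Combined with the bound $z_t\lesssim\frac{1}{1-\alpha}\sum_{i\ne i^*}q_{ti}^{1-\alpha}$ from step (B), controlling the product of these two sums by $\langle\Delta,q_t\rangle$ is a pair of Hölder applications: writing $q_{ti}^\alpha=(q_{ti}\Delta_i)^\alpha\Delta_i^{-\alpha}$ with conjugate exponents $(1/\alpha,1/(1-\alpha))$ and symmetrically $q_{ti}^{1-\alpha}=(q_{ti}\Delta_i)^{1-\alpha}\Delta_i^{-(1-\alpha)}$ with $(1/(1-\alpha),1/\alpha)$ gives
\[
\sum_{i\ne i^*}q_{ti}^\alpha\le\langle\Delta,q_t\rangle^\alpha\Bigl(\sum_{i\ne i^*}\Delta_i^{-\alpha/(1-\alpha)}\Bigr)^{1-\alpha},\quad\sum_{i\ne i^*}q_{ti}^{1-\alpha}\le\langle\Delta,q_t\rangle^{1-\alpha}\Bigl(\sum_{i\ne i^*}\Delta_i^{-(1-\alpha)/\alpha}\Bigr)^{\alpha},
\]
and multiplying produces exactly the factor $\langle\Delta,q_t\rangle$ together with the Hölder constants in $\omega(\Delta)$. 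The main obstacle is the isolation of the $i^*$ coordinate in both $h_t$ and $z_t$ via the identity $1-q_{ti^*}=\sum_{i\ne i^*}q_{ti}$ combined with subadditivity of $x\mapsto x^\alpha$ and $x\mapsto x^{1-\alpha}$; once that is handled, everything reduces to Hölder and a substitution into Proposition~\ref{lem:BOBW}.
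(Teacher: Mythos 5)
Your verifications of (B) and (C) track the paper's own route: for (B) you bound $h_1$ and $z_t$ separately via subadditivity/Jensen on the simplex (your $h_1\le\frac{K^{1-\alpha}-1}{\alpha}$ is slightly weaker than the paper's $\frac{(K-1)^{1-\alpha}}{\alpha}$ but gives the same $O(\frac{K-1}{\alpha(1-\alpha)})$), and for (C) you use the same two complementary H\"older applications, factoring $q_{ti}^\alpha=(\Delta_i q_{ti})^\alpha\Delta_i^{-\alpha}$ and $q_{ti}^{1-\alpha}=(\Delta_i q_{ti})^{1-\alpha}\Delta_i^{-(1-\alpha)}$ to produce $\langle\Delta,q_t\rangle$. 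One small flaw: in (C) you loosen $h_t$ to $\frac{2}{\alpha}\sum_{i\ne i^*}q_{ti}^\alpha$ by bounding $q_{ti^*}^\alpha-q_{ti^*}\le\sum_{i\ne i^*}q_{ti}^\alpha$, whereas the paper gets $\frac{1}{\alpha}\sum_{i\ne i^*}q_{ti}^\alpha$ directly from $h_t=\frac{1}{\alpha}(\sum_i q_{ti}^\alpha-1)\le\frac{1}{\alpha}(\sum_i q_{ti}^\alpha-q_{ti^*}^\alpha)$; combined with $z_t\le\frac{2}{1-\alpha}\sum_{i\ne i^*}q_{ti}^{1-\alpha}$ your derivation gives the constant $\frac{4}{\alpha(1-\alpha)}$ rather than the $\frac{2}{\alpha(1-\alpha)}$ appearing in \eqref{eq:defomegaMAB}, so you do not actually land on the stated $\omega(\Delta)$ without the sharper entropy bound. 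This is a small fix, not a structural problem.

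The genuine gap is (A). You declare the verification of \eqref{eq:BOBWcondition} to be the "standard FTRL stability bound" and take it as given, but the proof of Theorem~\ref{thm:BOBWMAB} in the paper \emph{is precisely} this verification, and the non-standard half of it is where all the machinery lives. Condition \eqref{eq:BOBWcondition} has two parts. The stability part ($\E[\langle\hat\ell_t,q_t-q_{t+1}\rangle-\beta_t D(q_{t+1},q_t)\mid\cH_{t-1}]=O(z_t/\beta_t)$) needs Lemma~\ref{lem:Tsallis-stab}, which in turn needs the boundedness check $\hat\ell_{t,\tilde I(t)}\le\frac{(1-\alpha)\beta_t}{4}$ that relies on the lower bound on $\beta_1$ in \eqref{eq:paramMAB}. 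More importantly, the first part, $h_t=O(h_{t-1})$, is not standard at all: it is the entire reason for the hybrid regularizer $\betab\psib$ and the condition \eqref{eq:paramMAB2}. The paper proves it by (i) using the SPM recursion to bound the increment $\beta_{t+1}-\beta_t=\frac{z_t}{\beta_t h_t}\le\frac{4\alpha K q_{t*}^{1-2\alpha}}{\beta_1(1-\alpha)^2}$, (ii) showing via \eqref{eq:paramMAB}/\eqref{eq:paramMAB2} that this increment satisfies the hypothesis \eqref{eq:boundqr1condibeta} of Lemma~\ref{lem:boundqr1}, (iii) invoking Lemmas~\ref{lem:boundqr1} and \ref{lem:boundqr2} to conclude $q_{t+1,i}\le 4q_{ti}$ for all $i$, and (iv) converting this multiplicative control to $-\psi(q_{t+1})=O(-\psi(q_t))$ via Lemma~\ref{lem:psiqr}. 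None of these steps are covered by a generic invocation of "standard FTRL"; step (ii) is where $\alpha\le\frac12$ vs.\ $\alpha>\frac12$ splits and where the $\betab\ge\frac{32K}{(1-\alpha)^2\beta_1}$ condition is actually consumed. Without that argument, you have not shown the hypotheses of Proposition~\ref{lem:BOBW} are met, so the reduction does not yet close.
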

Note that
if $\alpha = 1/2$ then
$\omega(\Delta) = O \left( \sum_{i\neq i^*} 1/\Delta_i \right) $,
which recovers the regret bounds shown by \citet{zimmert2021tsallis,masoudian2021improved}.

\subsection{Linear bandit}
In the \textit{linear bandit} problems,
each arm $i \in [K]$ is associated with a
$d$-dimensional \textit{feature vector} $\phi_i \in \re^d$.
The environment in each round determines a \textit{loss vector} $\theta_t \in \re^d$,
for which the loss $\ell_{ti} \in [-1, 1]$ satisfies
$\E[ \ell_{ti} | \theta_t ] = \linner \theta_t, \phi_i \rinner$.
After choosing an arm $I(t)$,
the player observes only the incurred loss of $\ell_{t, I(t)}$.
Without loss of generality,
we assume that $d \le K$ and that
$\{ \phi_i \}_{i=1}^K $ spans $\re^d$.
For any distribution $p \in \cP(K)$,
denote
\begin{align}
  S( p )
  =
  \sum_{i=1}^K
  p_i \phi_i \phi_i^\top
  =
  \E_{I \sim p} \left[
    \phi_{I}\phi_{I}^\top
  \right] .
\end{align}
Then,
there exists a distribution $p \in \cP(K)$ such that
$\phi_i^{\top} S(p)^{-1} \phi_i \le d$ (see, e.g., \citealp[Theorem 21.1]{lattimore2020bandit}).
We choose
$p_0 \in \cP(K)$ so that
\begin{align}
  \label{eq:defp0linear}
  \phi_i^{\top} S(p_0)^{-1} \phi_i^\top \le c d
  \quad
  (i \in [K])
\end{align}
holds for some $c = O(1)$.
Let $\alpha \ge 1/2$ and set
\begin{align}
  \label{eq:paramLinear}
  \beta_1 \ge
  \frac{8 c d}{1-\alpha},
  ~
  \betab \ge 
  \frac{32 d}{(1-\alpha)^2 \beta_1},
  ~
  z_t = \frac{d q_{t*}^{1-\alpha}}{1-\alpha} ,
  ~
	\gamma_t = \frac{4 c z_t}{\beta_t},
  ~
	\hat{\ell}_{ti}
	=
  \ell_{t, I(t)}
  \phi_{I(t)}^\top
  S(p_t)^{-1} 
  \phi_i .
\end{align}
If $p_0$ satisfies \eqref{eq:defp0linear} and
if parameters are given by
\eqref{eq:paramLinear},
then
\eqref{eq:BOBWcondition} holds.
Further,
$h_t = - \psi(q_t)$
and
$z_t $ in \eqref{eq:paramLinear}
satisfy
$h_1 z_t \le \frac{d}{\alpha (1-\alpha)} K^{1-\alpha}$ and
\eqref{eq:htztomega} with
$\omega(\Delta)$ defined as
\begin{align}
  \label{eq:defomegaLinear}
  \omega(\Delta)
  =
  \frac{d}{\alpha (1-\alpha)} 
  \Delta_{\min}^{\alpha-1}
  \left(
    \sum_{i \neq i^*} \Delta_i^{-\frac{\alpha }{1-\alpha}}
  \right)^{1-\alpha}
  \le
  \frac{dK^{1-\alpha}}{\alpha (1-\alpha) \Delta_{\min}} 
  .
\end{align}
Hence,
Proposition~\ref{lem:BOBW} leads to the following regret bounds:
\begin{theorem}
  \label{thm:BOBWLinear}
  For linear bandit problems of $K$ arms associated with $d$-dimensional vectors,
  Algorithm~\ref{alg:FTRL} with $p_0$ satisfying \eqref{eq:defp0linear} and parameters given by \eqref{eq:paramLinear}
  achieves BOBW regret bounds in Proposition~\ref{lem:BOBW} with 
  $h_1 z_{\max} = O\left(
	\frac{d K^{1-\alpha}}{\alpha (1-\alpha)}
\right)$
  and
  $\omega(\Delta)$ given by \eqref{eq:defomegaLinear}.
\end{theorem}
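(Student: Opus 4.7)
The plan is to verify the three hypotheses of Proposition~\ref{lem:BOBW} in the linear-bandit setting and then invoke it: (i) condition \eqref{eq:BOBWcondition} with the choice $z_t = dq_{t*}^{1-\alpha}/(1-\alpha)$, (ii) the uniform upper bound $h_1 z_{\max} = O(dK^{1-\alpha}/(\alpha(1-\alpha)))$, and (iii) the self-bounding inequality \eqref{eq:htztomega} with $\omega(\Delta)$ as in \eqref{eq:defomegaLinear}. Item (ii) is immediate: $\alpha$-Tsallis entropy is maximized at the uniform distribution, yielding $h_1 \le K^{1-\alpha}/\alpha$, and $q_{t*}\le 1$ gives $z_t \le d/(1-\alpha)$. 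The substance lies in establishing (i) and (iii).

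For the stability half of (i), I start from the standard FTRL per-round bound $\linner \hat\ell_t, q_t-q_{t+1}\rinner - \beta_t D(q_{t+1},q_t) \le (2\beta_t)^{-1}\|\hat\ell_t\|_{(\nabla^2\psi(q_t))^{-1}}^2$ together with $(\nabla^2\psi(q))^{-1} = (1-\alpha)^{-1}\mathrm{diag}(q_i^{2-\alpha})$ for $\alpha$-Tsallis entropy. Taking conditional expectation and using $\E[\hat\ell_{ti}^2\mid\cH_{t-1}]\le \phi_i^\top S(p_t)^{-1}\phi_i$ reduces the task to bounding $\sum_i q_{ti}^{2-\alpha}\phi_i^\top S(p_t)^{-1}\phi_i$. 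I split off the dominant coordinate $i^\circ := \argmax_j q_{tj}$: for $i\neq i^\circ$ I use $q_{ti}^{1-\alpha}\le q_{t*}^{1-\alpha}$ (by definition of $q_{t*}$) together with $S(p_t)\succeq(1-\gamma_t)S(q_t)$ and the identity $\sum_i q_{ti}\phi_i^\top S(q_t)^{-1}\phi_i = d$ to extract an $O(dq_{t*}^{1-\alpha})$ contribution; for $i=i^\circ$ I instead use $S(p_t)\succeq \gamma_t S(p_0)$ combined with \eqref{eq:defp0linear}, so that the contribution is at most $cd\, q_{ti^\circ}^{2-\alpha}/\gamma_t$, which the choice $\gamma_t = 4cz_t/\beta_t$ converts to $O(z_t\beta_t)$, cancelling against the outer $1/\beta_t$. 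The exploration bias $2\gamma_t$ is $O(z_t/\beta_t)$ by construction, and the threshold $\beta_1\ge 8cd/(1-\alpha)$ ensures $\gamma_t\le 1/2$ so the decomposition of $S(p_t)$ is legitimate. The remaining part of (i), namely $h_t = O(h_{t-1})$, is handled by the hybrid regularizer $\betab\psib$ with $\alphab = 1-\alpha$: its Hessian $\betab(1-\alpha)\mathrm{diag}(q_i^{-\alpha})$ provides a uniform curvature floor, and the threshold $\betab\ge 32d/((1-\alpha)^2\beta_1)$ is calibrated so that the optimality conditions for $q_t$ and $q_{t+1}$ force $q_{t+1,i}/q_{t,i}$ into a constant range, from which $h_t = O(h_{t-1})$ follows as in \citet[Appendix C.3]{jin2023improved}; this also justifies replacing the intermediate point $\tilde q$ of the Taylor expansion by $q_t$ in the stability bound above.

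For (iii), I combine two elementary estimates. Splitting $h_t$ at $i^*$ and using concavity $q^\alpha-q\le (1-\alpha)(1-q)$ at $i^*$ together with $q\le q^\alpha$ off $i^*$ and $1-q_{ti^*}=\sum_{i\neq i^*} q_{ti}\le \sum_{i\neq i^*} q_{ti}^\alpha$, I get $h_t \le ((2-\alpha)/\alpha)\sum_{i\neq i^*} q_{ti}^\alpha$. Also $q_{t*}\le 1-q_{ti^*}$ (from the two-case definition of $q_{t*}$), so $z_t\le d(1-q_{ti^*})^{1-\alpha}/(1-\alpha)$. Hölder with exponents $(1/\alpha, 1/(1-\alpha))$ on $\sum_{i\neq i^*} q_{ti}^\alpha = \sum_{i\neq i^*}(\Delta_i q_{ti})^\alpha\Delta_i^{-\alpha}$ gives $\sum_{i\neq i^*} q_{ti}^\alpha \le \linner\Delta,q_t\rinner^\alpha(\sum_{i\neq i^*}\Delta_i^{-\alpha/(1-\alpha)})^{1-\alpha}$, while $(1-q_{ti^*})^{1-\alpha}\le \Delta_{\min}^{\alpha-1}\linner\Delta,q_t\rinner^{1-\alpha}$ via $\sum_{i\neq i^*} q_{ti}\le \Delta_{\min}^{-1}\linner\Delta,q_t\rinner$. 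Multiplying yields \eqref{eq:htztomega} with $\omega(\Delta)$ exactly as in \eqref{eq:defomegaLinear}, and the secondary inequality $\omega(\Delta)\le dK^{1-\alpha}/(\alpha(1-\alpha)\Delta_{\min})$ is the crude bound $\Delta_i\ge \Delta_{\min}$ inside the inner sum. With (i)--(iii) in hand, Proposition~\ref{lem:BOBW} yields both stated regret bounds. The main obstacle will be the stability computation in (i): apportioning $S(p_t)^{-1}$ between the $(1-\gamma_t)S(q_t)$ and $\gamma_t S(p_0)$ components so that the bulk sum and the peak contribution at $i^\circ$ together produce the target $O(dq_{t*}^{1-\alpha})$, with numerical constants consistent with the thresholds on $\beta_1$ and $\betab$ that make $h_t = O(h_{t-1})$ hold simultaneously.
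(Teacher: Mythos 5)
Your verification of the self-bounding condition \eqref{eq:htztomega} and of $h_1 z_{\max}$ is essentially the paper's argument (same H\"older split, slightly different constant in the $h_t$ bound; both fine), and the $h_t=O(h_{t-1})$ step via the hybrid $\betab\psib$ regularizer matches the paper's use of Lemma~\ref{lem:boundqr1}. The genuine gap is in the stability part of \eqref{eq:BOBWcondition}, specifically your treatment of the dominant coordinate $i^\circ$.

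You start from the quadratic bound $\linner\hat\ell_t,q_t-q_{t+1}\rinner-\beta_tD(q_{t+1},q_t)\le (2\beta_t)^{-1}\|\hat\ell_t\|^2_{(\nabla^2\psi(q_t))^{-1}}$, which puts a factor $q_{ti^\circ}^{2-\alpha}$ on the $i^\circ$ term, and then you try to control $\phi_{i^\circ}^\top S(p_t)^{-1}\phi_{i^\circ}$ via $S(p_t)\succeq\gamma_t S(p_0)$ and \eqref{eq:defp0linear}. This does not yield $O(z_t/\beta_t)$. After dividing by $(1-\alpha)\beta_t$ and substituting $\gamma_t = 4cz_t/\beta_t$ and $z_t = dq_{t*}^{1-\alpha}/(1-\alpha)$, the $i^\circ$ contribution is of order $q_{ti^\circ}^{2-\alpha}q_{t*}^{\alpha-1}$, which is bounded below by a constant (indeed it blows up like $q_{t*}^{\alpha-1}$ as $q_{ti^\circ}\to 1$), whereas the target $z_t/\beta_t = dq_{t*}^{1-\alpha}/((1-\alpha)\beta_t)$ tends to $0$ as $\beta_t$ grows and as $q_{t*}\to 0$. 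The claim that $cd\,q_{ti^\circ}^{2-\alpha}/\gamma_t$ ``converts to $O(z_t\beta_t)$'' is false: it equals $dq_{ti^\circ}^{2-\alpha}\beta_t/(4z_t)$, which is $\Theta(\beta_t/z_t)$, not $O(z_t\beta_t)$.

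The fix is exactly the paper's Lemma~\ref{lem:Tsallis-stab2} (built from Lemmas~\ref{lem:tsallis-onedim} and~\ref{lem:Tsallis-stab}), which is not a reparametrization of the generic quadratic bound but a refinement that exploits the simplex constraint: the deviation $q_{i^\circ}-p_{i^\circ}$ equals the negative of the off-$i^\circ$ deviations, so the off-$i^\circ$ Bregman penalty absorbs the $i^\circ$ stability, yielding a factor $\tilde q_{ti^\circ}^{2-\alpha}=q_{t*}^{2-\alpha}$ instead of $q_{ti^\circ}^{2-\alpha}$. With this, $\tilde q_{ti}^{2-\alpha}\le q_{ti}q_{t*}^{1-\alpha}$ holds uniformly in $i$, and the trace identity $\sum_i q_{ti}\phi_i^\top S(q_t)^{-1}\phi_i = d$ (together with $S(p_t)\succeq\tfrac12 S(q_t)$) gives $O(dq_{t*}^{1-\alpha}/\beta_t)=O(z_t/\beta_t)$ cleanly. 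In the paper, the exploration $p_0$ and condition \eqref{eq:defp0linear} serve a different purpose than you assign them: they ensure $|\hat\ell_{ti}|\le cd/\gamma_t\le \tfrac{1-\alpha}{4}q_{t*}^{\alpha-1}\beta_t$, which is the hypothesis of Lemma~\ref{lem:Tsallis-stab2}, rather than directly capping the $i^\circ$ contribution.
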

Note that we obtain
$\frac{dK^{1-\alpha}}{\alpha (1-\alpha)} = O (d \log K)$
by setting
$\alpha = 1 - \frac{1}{4 \log K}$,
which recovers the regret upper bound by \citet[Corollary 12]{dann2023blackbox}.

\bibliography{reference.bib}
\bibliographystyle{plainnat}

\newpage
\appendix

\section{Additional Related Work}
\label{sec:related}
\paragraph{Online Learning using Tsallis entropy}
To the best of our knowledge, the use of Tsallis entropy in online learning is first considered by~\citet{audibert2009minimax,abernethy15fighting}, 
in which they showed that FTRL with Tsallis entropy can achieve an $O(\sqrt{kT})$ regret in multi-armed bandits.

After that Tsallis entropy has been employed in many online decision-making problems:
FTRL with Tsallis entropy of exponent $\alpha = 1 - 1/\log(k/s)$, was used to exploit the sparsity of losses in multi-armed bandits~\citep{kwon16gains},
and FTRL with $(1 - 1 / \log k)$-Tsallis entropy was used to obtain an improved regret bound in the strongly observable graph bandit problem~\citep{zimmert19connections}.

The most relevant studies to this paper are ones aimed at constructing BOBW algorithms using FTRL with Tsallis entropy.
\citet{zimmert2021tsallis} showed for the first time that FTRL with $1/2$-Tsallis entropy can achieve a nearly optimal logarithmic regret, 
whose regret bound in stochastic regimes with adversarial corruptions is later improved by~\citet{masoudian2021improved}.
FTRL with $1/2$-Tsallis entropy was also proven to be powerful in combinatorial semi-bandits~\citep{zimmert2019beating}, in the delayed feedback setting, where the loss of the selected action is observed after a delay~\citep{zimmert20delays,masoudian2022best}, 
in multi-armed bandits with switching costs, where the learner needs to pay a cost when changing their actions~\citep{rouyer2021algorithm,amir2022better},
dueling bandits~\citep{saha2022versatile}, and MDPs~\citep{jin2020simultaneously,jin2021best}.

In addition to these applications, it is known that in the decoupling setting, where different actions can be chosen for exploration and exploitation, FTRL with $2/3$-Tsallis entropy can achieve a constant regret bound~\citep{rouyer20tsallis}.
Interestingly, even in the setting of heavy-tailed multi-armed bandits, 
where the $n$-th moment of loss is bounded by $\sigma^n$ for some $\sigma > 0$,
FTRL with Tsallis entropy with exponent $1/n$ can achieve a logarithmic regret~\citep{huang22adaptive}.
Furthermore, for the weakly observable setting in graph bandits and for the globally observable setting in partial monitoring, whose minimax regrets are $\Theta(T^{2/3})$, 
FTRL with $1/2$-Tsallis entropy and the complement version of Tsallis entropy play key roles in achieving BOBW guarantees~\citep{ito2022nearly,tsuchiya2023best}.

\paragraph{Adaptive Learning Rate}
Using an adaptive learning rate is one of the most common ways to design algorithms with a desired adaptivity.
In the literature, it has been standard to determine the adaptive learning rate by relying on the stability component in~\eqref{eq:boundRT} observed so far.
Typical examples are AdaGrad~\citep{mcmahan10adaptive,duchi11adaptive} in online convex optimization and its closely related algorithms that can achieve the first-order bounds~\citep{allenberg2006hannan,abernethy2012interior,wei2018more}
and the second-order bounds~\citep{cesa2007improved,erven2011adaptive,de2014follow,gaillard2014second,orabona2015scale,ito2022nearly,olkhovskaya2023first}

In contrast, some very recent studies improve the adaptivity of algorithms by designing an adaptive learning rate depending on the penalty component in~\eqref{eq:boundRT}, instead of the stability term.
To our knowledge, \citet{ito2022nearly} is the first attempt for such a design, where 
the authors aimed at constructing BOBW algorithms.
A natural question that arises here is whether we can construct an adaptive learning rate that depends on both the stability and penalty terms.

The stability-penalty-adaptive (SPA) learning rate is the first adaptive learning rate that can achieve such simultaneous adaptivity~\citep{tsuchiya2023stability}.
With the SPA learning rate, they proved that~\eqref{eq:boundRT} is roughly bounded by $O(\sqrt{ \sum_{t=1}^T z_t h_{t+1} \log T})$.
A comparison of the SPA learning rate and the SPM learning rate is discussed in the following.

\paragraph{Comparison of SPM learning rate against SPA learning rate}
There are several issues in the existing adaptive learning rate that depend on the penalty term.
The biggest issue is that their regret upper bounds in the adversarial regime (or in the worst-case) have extra $O(\sqrt{ \log T})$ factors, which is due to the loose analysis or the ``ad-hoc'' learning rate designs.
Although the SPA learning rate is designed in a generic form so that it can be used for generic regularizers, the authors focus only on the Shannon entropy, not investigating the use of Tsallis entropy. 
As mentioned earlier, Tsallis entropy has been proven to be powerful in many BOBW algorithms,
and our adaptive learning rate framework could be used for a wide range of online decision-making problems besides those presented in the paper.

At a high level, this study provides a non-ad-hoc, theoretically grounded adaptive learning rate design principle 
by rethinking the design of adaptive learning rate from the standpoint of competitive analysis. 
Consequently, we succeeded in constructing nearly optimal BOBW algorithms, totally removing the suboptimality caused by the existing ad-hoc design of adaptive learning rates.


\section{Lower Bound on the Competitive Ratio}
This section provides a proof on Theorem~\ref{thm:CRLB},
which provide a lower bound on the competitive ratio.
Note here that
we use the notation $\beta_t = 1/\eta_t$ as an alternative to $\eta_t$,
as introduced in Section~\ref{sec:setup},
in our discussion below.
\quad\\
\textbf{Proof of Theorem~\ref{thm:CRLB}} \quad
	Consider two problem instances $(z_{1:T}, h_{1:T})$ and $(z'_{1:T}, h_{1:T})$ defined as follows:
	$z_1 = z'_1 = 1$,
	$h_1 = 1$,
	and
	$z_t = 0$,
	$z'_t = 1$,
	$h_t = \xi$
	for $t=2, \ldots, T$.
	We then have
	\begin{align}
		\opt( z_{1:T}, h_{1:T} )
		=
		2,
		\quad
		\opt( z'_{1:T}, h_{1:T} )
		\le
		\min_{\beta_1}
		\left\{
		\frac{T}{\beta_1}
		+
		\beta_1
		\right\}
		=
		2\sqrt{T}.
	\end{align}
	For a policy $\pi$,
	denote
	$\beta_1 = \pi_1(z_1, h_1)=\pi_1(z'_1, h_1) = \pi_1(1, 1)$.
	We then have
	\begin{align}
		\nonumber
		F^\pi( z_{1:T}, h_{1:T} )
		&
		\ge
		\frac{1}{\beta_1}
		+
		\beta_1,
		\\
		F^\pi( z'_{1:T}, h_{1:T} )
		&
		\ge
		\min_{\beta_T \ge \beta_1}
		\left\{
		\frac{1}{\beta_1}
		+
		\beta_1
		+
		\frac{T-1}{\beta_T}
		+
		(\beta_T - \beta_1) \xi
		\right\}
		\nonumber
		\\
		&
		\ge
		\frac{1}{\beta_1} + \beta_1
		+
		2 \sqrt{\xi (T-1)} - \xi \beta_1 
		\ge 
		2 \sqrt{\xi (T-1)} - \xi \beta_1 
		.
	\end{align}
	We hence have
	\begin{align}
		\nonumber
		&
		\inf_{\pi}
		\sup_{z_{1:T} \in \re_{\ge 0}^T, h_{1:T} \in H_{\xi}^T }
		\CR(\pi; z_{1:T}, h_{1:T})
		\ge
		\inf_{\pi}
		\max
		\left\{
			\frac{
				F^{\pi}(z_{1:T}, h_{1:T})
			}{
				\opt (z_{1:T}, h_{1:T})
			},
			\frac{
				F^{\pi}(z'_{1:T}, h_{1:T})
			}{
				\opt (z'_{1:T}, h_{1:T})
			}
		\right\}
		\\
		\nonumber
		&
		\ge
		\inf_{\beta > 0}
		\max\left\{
			\frac{1}{2}
			\left(
				\frac{1}{\beta} + \beta
			\right),
			\sqrt{\frac{\xi (T-1)}{T}}
			-
			\frac{ \xi \beta}{2\sqrt{T}}
		\right\}
		\\
		&
		\ge
		\inf_{\beta > 0}
		\max\left\{
			\frac{\beta}{2},
			\sqrt{\frac{\xi (T-1)}{T}}
			-
			\frac{\xi \beta}{2\sqrt{T}}
		\right\}
		=
		\frac{\sqrt{T}}{\sqrt{T} + \xi}
		\sqrt{\frac{\xi (T-1)}{T}}
		=
		\frac{\sqrt{T-1}}{\sqrt{T} + \xi} \sqrt{\xi}.
	\end{align}
$\qed$

\section{Omitted Proofs in Sections~\ref{sec:setup} and \ref{sec:UB}}

\subsection{Proof of Lemma~\ref{lem:apxmonotonicity}}
\begin{proof}
	Let $\tilde{h}_t = \min_{s \in [t]} h_s$.
	Then it is clear that
	$\tilde{h}_t \le h_t$ and $\tilde{h}_{t+1} \le \tilde{h}_{t}$.
	Further,
	it follows from the assumption of $h_{1:T} \in H_{\xi}^T$
	and the definition of $H_{\xi}^T$ in \eqref{eq:defHxiT}
	that
	\begin{align}
		\xi \tilde{h}_t
		=
		\min \left\{ \xi h_t, \min_{s < t} \{ \xi  h_{s} \} \right\}
		\ge
		\min \left\{ \xi h_t, h_t  \right\}
		=
		h_t ,
	\end{align}
	which completes the proof.
\end{proof}



\subsection{Proof of Lemma~\ref{lem:FG}}
\begin{proof}
	We first consider the case in which $\beta_t$ is given by \eqref{eq:stability-known}.
	We then have
	\begin{align}
		\label{eq:FG1}
		F( \beta_{1:T}; z_{1:T}, h_{1:T} )
		=
		\sum_{t=1}^T
		\left(
		\frac{z_t}{\beta_{t}}
		+
		(\beta_{t}-\beta_{t-1}) h_t
		\right)
		\le
		\sum_{t=1}^T
		\left(
		\frac{z_t}{\beta_{t}}
		+
		(\beta_{t}-\beta_{t-1}) \hat{h}_t
		\right)
		=
		2
		\sum_{t=1}^T
		\frac{z_t}{\beta_{t}}.
	\end{align}
	Further,
	it follows from \eqref{eq:stability-known} that
	\begin{align}
		\label{eq:FG2}
		\beta_t^2 = \beta_t \beta_{t-1} + \frac{z_t}{\hat{h}_t}
		\ge
		\beta_{t-1}^2 + 
		\frac{z_t}{\hat{h}_t}
		=
		\sum_{s=1}^t 
		\frac{z_s}{\hat{h}_s}.
	\end{align}
	By combining \eqref{eq:FG1} and \eqref{eq:FG2},
	we obtain $F(\beta_{1:T}; z_{1:T}, h_{1:T}) \le 2 G(z_{1:T}, \hat{h}_{1:T})$.

	We next consider the case of \eqref{eq:stability-agnostic}.
	We then have
	\begin{align}
		\nonumber
		F( \beta_{1:T}; z_{1:T}, h_{1:T} )
		&
		\le
		\frac{z_1}{\beta_{1}}
		+
		\beta_{1} {h}_1
		+
		\sum_{t=2}^T
		\left(
		\frac{z_t}{\beta_{t}}
		+
		(\beta_{t}-\beta_{t-1}) \hat{h}_t
		\right)
		\\
		&
		\nonumber
		=
		\frac{z_1}{\beta_{1}}
		+
		\beta_{1} {h}_1
		+
		\sum_{t=2}^T
		\left(
		\frac{z_t}{\beta_{t}}
		+
		\frac{z_{t-1}}{\beta_{t-1}}
		\right)
		\\
		&
		\le
		\beta_{1} {h}_1
		+
		2
		\sum_{t=1}^T
		\frac{z_t}{\beta_{t}}.
		\label{eq:FG3}
	\end{align}
	Further,
	for any $t\ge 2$,
	it follows from \eqref{eq:stability-known} that
	\begin{align}
		\label{eq:FG4}
		\beta_t^2 = \beta_t^2 + 2 \frac{z_{t-1}}{\hat{h}_t} + \left( \frac{z_{t-1}^2}{\beta_{t-1} \hat{h}_t} \right)^2
		\ge
		\beta_{t-1}^2 + 
		2
		\frac{z_{t-1}}{\hat{h}_t}
		=
		\beta_1^2
		+
		2
		\sum_{s=2}^t 
		\frac{z_{s-1}}{\hat{h}_s},
	\end{align}
	which implies that $\beta'_t := \sqrt{ \beta_1^2 +   2 \sum_{s=2}^t \frac{z_{s-1}}{\hat{h}_s}}$
	satisfies $\beta'_t \le \beta_t$.
	Denote
	$\mathcal{T} = \{ t \in [T] \mid \beta_{t+1}' \le \sqrt{2} \beta_t' \}$
	and $\mathcal{T}^c = [T] \setminus \mathcal{T} = \{t \in [T] \mid \beta_{t+1}' > \sqrt{2} \beta_{t}' \}$.
	We then have
	\begin{align*}
		\sum_{t=1}^{T}
		\frac{z_t}{\beta_t}
		&
		\le
		\sum_{t=1}^{T}
		\frac{z_t}{\beta_t'}
		=
		\sum_{t \in \mathcal{T}}
		\frac{z_t}{\beta_t'}
		+
		\sum_{t \in \mathcal{T}^c}
		\frac{z_t}{\beta_t'}
		\\
		&
		\le
		\sqrt{2}
		\sum_{t \in \mathcal{T}}
		\frac{z_t}{\beta_{t+1}'}
		+
		\sum_{t \in \mathcal{T}^c}
		\frac{z_{\max}}{\beta_t'}
		\\
		&
		\le
		\sqrt{2}
		\sum_{t \in \mathcal{T}}
		\frac{z_t}{\beta_{t+1}'}
		+
		\sum_{s=0}^{\infty}
		\left(\frac{1}{\sqrt{2}}\right)^s
		\frac{z_{\max}}{ \beta_1}
		\\
		&
		\le
		\sqrt{2}
		\sum_{t \in \mathcal{T}}
		\frac{z_t}{\beta_{t+1}'}
		+
		\frac{1}{1 - 1/\sqrt{2}}
		\frac{z_{\max}}{\beta_1}
		\le
		G(z_{1:T}, \hat{h}_{2:T+1})
		+
		(2+\sqrt{2})
		\frac{z_{\max}}{\beta_1}.
	\end{align*}
	Combining this with \eqref{eq:FG3},
	we obtain 
	$F(\beta_{1:T}; z_{1:T}, h_{1:T}) \le 2 G(z_{1:T}, \hat{h}_{2:T+1}) + 7 \frac{z_{\max}}{\beta_1} + \beta_1 h_1$.
\end{proof}

\subsection{Proof of Lemma~\ref{lem:boundG}}
\begin{proof}
	Inequality of the lemma can be shown as follows:
	\begin{align*}
		G(z_{1:T}, h_{1:T})
		&
		=
		\sum_{j=1}^{J+1}
		\sum_{t \in \mathcal{T}_j}
		\left(\sum_{s=1}^t \frac{z_s}{h_{s}} \right)^{-1/2} z_t 
		\le
		\sum_{j=1}^{J+1}
		\sum_{t \in \mathcal{T}_j}
		\left(\sum_{s \in \mathcal{T}_j \cap [t]} \frac{z_s}{h_{s}} \right)^{-1/2} z_t 
		\\
		&
		\le
		\sum_{j=1}^{J+1}
		\sum_{t \in \mathcal{T}_j}
		\left(\sum_{s \in \mathcal{T}_j \cap [t]} \frac{z_s}{\theta_{j-1}} \right)^{-1/2} z_t 
		=
		\sum_{j=1}^{J+1}
		\sqrt{\theta_{j-1}}
		\sum_{t \in \mathcal{T}_j}
		\frac{z_t}{\sqrt{ \sum_{s \in \mathcal{T}_j \cap [t] } z_s}}
		\\
		&
		\le
		2
		\sum_{j=1}^{J+1}
		\sqrt{\theta_{j-1}}
		\sum_{t \in \mathcal{T}_j}
		\frac{z_t}{
			\sqrt{ \sum_{s \in \mathcal{T}_j \cap [t] } z_s}
			+
			\sqrt{ \sum_{s \in \mathcal{T}_j \cap [t-1] } z_s}
		}
		\\
		&
		\le
		2
		\sum_{j=1}^{J+1}
		\sqrt{\theta_{j-1}}
		\sum_{t \in \mathcal{T}_j}
		\left(
			\sqrt{ \sum_{s \in \mathcal{T}_j \cap [t] } z_s}
			-
			\sqrt{ \sum_{s \in \mathcal{T}_j \cap [t-1] } z_s}
		\right)
		\le
		2
		\sum_{j=1}^{J+1}
		\sqrt{\theta_{j-1} \sum_{t \in \mathcal{T}_j} z_t}.
	\end{align*}
	By setting $J=0$ and $\theta_0 = h_{\max}$,
	we obtain
	\begin{align}
		G(z_{1:T}, h_{1:T})
		\le
		2 \sqrt{ h_{\max} \sum_{t=1}^T z_t}.
	\end{align}
	By setting 
	$\theta_j = 2^{-j} h_{\max}$ for $j = 0,1, \ldots, J$,
	we have
	\begin{align}
		\nonumber
		G(z_{1:T}, h_{1:T})
		&
		\le
		2
		\sum_{j=1}^{J+1}
		\sqrt{\theta_{j-1} \sum_{t \in \mathcal{T}_j} z_t}
		\le
		2
		\sum_{j=1}^{J}
		\sqrt{\frac{\theta_{j-1}}{\theta_{j}} \sum_{t \in \mathcal{T}_j} h_t z_t}
		+
		2\sqrt{{\theta_{J}} \sum_{t \in \mathcal{T}_J} z_t}
		\\
		\nonumber
		&
		=
		2
		\sum_{j=1}^{J}
		\sqrt{2 \sum_{t \in \mathcal{T}_j} h_t z_t}
		+
		2 \sqrt{ 2^{-J} h_{\max} \sum_{t \in \mathcal{T}_J} z_t}
		\\
		\nonumber
		&
		\le
		2 \sqrt{2 J \sum_{j=1}^J \sum_{t \in \mathcal{T}_j} h_t z_t}
		+
		2 \sqrt{ 2^{-J} h_{\max} 
		\sum_{t \in \mathcal{T}_J} z_t}
		\\
		&
		\le
		\sqrt{8 J \sum_{t = 1}^T h_t z_t}
		+
		2 \sqrt{ 2^{-J}h_{\max} z_{\max}T},
	\end{align}
	where the second inequality follows from $h_t > \theta_j$ for $j \in \mathcal{T}_j$
	and third inequality can be shown from the Cauchy-Schwarz inequality.
\end{proof}

\subsection{Proof of Lemma~\ref{lem:LBopt}}
\begin{proof}
	Define
	$\tau(j) = \max \{ t \in [T] \mid h_t > \theta_{j} \} $
	for $j=1,2,\ldots,J$
	and set $\tau(0) = 0$ and $\tau(J+1) = T$.
	We then have
	$
		\mathcal{T}_j = \{ \tau(j-1)+1, \ldots, \tau(j)  \}
	$
	for $j=1,2,\ldots, J+1$.
	For any non-decreasing sequence $\beta_{1:T} \in \re_{>0}^T$,
	we have
	\begin{align}
		\nonumber
		F \left( 
			\beta_{1:T}; z_{1:T}, h_{1:T}
		\right)
		&
		=
		\sum_{t=1}^{T}
		\left(
			\frac{z_t}{\beta_t}
			+
			(\beta_t - \beta_{t-1}) h_t
		\right)
		\ge
		\sum_{j=1}^{J}
		\sum_{t=\tau(j-1)+1}^{\tau(j)}
		\left(
			\frac{z_t}{\beta_t}
			+
			(\beta_t - \beta_{t-1}) h_t
		\right)
		\\
		\nonumber
		&
		\ge
		\sum_{j=1}^{J}
		\sum_{t=\tau(j-1)+1}^{\tau(j)}
		\left(
			\frac{z_t}{\beta_{\tau(j)}}
			+
			(\beta_t - \beta_{t-1}) \theta_{j}
		\right)
		\\
		\nonumber
		&
		=
		\sum_{j=1}^{J}
		\left(
			\frac{1}{\beta_{\tau(j)}}
			\sum_{t=\tau(j-1)+1}^{\tau(j)} z_t
			+
			\left(\beta_{\tau(j)} - \beta_{\tau(j-1)}\right) \theta_{j}
		\right)
		\\
		\nonumber
		&
		\ge
		\sum_{j=1}^{J}
		\left(
			\frac{1}{\beta_{\tau(j)}}
			\sum_{t=\tau(j-1)+1}^{\tau(j)} z_t
			+
			\beta_{\tau(j)} \left( \theta_{j} - \theta_{j+1} \right)
		\right)
		+ \beta_{\tau(J)} \theta_{J+1}
		- \beta_{\tau(0)} \theta_{1}
		\\
		&
		\ge
		2
		\sum_{j=1}^{J}
		\sqrt{
			\left( \theta_{j} - \theta_{j+1} \right)
			\sum_{t=\tau(j-1)+1}^{\tau(j)} z_t
		}
		=
		2
		\sum_{j=1}^{J}
		\sqrt{
			\left( \theta_{j} - \theta_{j+1} \right)
			\sum_{t \in \mathcal{T}_j} z_t
		},
	\end{align}
	where the last inequality follows from the AM-GM inequality and the fact that $\beta_{\tau(0)} = \beta_0 = 0$.
\end{proof}

\subsection{Proof of Lemma~\ref{lem:Gopt}}
\begin{proof}
	We first suppose that $h_{1:T}$ is monotone non-increasing.
	Then,
	from Lemma~\ref{lem:LBopt} with $\theta_j = h_{\max} 2^{-j}$,
	we have
	\begin{align}
		\nonumber
		\opt(z_{1:T},{h}_{1:T})
		&
		\ge
		2
		\sum_{j=1}^\infty \sqrt{ (\theta_{j} - \theta_{j+1}) \sum_{t \in \mathcal{T}_j} z_t } 
		=
		2
		\sum_{j=1}^\infty \sqrt{ (2^{-1} \theta_{j-1} - 2^{-2} \theta_{j-1}) \sum_{t \in \mathcal{T}_j} z_t } 
		\\
		&
		\ge
		\sum_{j=1}^\infty \sqrt{ \theta_{j-1} \sum_{t \in \mathcal{T}_j} z_t } 
		=
		H(z_{1:T}, {h}_{1:T}).
	\end{align}
	Further,
	as from Lemma~\ref{lem:boundG} implies
	$ G(z_{1:T}, h_{1:T}) \le 2 H(z_{1:T}, h_{1:T}) $,
	we have
	$G(z_{1:T}, h_{1:T}) \le 2 \opt(z_{1:T}, h_{1:T})$
	for non-increasing sequence $h_{1:T}$.

	We next consider the case in which $h_{1:T}$ is $\alpha$-approximately non-increasing.
	Define $\tilde{h}_t = \min_{s \in [t]}h_t$.
	Then $\tilde{h}_{1:T}$ is monotone non-increasing and
	it holds for any $t$ that
	$\tilde{h}_t \le h_{t} \le \alpha \tilde{h}_t$.
	We hence have
	\begin{align}
		G(z_{1:T}, h_{1:T})
		\le
		G(z_{1:T}, \alpha \tilde{h}_{1:T})
		\le
		2 \opt (z_{1:T}, \alpha \tilde{h}_{1:T})
		=
		2 \sqrt{\alpha} \opt (z_{1:T}, \tilde{h}_{1:T})
		\le
		2 \sqrt{\alpha} \opt (z_{1:T}, {h}_{1:T}),
	\end{align}
	which complete the proof.
\end{proof}

\subsection{Proof of Lemma~\ref{lem:HH4}}
\begin{proof}
	Denote $\tau(j) = \max\{t \in [T] \mid h_{t} > \theta_j \}$ and $\tau'(j) = \max\{t \in [T] \mid h_{t+1} > \theta_j \}$ for $j\ge 1$
	and $\tau(0) = \tau'(0) = 0$.
	We then have $\tau'(j) \le \tau(j) \le \tau'(j) + 1$.
	We hence have
	\begin{align}
		\nonumber
		H(z_{1:T}, h_{1:T})
		&
		=
		\sum_{j=1}^{\infty}
		\sqrt{
			\theta_{j-1}
			\sum_{t=\tau(j-1)+1}^{\tau(j)} z_t
		}
		\le
		\sum_{j=1}^{\infty}
		\sqrt{
			\theta_{j-1}
			\sum_{t=\tau'(j-1)+1}^{\tau'(j)+1} z_t
		}
		\\
		\nonumber
		&
		\le
		\sum_{j=1}^{\infty}
		\left(
		\sqrt{
			\theta_{j-1}
			\sum_{t=\tau'(j-1)+1}^{\tau'(j)} z_t
		}
		+
		\sqrt{\theta_j z_{\max}}
		\right)
		\\
		\nonumber
		&
		=
		H(z_{1:T}, h_{2:T+1})
		+
		\sqrt{h_{\max} z_{\max}}
		\sum_{j=1}^\infty \sqrt{2^{1-j}}
		\\
		&
		\le
		H(z_{1:T}, h_{2:T+1})
		+
		4
		\sqrt{h_{\max} z_{\max}}.
	\end{align}
\end{proof}

\subsection{Proof of Theorem~\ref{thm:FUB}}
\label{sec:ProofThmFUB}
\begin{proof}
	Bounds on $F$ of
	\eqref{eq:Fbound1} and \eqref{eq:Fbound2}
	immediately follow from Lemmas~\ref{lem:FG} and \ref{lem:boundG}.
	In the following,
	we show bounds that depend on $\opt$.
	Suppose $h_{1:T} \in H_{\xi}^{T}$.
	Then,
	$\tilde{h}_t := \min_{s \in [t] } h_{s}$ satisfies
	$\tilde{h}_{t} \le h_t 
	\le \xi \tilde{h}_t 
	\le \xi \tilde{h}_{t-1} 
	$ and
	$\tilde{h}_{1:T} \in H_{1}^T$,
	i.e.,
	$\tilde{h}_{t} \ge \tilde{h}_{t+1} $.
	Hence,
	if $\beta_{1:T}$ is given by \eqref{eq:stability-known} with $\hat{h}_t = h_t$,
	we have
	\begin{align}
		F(\beta_{1:T}; z_{1:T}, h_{1:T})
		\le
		2 G(z_{1:T}, h_{1:T})
		\le
		4 \sqrt{\xi} \opt (z_{1:T}, h_{1:T}),
	\end{align}
	where the first and second inequalities follow from Lemmas~\ref{lem:FG} and \ref{lem:Gopt},
	respectively.
	If $\beta_{1:T}$ is given by \eqref{eq:stability-known} with \eqref{eq:defhath}
	we then have
	\begin{align*}
		F(\beta_{1:T}; z_{1:T}, h_{1:T})
		&
		\le
		2 G(z_{1:T}, \hat{h}_{1:T})
		&
		\mbox{(Lemma~\ref{lem:FG})}
		\\
		&
		=
		2 G(z_{1:T}, \xi \tilde{h}_{0:T-1})
		=
		2\sqrt{\xi} G(z_{1:T}, \tilde{h}_{0:T-1})
		&
		\mbox{(Definitions of $\tilde{h}_t$ and $G$ \eqref{eq:defG})}
		\\
		&
		\le
		4 \sqrt{\xi} H(z_{1:T}, \tilde{h}_{0:T-1})
		&
		\mbox{(Lemma~\ref{lem:boundG})}
		\\
		&
		\le
		4 \sqrt{\xi} 
		\left(
			H(z_{1:T}, \tilde{h}_{1:T})
			+
			4 \sqrt{h_{\max} z_{\max}}
		\right)
		&
		\mbox{(Lemma~\ref{lem:HH4})}
		\\
		&
		\le
		4 \sqrt{\xi} 
		\left(
			\opt (z_{1:T}, \tilde{h}_{1:T})
			+
			4 \sqrt{h_{\max} z_{\max}}
		\right)
		&
		\mbox{(Lemma~\ref{lem:LBopt})}
		\\
		&
		\le
		4 \sqrt{\xi} 
		\left(
			\opt (z_{1:T}, {h}_{1:T})
			+
			4 \sqrt{h_{\max} z_{\max}}
		\right).
		&
		\mbox{(Definition of $\opt$ and $\tilde{h}_t \le h_t$)}
	\end{align*}
	We next consider the case in which $\beta_t$ is given by \eqref{eq:stability-agnostic}.
	Denote
	$
	\kappa = \frac{z_{\max}}{\beta_1} + \beta_{1}h_1
	$.
	If $\beta_t$ is given by \eqref{eq:stability-agnostic} with $\hat{h}_t = h_t$,
	we have
	\begin{align*}
		F(\beta_{1:T}; z_{1:T}, h_{1:T})
		&
		\le
		2 G(z_{1:T}, h_{2:T+1})
		+
		O \left(
			\kappa
		\right)
		&
		\mbox{(Lemma~\ref{lem:FG})}
		\\
		&
		\le
		2 G(z_{1:T}, \xi \tilde{h}_{1:T})
		+
		O \left(
			\kappa
		\right)
		&
		\mbox{($h_t \le \xi \tilde{h}_{t-1}$)}
		\\
		&
		=
		2 \sqrt{\xi} G(z_{1:T}, \tilde{h}_{1:T})
		+
		O \left(
			\kappa
		\right)
		&
		\mbox{(Definition~\eqref{eq:defG} of $G$)}
		\\
		&
		\le
		2 \sqrt{\xi} F^*(z_{1:T}, \tilde{h}_{1:T})
		+
		O \left(
			\kappa
		\right)
		&
		\mbox{(Lemmas~\ref{lem:boundG} and \ref{lem:LBopt})}
		\\
		&
		\le
		2 \sqrt{\xi} F^*(z_{1:T}, {h}_{1:T})
		+
		O \left(
			\kappa
		\right).
		&
		\mbox{(Definition of $\opt$ and $\tilde{h}_t \le h_t$)}
	\end{align*}
	If $\beta_{1:T}$ is given by \eqref{eq:stability-agnostic} with \eqref{eq:defhath},
	we have
	\begin{align*}
		F(\beta_{1:T}; z_{1:T}, h_{1:T})
		&
		\le
		2 G(z_{1:T}, \hat{h}_{2:T+1})
		+
		O \left(
			\kappa
		\right)
		&
		\mbox{(Lemma~\ref{lem:FG})}
		\\
		&
		\le
		2 G(z_{1:T}, \xi \tilde{h}_{1:T})
		+
		O \left(
			\kappa
		\right)
		&
		\mbox{(Definition of $\hat{h}_t$)}
		\\
		&
		=
		2 \sqrt{\xi} G(z_{1:T}, \tilde{h}_{1:T})
		+
		O \left(
			\kappa
		\right)
		&
		\mbox{(Definition~\eqref{eq:defG} of $G$)}
		\\
		&
		\le
		2 \sqrt{\xi} F^*(z_{1:T}, \tilde{h}_{1:T})
		+
		O \left(
			\kappa
		\right)
		&
		\mbox{(Lemmas~\ref{lem:boundG} and \ref{lem:LBopt})}
		\\
		&
		\le
		2 \sqrt{\xi} F^*(z_{1:T}, {h}_{1:T})
		+
		O \left(
			\kappa
		\right).
		&
		\mbox{(Definition of $\opt$ and $\tilde{h}_t \le h_t$)}
	\end{align*}
\end{proof}

\section{Analysis for Algorithm~\ref{alg:FTRL}: FTRL with SPM Learning Rates}
\subsection{Facts on FTRL}
\begin{lemma}
	\label{lem:FTRL}
	Suppose $q_t$ is given by \eqref{eq:defFTRL}.
	Then,
	it holds for any $p^* \in \cP(K)$ that
	\begin{align*}
		&
		\sum_{t=1}^T \linner \hat{\ell}_t, q_t - p^* \rinner
		\\
		&
		\le
		\sum_{t=1}^T
		\left(
			\linner \hat{\ell}_t,
			q_t - q_{t+1}
			\rinner
			-
			\beta_{t} 
			D( q_{t+1}, q_t )
			+
			(\beta_t - \beta_{t-1}) (\psi(p^*) - \psi(q_t))
		\right)
		+
		\psib(p^*)
		-
		\psib(q_1),
	\end{align*}
	where $D(p, q)$ denotes the Bregman divergence associated with $\psi$.
\end{lemma}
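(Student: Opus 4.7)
The plan is to run the classical time-varying-regulariser FTRL analysis, treating $\bar\beta\bar\psi$ as a fixed background regulariser that contributes only a one-shot boundary correction. I would first split the regret into a stability and a be-the-leader piece:
\[
\sum_{t=1}^T \linner \hat\ell_t, q_t - p^*\rinner
= \sum_{t=1}^T \linner \hat\ell_t, q_t - q_{t+1}\rinner
 + \sum_{t=1}^T \linner \hat\ell_t, q_{t+1} - p^*\rinner ,
\]
so that the first sum already almost matches the corresponding term on the RHS of the lemma, and the second sum must be bounded by the penalty contributions.

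For the second sum I would use the value-function telescoping argument. Define $F_t(p) = \sum_{s<t}\linner \hat\ell_s, p\rinner + \beta_t\psi(p) + \betab\psib(p)$ so that $q_t = \argmin_{p\in\cP(K)}F_t(p)$; a short induction on $T$ (equivalently, telescoping $F_{T+1}(q_{T+1}) - F_1(q_1)$ and then comparing with $F_{T+1}(p^*)$ using the optimality of $q_{T+1}$) yields
\[
\sum_{t=1}^T \linner \hat\ell_t, q_{t+1} - p^*\rinner
\le \sum_{t=1}^T (\beta_t - \beta_{t-1})(\psi(p^*) - \psi(q_t))
   + \psib(p^*) - \psib(q_1),
\]
with the convention $\beta_0 = 0$. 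The jumps $\beta_t - \beta_{t-1}$ of the time-varying coefficient produce the per-round penalty, while $\betab\psib$ contributes only at the two endpoints because its coefficient does not change with $t$.

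For the first sum, I would refine $\linner \hat\ell_t, q_t - q_{t+1}\rinner$ by subtracting $\beta_t D(q_{t+1}, q_t)$. This is legitimate because first-order optimality of $q_{t+1}$ on the simplex for $F_{t+1}$, whose Bregman divergence equals $\beta_{t+1}D(\cdot,\cdot) + \betab D_{\psib}(\cdot,\cdot)$, combined with the monotonicity $\beta_t \le \beta_{t+1}$ and non-negativity of $D_{\psib}$, forces each summand in the stability sum to dominate its refined version by at least $\beta_t D(q_{t+1}, q_t)$. Adding this refined stability bound to the be-the-leader bound from the previous paragraph gives exactly the claimed inequality.

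The main obstacle is bookkeeping of indices and Bregman-divergence directions: one must extract the divergence as $D(q_{t+1}, q_t)$ (arguments in the stated order) with coefficient $\beta_t$ rather than $\beta_{t+1}$, so the choice of which optimality condition to invoke, and in which direction, is the delicate point. By contrast, the treatment of $\betab\psib$ is routine, since its coefficient does not vary with $t$ and it telescopes away except at the two endpoints.
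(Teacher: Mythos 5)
Your high-level plan — split the regret into a stability sum and a ``be-the-leader'' sum, then telescope the latter — is essentially the same strategy as the paper's backward-unrolling argument, but the step where you extract the term $-\beta_t D(q_{t+1}, q_t)$ does not work as written and the argument does not close.

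The problem is in the last paragraph of your plan. After the decomposition, you need the final bound to have the \emph{smaller} stability term $\sum_t \bigl( \linner\hat\ell_t, q_t - q_{t+1}\rinner - \beta_t D(q_{t+1}, q_t) \bigr)$ in place of the plain stability sum $\sum_t \linner\hat\ell_t, q_t - q_{t+1}\rinner$. Since each $D(q_{t+1}, q_t) \ge 0$, this is a strictly stronger conclusion, so the divergence must be \emph{earned} somewhere, not merely inserted. Your proposed be-the-leader bound,
\[
\sum_{t=1}^T \linner \hat\ell_t, q_{t+1} - p^*\rinner
\le \sum_{t=1}^T (\beta_t - \beta_{t-1})(\psi(p^*) - \psi(q_t)) + \psib(p^*) - \psib(q_1),
\]
is not tight enough: adding it to the unrefined stability sum gives only the weaker inequality without the $-\sum_t\beta_t D(q_{t+1},q_t)$ term. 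Your subsequent remark that ``each summand in the stability sum dominates its refined version by at least $\beta_t D(q_{t+1}, q_t)$'' is just the tautology $a \ge a - \beta_t D$; it invokes an optimality condition that plays no actual role and, in any case, gives a lower bound on the stability sum where you need nothing of the kind. You also invoke first-order optimality of $q_{t+1}$, but the divergence $D(q_{t+1}, q_t)$ with $q_t$ as the base point comes from optimality of $q_t$, i.e.\ from the fact that $q_t$ minimizes $F_t$ and $\psi$-strong convexity there gives $F_t(q_{t+1}) \ge F_t(q_t) + \beta_t D(q_{t+1}, q_t) + \betab D_{\psib}(q_{t+1},q_t)$.

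The fix is to strengthen the be-the-leader telescoping: in each step $F_{t+1}(q_{t+1}) - F_t(q_t) = \linner\hat\ell_t, q_{t+1}\rinner + (\beta_{t+1}-\beta_t)\psi(q_{t+1}) + \bigl(F_t(q_{t+1}) - F_t(q_t)\bigr)$, lower-bound the last bracket by $\beta_t D(q_{t+1}, q_t)$ (using optimality of $q_t$ and non-negativity of $D_{\psib}$), and only then compare $F_{T+1}(q_{T+1})$ with $F_{T+1}(p^*)$. This is precisely what the paper does, just written as a backward chain; it produces the $-\sum_t\beta_t D(q_{t+1},q_t)$ term inside the be-the-leader bound, after which the claimed inequality follows with no further manipulation of the stability sum. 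As a secondary point, you should also track carefully whether the boundary term is $\beta_T\psi(p^*)$ or $\beta_{T+1}\psi(p^*)$ — the statement uses $\beta_T$, which requires the comparison at time $T+1$ to be carried out with coefficient $\beta_T$ rather than $\beta_{T+1}$.
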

\begin{proof}
	We can apply a standard analytical technique,
	e.g.,
	in the proof of Lemma~1 by \citet{ito2021hybrid},
	as follows:
	\begin{align*}
		&
		\linner 
		\sum_{t=1}^T \hat{\ell}_t, p^*
		\rinner
		+
		\beta_{T} \psi(p^*)
		+
		\psib(p^*)
		\\
		&
		\ge
		\linner 
		\sum_{t=1}^T \hat{\ell}_t, q_{T+1}
		\rinner
		+
		\beta_{T} \psi(q_{T+1})
		+
		\psib(q_{T+1})
		\\
		&
		=
		\linner 
		\sum_{t=1}^{T-1} \hat{\ell}_t, q_{T+1}
		\rinner
		+
		\linner 
		\hat{\ell}_T, q_{T+1}
		\rinner
		+
		\beta_{T} \psi(q_{T+1})
		+
		\psib(q_{T+1})
		\\
		&
		\ge
		\linner 
		\sum_{t=1}^{T-1} \hat{\ell}_t, q_{T}
		\rinner
		+
		\linner 
		\hat{\ell}_T, q_{T+1}
		\rinner
		+
		\beta_{T} \psi(q_{T})
		+
		\psib(q_{T})
		+
		\beta_T D(q_{T+1}, q_T)
		\\
		&
		\ge
		\sum_{t=1}^T
		\left(
		\linner 
		\hat{\ell}_t, q_{t+1}
		\rinner
		+
		\beta_t D(q_{t+1}, q_t)
		+
		\left(
			\beta_{t-1} - \beta_t
		\right)
		\psi(q_{t})
		\right)
		+
		\psib(q_{1}),
	\end{align*}
	which implies that the desired inequality holds.
\end{proof}

\subsection{Facts on Tsallis entropy}
When $\psi$ is given by \eqref{eq:defTsallis},
then the Bregman divergence associated with $\psi$ is given by
\begin{align}
	D(p, q)
	=
	\frac{1}{\alpha}
	\sum_{i=1}^{K}
	\left(
	q_i^{\alpha} + \alpha (p_i-q_i) q_i^{\alpha - 1}
	- p_i^{\alpha}
	\right)
	=
	\sum_{i=1}^{K} d(p_i, q_i),
\end{align}
where we define
\begin{align}
	\label{eq:defd}
	d(p, q)
	:=
	\alpha^{-1}q^{\alpha} + (p-q) q^{\alpha - 1}
	- \alpha^{-1} p^{\alpha}
	\le
	\frac{1-\alpha}{2}
	\left(
		\min\{p, q\}
	\right)^{\alpha - 2}
	(p-q)^2.
\end{align}
\begin{lemma}[stability for one dimensional case]
	\label{lem:tsallis-onedim}
	Let $p, q \in (0, 1)$.
	Suppose $\ell \ge - \frac{1-\alpha}{2}q^{\alpha-1}$.
	We then have
	\begin{align}
		\label{eq:tsallis-onedim}
		\ell \cdot (q - p)
		-
		d(p, q)
		\le
		\frac{ 2 q^{2-\alpha} \ell^2}{1-\alpha}.
	\end{align}
\end{lemma}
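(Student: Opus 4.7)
The plan is to compute $\sup_{p>0}[\ell(q-p)-d(p,q)]$ in closed form and then verify a clean one-variable inequality. The function $p \mapsto d(p,q)$ is strictly convex on $(0,\infty)$ since it is the Bregman divergence of the strictly convex function $p \mapsto -p^{\alpha}/\alpha$; hence $f(p) := \ell(q-p) - d(p,q)$ is strictly concave and attains its supremum at a unique critical point. Using $\partial_p d(p,q) = q^{\alpha-1} - p^{\alpha-1}$, the first-order condition reads $(p^*)^{\alpha-1} = \ell + q^{\alpha-1}$. Introducing the normalization $u := \ell q^{1-\alpha}$, the hypothesis $\ell \ge -\frac{1-\alpha}{2} q^{\alpha-1}$ becomes $u \ge -(1-\alpha)/2$, which guarantees $1+u \ge (1+\alpha)/2 > 0$ and allows me to write $p^* = q\,(1+u)^{-1/(1-\alpha)}$.

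Substituting $p^*$ back into $f$ and using $(p^*)^\alpha = q^\alpha (1+u)^{-\alpha/(1-\alpha)}$, a routine simplification produces
\[
\sup_{p>0} f(p) \;=\; q^\alpha\, F(u),
\qquad
F(u) := u - \frac{1-\alpha}{\alpha}\Bigl(1 - (1+u)^{-\alpha/(1-\alpha)}\Bigr).
\]
Since $q^{\alpha}\cdot \tfrac{2 u^2}{1-\alpha} = \tfrac{2\ell^2 q^{2-\alpha}}{1-\alpha}$, the lemma reduces to the scalar estimate $F(u) \le \tfrac{2u^2}{1-\alpha}$ on the interval $u \ge -(1-\alpha)/2$.

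To prove this scalar estimate I observe that $F(0) = 0$, $F'(u) = 1 - (1+u)^{-1/(1-\alpha)}$ (so $F'(0) = 0$), and $F''(u) = (1-\alpha)^{-1}(1+u)^{-(2-\alpha)/(1-\alpha)}$. Since $F''$ is monotone decreasing, its supremum on the relevant interval is attained at the left endpoint $u = -(1-\alpha)/2$, giving
\[
\sup_{u \ge -(1-\alpha)/2} F''(u) \;=\; \frac{1}{1-\alpha}\left(\frac{1+\alpha}{2}\right)^{-(2-\alpha)/(1-\alpha)}.
\]
The main technical nuisance I anticipate is the uniform numerical estimate $\bigl(\tfrac{1+\alpha}{2}\bigr)^{-(2-\alpha)/(1-\alpha)} \le 4$ for every $\alpha \in (0,1)$; equivalently, $\tfrac{2-\alpha}{1-\alpha}\log\tfrac{2}{1+\alpha} \le 2\log 2$. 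This is saturated in the limit $\alpha \to 0^+$ (both sides equal $2\log 2$) and tends to $1/2$ as $\alpha \to 1^-$; establishing monotonic decrease in $\alpha$ only needs a short one-variable calculus argument on the left-hand side. Given this bound, $F''(u) \le 4/(1-\alpha)$ throughout, and Taylor's theorem with integral remainder about $u = 0$ yields $F(u) \le \tfrac{4}{1-\alpha}\cdot \tfrac{u^2}{2} = \tfrac{2u^2}{1-\alpha}$, which, after multiplication by $q^\alpha$, completes the proof.
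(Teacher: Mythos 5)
Your proposal is correct but follows a genuinely different path from the paper's. Both start from the same first-order condition $(p^*)^{\alpha-1} = q^{\alpha-1} + \ell$, but then diverge. The paper proceeds by bounding $p^* \le 2q$, invoking the mean value theorem to control $|p^*-q|$ in terms of $\ell$, exploiting the Bregman symmetrization identity $\ell(q-p^*) = d(p^*,q) + d(q,p^*)$ at the critical point, and finally splitting into the cases $p^* \ge q$ and $p^* < q$. You instead normalize via $u = \ell q^{1-\alpha}$, obtain the supremum in closed form as $q^\alpha F(u)$ with $F(u) = u - \tfrac{1-\alpha}{\alpha}\bigl(1-(1+u)^{-\alpha/(1-\alpha)}\bigr)$ (I verified this identity holds), reduce the lemma to the scalar bound $F(u) \le \tfrac{2u^2}{1-\alpha}$ on $u \ge -(1-\alpha)/2$, and prove that bound by second-derivative control plus the Taylor integral remainder. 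Your route eliminates the case split and gives a cleaner structural picture of where the constant $2$ comes from: it is exactly $\tfrac12 \sup F''$, pinned down by the inequality $\bigl(\tfrac{1+\alpha}{2}\bigr)^{-(2-\alpha)/(1-\alpha)} \le 4$, which is tight only in the limit $\alpha \to 0^+$.

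The only piece you leave unfinished is the claim that $\tfrac{2-\alpha}{1-\alpha}\log\tfrac{2}{1+\alpha}$ is nonincreasing in $\alpha$ (or at any rate is $\le 2\log 2$ on $(0,1)$). You assert it is ``a short one-variable calculus argument,'' and indeed it is: after the substitution $s = 1+\alpha$ the condition $g'(\alpha)\le 0$ reduces to a quadratic in $s$ with one root in the relevant range, or one can note that the equivalent inequality $-\log(1-t/2) \le \tfrac{2t\log 2}{1+t}$ for $t = 1-\alpha \in [0,1]$ has equality at both endpoints and the difference is unimodal. It should be stated explicitly in a polished write-up, but it is a genuine and fillable detail rather than a gap that threatens the argument.
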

\begin{proof}
	For any given $q$ and $\ell$,
	the left-hand side of \eqref{eq:tsallis-onedim} is concave in $p$.
	Hence,
	this is maximized when
	\begin{align}
		\label{eq:dp0}
		\frac{\mathrm{d}}{\mathrm{d}p}
		\left(
		\ell \cdot (q - p)
		-
		d(p, q)
		\right)
		=
		- \ell
		- 
		q^{\alpha - 1}
		+
		p^{\alpha - 1} = 0.
	\end{align}
	For such $p$,
	we have
	\begin{align}
		\label{eq:p2q}
		p 
		&
		= 
		(q^{\alpha-1} + \ell)^{\frac{1}{\alpha-1}}
		\le
		\left(q^{\alpha-1} - \frac{1-\alpha}{2}q^{\alpha-1}\right)^{\frac{1}{\alpha-1}}
		=
		q
		\left(1-\frac{1-\alpha}{2}\right)^{\frac{1}{\alpha - 1}}
		\\
		&
		=
		q
		\exp\left(
			\frac{1}{\alpha-1}
			\log
			\left(1+\frac{\alpha-1}{2}\right)
		\right)
		\le
		q
		\exp\left(
			\log 2
		\right)
		=
		2 q,
	\end{align}
	where the first equality follows from \eqref{eq:dp0} and the first inequality follows from the assumption of $\ell \ge - \frac{1-\alpha}{2}q^{\alpha-1}$.
	Further,
	from the intermediate value thoerem and the fact that $p^{\alpha - 2}$ is monotone decreasing in $p$,
	we have
	\begin{align*}
		|\ell|
		&
		=
		|p^{\alpha-1}-q^{\alpha-1}|
		\\
		&
		\ge
		\min\left\{
			|(\alpha-1)p^{\alpha-2}|,
			|(\alpha-1)q^{\alpha-2}|
		\right\}
		|p-q|
		\\
		&
		=
		(1-\alpha) \max\{ p, q \}^{\alpha - 2} |p-q|,
	\end{align*}
	where the first inequality follows from \eqref{eq:dp0} and
	the second inequality follows from the intermediate value thoerem.
	This implies
	\begin{align}
		\label{eq:p-qle}
		|p-q|
		\le
		\frac{1}{1-\alpha}
		\cdot \max\left\{ p, q \right\}^{2-\alpha}
		|\ell|.
	\end{align}
	As we have $\ell \cdot (q-p) = d(p, q) + d(q, p)$ for $p$ satisfying \eqref{eq:dp0},
	we have
	\begin{align}
		\ell \cdot (q - p)
		-
		d(p, q)
		&
		=
		d(q, p)
		\le
		\frac{1-\alpha}{2} ( \min\{ p, q \}^{\alpha - 2} ) ( p-q )^2
		\\
		&
		\le
		\frac{1}{2(1-\alpha)}  \min\{ p, q \}^{\alpha - 2}  (  \max \{ p, q \}^{2-\alpha}  \ell )^2,
	\end{align}
	where the first inequality follows from \eqref{eq:defd} and
	the second inequality follows from \eqref{eq:p-qle}.
	If $p \ge q$,
	as we have $p \leq 2q$ from \eqref{eq:p2q},
	it holds that
	\begin{align}
		\ell \cdot (q - p)
		-
		d(p, q)
		&
		\le
		\frac{1}{2(1-\alpha)}  \min\{ p, q \}^{\alpha - 2}  (  \max \{ p, q \}^{2-\alpha}  \ell )^2
		\\
		&
		\le
		\frac{1}{2(1-\alpha)}  q^{\alpha - 2}  ( (2q)^{2-\alpha}  \ell )^2
		=
		\frac{
		2 q^{2 - \alpha } \ell^2
		}{1-\alpha}  .
		\label{eq:lemonedim1}
	\end{align}
	If $p < q$,
	we have
	\begin{align}
		\ell \cdot (q - p)
		-
		d(p, q)
		&
		\le
		\ell \cdot (q - p)
		\le
		\frac{1}{1-\alpha}
		\cdot \max\left\{ p, q \right\}^{2-\alpha}
		\ell^2
		=
		\frac{
		q^{2-\alpha}
		\ell^2
		}{1-\alpha}
		\label{eq:lemonedim2}
	\end{align}
	where the first inequality follows from $d(p,q) \ge 0$,
	the second ineqluaity follows form \eqref{eq:p-qle},
	and the equality follows from the assumption of $p < q$.
	As \eqref{eq:lemonedim1} holds if $p\ge q$ and \eqref{eq:lemonedim2} holds otherwise,
	we have \eqref{eq:tsallis-onedim} for all $p$.
\end{proof}
\begin{lemma}[stabitlity for probability simplex]
	\label{lem:Tsallis-stab}
	Fix arbitrary $i^* \in [K]$
	and $q \in \cP(K)$.
	If $\ell_i \ge - \frac{1-\alpha}{4}q_i^{\alpha - 1}$ for all $i \in [K] $,
	we then have
	\begin{align}
		\label{eq:Tsallis-stab0}
		\linner \ell, q-p \rinner - D(p, q)
		\le
		\frac{4}{1-\alpha} \left(
			\sum_{i =1}^K q_i^{2-\alpha} \ell_i^2
		\right) 
	\end{align}
	for any $p \in \cP(K)$.
	If 
	$\ell_i \ge - \frac{1-\alpha}{4}q_i^{\alpha - 1}$ for all $i \in [K] \setminus \{ i^* \} $ and
	$\ell_{i^*} \le \frac{1-\alpha}{4}(1-q_{i^*})^{\alpha - 1}$,
	we then have
	\begin{align}
		\label{eq:Tsallis-stab1}
		\linner \ell, q-p \rinner - D(p, q)
		\le
		\frac{4}{1-\alpha} \left(
			\sum_{i\neq i^*} q_i^{2-\alpha} \ell_i^2
			+
			(1-q_{i^*})^{2-\alpha}
			\ell_{i^*}^{2}
		\right) 
	\end{align}
	for any $p \in \cP(K)$.
\end{lemma}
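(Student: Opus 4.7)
\textbf{Proof plan for Lemma \ref{lem:Tsallis-stab}.}
The plan is to reduce both inequalities to the one-dimensional stability bound of Lemma \ref{lem:tsallis-onedim}, exploiting the separability $D(p,q) = \sum_{i=1}^K d(p_i, q_i)$ of the Bregman divergence. The first inequality is obtained by a straightforward coordinate-wise relaxation, while the second requires a mass-conservation trick in order to produce the $(1-q_{i^*})^{2-\alpha}$ factor.

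For \eqref{eq:Tsallis-stab0}, I would simply relax the simplex constraint to the positive orthant: since $\cP(K) \subseteq \re_{>0}^K$, it suffices to bound $\sum_{i=1}^K \sup_{p_i > 0}[\ell_i(q_i - p_i) - d(p_i, q_i)]$. The hypothesis $\ell_i \ge -\frac{1-\alpha}{4}q_i^{\alpha-1}$ is stronger than the hypothesis $\ell \ge -\frac{1-\alpha}{2}q^{\alpha-1}$ of Lemma \ref{lem:tsallis-onedim}, so applying that lemma coordinate-wise bounds each summand by $\frac{2 q_i^{2-\alpha}\ell_i^2}{1-\alpha}$, which immediately yields \eqref{eq:Tsallis-stab0} (the constant $\frac{4}{1-\alpha}$ is loose by a factor of $2$, but this is stylistically convenient for matching \eqref{eq:Tsallis-stab1}).

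For \eqref{eq:Tsallis-stab1}, relaxing at $i = i^*$ would produce $q_{i^*}^{2-\alpha}$ in place of the desired $(1-q_{i^*})^{2-\alpha}$, which is much worse when $q_{i^*}$ is close to $1$. To avoid this, I would use the identity $\sum_i p_i = \sum_i q_i = 1$ to rewrite
\begin{align*}
\langle \ell, q - p\rangle = \sum_{i \ne i^*} \tilde\ell_i\, (q_i - p_i), \qquad \tilde\ell_i := \ell_i - \ell_{i^*},
\end{align*}
and drop the nonnegative term $d(p_{i^*}, q_{i^*}) \ge 0$. It then remains to apply Lemma \ref{lem:tsallis-onedim} coordinate-wise for $i \ne i^*$ with the effective loss $\tilde\ell_i$. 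The one-dim hypothesis $\tilde\ell_i \ge -\frac{1-\alpha}{2}q_i^{\alpha-1}$ holds because $\ell_i \ge -\frac{1-\alpha}{4}q_i^{\alpha-1}$, $\ell_{i^*} \le \frac{1-\alpha}{4}(1-q_{i^*})^{\alpha-1}$, and $(1-q_{i^*})^{\alpha-1} \le q_i^{\alpha-1}$ (the latter because $q_i \le 1 - q_{i^*}$ and $\alpha - 1 < 0$). Using $\tilde\ell_i^2 \le 2(\ell_i^2 + \ell_{i^*}^2)$ together with the elementary inequality $\sum_{i \ne i^*} q_i^{2-\alpha} \le (1-q_{i^*})^{2-\alpha}$ (which follows from $2-\alpha\ge 1$ and $q_i\le 1-q_{i^*}$, giving $q_i^{2-\alpha}\le q_i(1-q_{i^*})^{1-\alpha}$ and summing), the per-coordinate one-dim estimates assemble to exactly the right-hand side of \eqref{eq:Tsallis-stab1}.

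The main technical wrinkle will be ensuring the constants line up so that the one-dim hypothesis indeed applies to $\tilde\ell_i$: the stricter $\frac{1-\alpha}{4}$ constants in the two-sided hypotheses of the lemma are tuned precisely so that the difference $\ell_i - \ell_{i^*}$ still satisfies the $\frac{1-\alpha}{2}$-condition of Lemma \ref{lem:tsallis-onedim} after invoking $(1-q_{i^*})^{\alpha-1} \le q_i^{\alpha-1}$. Once this alignment is secured, the remaining steps are routine algebra.
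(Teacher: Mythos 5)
Your proof is correct, and for \eqref{eq:Tsallis-stab1} it follows a genuinely different route from the paper. The paper's proof splits the quantity $\linner\ell, q-p\rinner - D(p,q)$ in half and bounds the two halves separately: the first half applies Lemma~\ref{lem:tsallis-onedim} coordinate-wise with the rescaled loss $2\ell_i$ for $i\ne i^*$, and the second half uses the mass-conservation identity $q_{i^*}-p_{i^*} = \sum_{i\ne i^*}(p_i - q_i)$ to re-express $i^*$'s contribution as a sum over $i\ne i^*$ with effective loss $-2\ell_{i^*}$, again invoking the one-dimensional lemma coordinate-wise. The $\tfrac12$ splitting is precisely what avoids double-charging the Bregman terms $d(p_i,q_i)$ for $i\ne i^*$, which are needed in both pieces. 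Your approach instead exploits shift-invariance of the loss on the simplex once and for all, replacing $\ell$ by $\tilde\ell_i := \ell_i - \ell_{i^*}$, dropping the single term $d(p_{i^*},q_{i^*})\ge 0$, and applying the one-dimensional lemma exactly once per coordinate $i\ne i^*$; you then separate the contributions with $(\ell_i-\ell_{i^*})^2 \le 2\ell_i^2 + 2\ell_{i^*}^2$ and the superadditivity bound $\sum_{i\ne i^*}q_i^{2-\alpha} \le (1-q_{i^*})^{2-\alpha}$. Both routes rely on the same two ingredients—the one-dimensional stability bound and the observation $q_i\le 1-q_{i^*}\Rightarrow q_i^{\alpha-1}\ge (1-q_{i^*})^{\alpha-1}$ to verify the one-dimensional hypothesis on the shifted loss—and arrive at the same constant $\tfrac{4}{1-\alpha}$. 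Your version is arguably cleaner because it avoids the manual bookkeeping of the $\tfrac12$-split and uses the one-dimensional lemma a single time per coordinate; the paper's version has the advantage that the same initial decomposition, via the $\min$ in its intermediate step, handles \eqref{eq:Tsallis-stab0} and \eqref{eq:Tsallis-stab1} in one pass. Your argument for \eqref{eq:Tsallis-stab0} (direct coordinate-wise relaxation, constant loose by a factor of $2$) is essentially the same as what the paper obtains by taking the first option in its $\min$.
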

\begin{proof}
	From the definition of the Bregman divergence,
	we have
	\begin{align}
		\nonumber
		&
		\linner \ell, q-p \rinner - D(p, q)
		\\
		&
		\nonumber
		=
		\frac{1}{2}
		\sum_{i\neq i^*}
		\left(
			2 \ell_i \cdot (q_i - p_i)
			-
			d (p_i, q_i)
		\right)
		+
		\frac{1}{2}
		\left(
			2 \ell_{i^*} \cdot (q_{i^*} - p_{i^*})
			-
			d (p_{i^*}, q_{i^*})
			-
			\sum_{i \neq i^*} d(p_i, q_i)
		\right)
		\\
		&
		\nonumber
		\le
		\frac{1}{2}
		\sum_{i\neq i^*}
		\left(
			2 \ell_i \cdot (q_i - p_i)
			-
			d (p_i, q_i)
		\right)
		\\
		&
		\quad
		+
		\frac{1}{2}
		\min \left\{
			2 \ell_{i^*} \cdot (q_{i^*} - p_{i^*})
			-
			d (p_{i^*}, q_{i^*}),
			2 \ell_{i^*} \cdot (q_{i^*} - p_{i^*})
			-
			\sum_{i \neq i^*} d(p_i, q_i)
		\right\}.
		\label{eq:Tsallis-stab9}
	\end{align}
	From Lemma~\ref{lem:tsallis-onedim},
	if $\ell_{i} \ge - \frac{1-\alpha}{4} q_i^{\alpha - 1}$,
	we have
	\begin{align}
		2 \ell_{i} \cdot (q_{i} - p_{i})
		-
		d (p_{i}, q_{i})
		\le
		\frac{8 q_i^{2-\alpha} \ell_i^2}{1-\alpha}.
		\label{eq:Tsallis-stab8}
	\end{align}
	Hence,
	if it hold for all $i \in [K]$ that
	$\ell_{i} \ge - \frac{1-\alpha}{4} q_i^{\alpha - 1}$,
	we have \eqref{eq:Tsallis-stab0}.
	Further,
	we have
	\begin{align}
		q_{i^*}
		-
		p_{i^*}
		=
		(1 - p_{i^*})
		-
		(1 - q_{i^*})
		=
		\sum_{i \in [K] \setminus \{ i^* \}}
		(p_i - q_i).
	\end{align}
	As we have $(1-q_{i^*})^{\alpha - 1} \le q_i^{\alpha - 1}$ for any $i \in [K] \setminus \{i^*\}$,
	if $\ell_{i^*} \le \frac{1-\alpha}{4}(1-q_{i^*})^{\alpha - 1}$,
	we then have
	$- \ell_{i^*} \ge - \frac{1-\alpha}{4}q_{i}^{\alpha - 1}$
	for any $i \in [K] \setminus \{i^*\}$.
	Hence, Lemma~\ref{lem:tsallis-onedim} implies
	\begin{align}
		2 \ell_{i^*} \cdot (q_{i^*} - p_{i^*})
		-
		\sum_{i \in [K] \setminus \{ i^* \}} d(p_i, q_i)
		=
		\sum_{i \in [K] \setminus \{ i^* \}}
		\left(
		-
		2
		\ell_{i^*} \cdot (q_i - p_i)
		-
		d(p_i, q_i)
		\right)
		\\
		\le
		\frac{2}{1-\alpha}
		\sum_{i \in [K] \setminus \{ i^* \}}
		(2\ell_{i^*})^2 q_{i}^{2-\alpha}
		\le
		\frac{8}{1-\alpha}
		\ell_{i^*}^2 
		\left( \sum_{i \in [K] \setminus \{ i^* \}} q_i \right)^{2-\alpha}
		=
		\frac{8}{1-\alpha}
		( 1-q_{i^*})^{2-\alpha}
		\ell_{i^*}^2 .
	\end{align}
	By combining this with \eqref{eq:Tsallis-stab9} and \eqref{eq:Tsallis-stab8} for $i\in [K] \setminus \{ i^* \}$,
	we obtain \eqref{eq:Tsallis-stab1}.
\end{proof}
\begin{lemma}
	\label{lem:Tsallis-stab2}
	Fix arbitrary $q \in \cP(K)$ and let $i^* \in \argmax_{i \in [K]} q_i$.
	If $|\ell_i| \le \frac{1-\alpha}{4} \min \{ q_{i^*}, (1-q_{i^*}) \}^{\alpha - 1}$ holds for all $i \in [K]$,
	we have
	\begin{align}
		\linner \ell, q-p \rinner - D(p, q)
		\le
		\frac{4}{1-\alpha} \left(
			\sum_{i\neq i^*} q_i^{2-\alpha} \ell_i^2
			+
			\min \{ q_{i^*}, (1-q_{i^*}) \}^{2-\alpha}
			\ell_{i^*}^{2}
		\right) 
	\end{align}
	for any $p \in \cP(K)$.
\end{lemma}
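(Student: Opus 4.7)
\textbf{Proof proposal for Lemma~\ref{lem:Tsallis-stab2}.}\quad
The plan is to reduce the claim to the two bounds \eqref{eq:Tsallis-stab0} and \eqref{eq:Tsallis-stab1} of Lemma~\ref{lem:Tsallis-stab} by a case split on whether $q_{i^*} \le 1/2$ or $q_{i^*} > 1/2$, exploiting that $i^* \in \argmax_{i \in [K]} q_i$. The key numerical observation is that in the first case $\min\{q_{i^*}, 1-q_{i^*}\} = q_{i^*}$, while in the second case it equals $1-q_{i^*}$, which is exactly the quantity appearing on the right-hand sides of \eqref{eq:Tsallis-stab0} and \eqref{eq:Tsallis-stab1} applied to $i^*$.

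First I would handle the case $q_{i^*} \le 1/2$. Since $i^* = \argmax_i q_i$, every $q_i$ satisfies $q_i \le q_{i^*}$, hence $q_i^{\alpha-1} \ge q_{i^*}^{\alpha-1}$ (because $\alpha-1 < 0$). The hypothesis $|\ell_i| \le \frac{1-\alpha}{4}\min\{q_{i^*}, 1-q_{i^*}\}^{\alpha-1} = \frac{1-\alpha}{4} q_{i^*}^{\alpha-1}$ therefore yields $\ell_i \ge -\frac{1-\alpha}{4} q_i^{\alpha-1}$ for every $i \in [K]$, so \eqref{eq:Tsallis-stab0} of Lemma~\ref{lem:Tsallis-stab} applies and gives
\begin{align*}
\linner \ell, q-p \rinner - D(p, q)
\le \frac{4}{1-\alpha}\sum_{i=1}^K q_i^{2-\alpha}\ell_i^2,
\end{align*}
which matches the claim after observing that $q_{i^*}^{2-\alpha} = \min\{q_{i^*}, 1-q_{i^*}\}^{2-\alpha}$ in this case.

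Next I would handle the case $q_{i^*} > 1/2$, where $\min\{q_{i^*}, 1-q_{i^*}\} = 1-q_{i^*}$. For $i \neq i^*$ one has $q_i \le \sum_{j \neq i^*} q_j = 1-q_{i^*}$, so $q_i^{\alpha-1} \ge (1-q_{i^*})^{\alpha-1}$ and the hypothesis gives $\ell_i \ge -\frac{1-\alpha}{4} q_i^{\alpha-1}$. For $i = i^*$, the hypothesis directly yields $\ell_{i^*} \le \frac{1-\alpha}{4}(1-q_{i^*})^{\alpha-1}$. Both sign conditions of \eqref{eq:Tsallis-stab1} in Lemma~\ref{lem:Tsallis-stab} are thus satisfied, and that lemma produces exactly the desired bound since $1-q_{i^*} = \min\{q_{i^*}, 1-q_{i^*}\}$.

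I do not expect any substantive obstacle here: the lemma is a direct corollary of Lemma~\ref{lem:Tsallis-stab} once one recognizes that the maximality of $q_{i^*}$ allows $\min\{q_{i^*}, 1-q_{i^*}\}$ to serve as a uniform lower bound on $q_i$ (in the first case) or at least on the $q_i$ for $i \neq i^*$ (in the second case). The only point requiring a little care is checking the sign conventions in each branch so that $\ell_{i^*}$ is bounded from above (not just in absolute value) in the second case, and from below for all coordinates in the first; this is handled by using $|\ell_i|$ in the hypothesis.
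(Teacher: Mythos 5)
Your proposal is correct and follows essentially the same route as the paper: a case split on $q_{i^*} \le 1/2$ versus $q_{i^*} > 1/2$, applying \eqref{eq:Tsallis-stab0} in the first branch and \eqref{eq:Tsallis-stab1} in the second, with the maximality of $q_{i^*}$ used to verify the required sign conditions on each $\ell_i$. The only cosmetic difference is that the paper verifies the hypothesis for $i \neq i^*$ once, using both $q_i \le q_{i^*}$ and $q_i \le 1 - q_{i^*}$, before splitting cases, whereas you verify it separately within each branch; the content is identical.
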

\begin{proof}
	As we have
	$q_i \le q_{i^*}$ 
	and $q_i \le 1 - q_{i^*}$ holds 
	for any $i \in [K] \setminus \{ i^* \}$,
	we have
	$|\ell_i| \le 
	\frac{1-\alpha}{4} \min \{ q_{i^*}, 1-q_{i^*} \}^{\alpha - 1}
	\le 
	\frac{1-\alpha}{4} q_{i}^{\alpha - 1}
	$ for all $i \in [K] \setminus \{ i^* \}$.
	If $q_{i^*} \le 1 - q_{i^*}$,
	from \eqref{eq:Tsallis-stab0} in Lemma~\ref{lem:Tsallis-stab},
	we have
	\begin{align}
		\linner \ell, q-p \rinner - D(p, q)
		\le
		\frac{4}{1-\alpha} \left(
			\sum_{i=1}^K q_i^{2-\alpha} \ell_i^2
		\right) 
		\le
		\frac{4}{1-\alpha} \left(
			\sum_{i\neq i^*} q_i^{2-\alpha} \ell_i^2
			+
			\min \{ q_{i^*}, 1-q_{i^*} \}^{2-\alpha}
			\ell_{i^*}^{2}
		\right) .
	\end{align}
	If $q_{i^*} > 1 - q_{i^*}$,
	from \eqref{eq:Tsallis-stab1} in Lemma~\ref{lem:Tsallis-stab},
	we have
	\begin{align}
		\nonumber
		\linner \ell, q-p \rinner - D(p, q)
		&
		\le
		\frac{4}{1-\alpha} \left(
			\sum_{i\neq i^*} q_i^{2-\alpha} \ell_i^2
			+
			(1-q_{i^*})^{2-\alpha}
			\ell_{i^*}^{2}
		\right) 
		\\
		&
		\le
		\frac{4}{1-\alpha} \left(
			\sum_{i\neq i^*} q_i^{2-\alpha} \ell_i^2
			+
			\min \{ q_{i^*}, 1-q_{i^*} \}^{2-\alpha}
			\ell_{i^*}^{2}
		\right) .
	\end{align}
\end{proof}

\begin{lemma}
	\label{lem:psiqr}
	Fix arbitrary $\omega > 1$.
	For $q, r \in \cP(K)$,
	suppose that $r_i \le \omega q_i $ holds for all $i$.
	We then have
	$- \psi(r) \le - (1 + (\omega-1) \alpha) \psi(q) \le - \omega \psi(q)$.
\end{lemma}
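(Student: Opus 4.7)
The plan is to reduce the claim to an algebraic inequality between $\sum_i r_i^\alpha$ and $\sum_i q_i^\alpha$ using $\sum_i r_i = \sum_i q_i = 1$. Writing $-\alpha\psi(p) = \sum_i p_i^\alpha - 1$ for any $p \in \cP(K)$, I would first reduce the first inequality to showing $\sum_i r_i^\alpha - 1 \le (1+(\omega-1)\alpha)(\sum_i q_i^\alpha - 1)$. Coordinates with $q_i = 0$ force $r_i = 0$ via the hypothesis $r_i \le \omega q_i$ and contribute nothing to either side, so without loss of generality one may assume $q_i > 0$ throughout.

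Next, I would invoke concavity of $x \mapsto x^\alpha$ via the tangent bound $r_i^\alpha \le q_i^\alpha + \alpha q_i^{\alpha-1}(r_i - q_i)$. Summing in $i$ and using the key observation that $\sum_i(r_i - q_i) = 0$ to subtract a constant inside the sum gives
\begin{align*}
\sum_i r_i^\alpha - \sum_i q_i^\alpha \le \alpha \sum_i (q_i^{\alpha-1} - 1)(r_i - q_i).
\end{align*}
The factor $q_i^{\alpha-1} - 1$ is nonnegative since $q_i \in (0,1]$ and $\alpha - 1 < 0$, while the hypothesis gives $r_i - q_i \le (\omega-1) q_i$. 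A one-line case split on the sign of $r_i - q_i$ yields
\begin{align*}
(q_i^{\alpha-1} - 1)(r_i - q_i) \le (q_i^{\alpha-1} - 1)(\omega - 1) q_i = (\omega - 1)(q_i^\alpha - q_i),
\end{align*}
so summing (and using $\sum_i q_i = 1$) produces $\sum_i r_i^\alpha - \sum_i q_i^\alpha \le \alpha(\omega-1)(\sum_i q_i^\alpha - 1)$, which rearranges to the first claimed inequality. The second inequality reduces to $1 + (\omega-1)\alpha \le \omega$, i.e.\ $(\omega-1)(1-\alpha) \ge 0$, multiplied by $-\psi(q) \ge 0$; the latter holds because $\sum_i q_i^\alpha \ge \sum_i q_i = 1$ for $q_i \in [0,1]$ and $\alpha \in (0,1)$.

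The main obstacle is that the naive pointwise bound $r_i^\alpha \le \omega^\alpha q_i^\alpha$ yields only $\sum_i r_i^\alpha - 1 \le \omega^\alpha(\sum_i q_i^\alpha - 1) + (\omega^\alpha - 1)$, and this additive residue cannot in general be absorbed into $(1+(\omega-1)\alpha)(\sum_i q_i^\alpha - 1)$ when $\sum_i q_i^\alpha$ is close to $1$ (the failure is visible in simple two-point examples). The decisive trick is to \emph{center} $q_i^{\alpha-1}$ around $1$ using the zero-sum constraint $\sum_i(r_i - q_i) = 0$: this converts what would otherwise be a fixed additive error into exactly the $-(\omega-1)\sum_i q_i = -(\omega-1)$ term needed to match the right-hand side in the multiplicative form.
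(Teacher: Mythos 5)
Your proof is correct and follows essentially the same route as the paper's: the paper applies the gradient (convexity) inequality $\psi(q)-\psi(r)\le\linner\nabla\psi(q),q-r\rinner$ and simplifies, which after using $\sum_i(q_i-r_i)=0$ gives exactly the same intermediate expression $\sum_i(q_i^{\alpha-1}-1)(r_i-q_i)$ that you obtain by summing the coordinate-wise tangent bound for the concave map $x\mapsto x^\alpha$, and both proofs then apply the same estimate $(q_i^{\alpha-1}-1)(r_i-q_i)\le(\omega-1)(q_i^{\alpha}-q_i)$ (your case split is not actually needed, since $q_i^{\alpha-1}-1\ge 0$ lets you multiply the hypothesis $r_i-q_i\le(\omega-1)q_i$ directly). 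The paper glosses over the $q_i=0$ boundary case that you handle explicitly, but otherwise the two arguments are the same.
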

\begin{proof}
	As $\psi(x)$ is a convex functions,
	we have
	\begin{align}
		\nonumber
		\psi(q) - \psi(r)
		&
		\le
		\linner
		\nabla \psi(q) , q - r
		\rinner
		=
		-
		\frac{1}{\alpha}
		\sum_{i=1}^K
		(\alpha q_i^{\alpha-1} - 1)(q_i - r_i)
		\\
		\nonumber
		&
		=
		-
		\sum_{i=1}^K
		(q_i^{\alpha-1} - 1)(q_i - r_i)
		=
		\sum_{i=1}^K
		(q_i^{\alpha-1} - 1)(r_i - q_i)
		\\
		&
		\le
		(\omega - 1)
		\sum_{i=1}^K
		(q_i^{\alpha-1} - 1)q_i
		=
		-
		(\omega - 1) \alpha \psi(q)
		\le
		-
		(\omega - 1)  \psi(q),
	\end{align}
	where the second inequality follows from the assumption of $r_i \le \omega q_i$.
	This implies that
	$- \psi(r) \le - (1 + (\omega-1) \alpha) \psi(q) \le - \omega \psi(q)$.
\end{proof}

\begin{lemma}
	\label{lem:boundqr1}
	Let $\omega = \sqrt{2}$.
	Suppose
	$q, r \in \cP(K)$ are given by
	\begin{align}
		q 
		&
		\in \argmin_{p \in \cP(K)} \left\{
			\linner L, p \rinner
			+
			\beta \psi(p)
			+
			\betab \psib(p)
		\right\},
		\\
		r
		&
		\in \argmin_{p \in \cP(K)} \left\{
			\linner L + \ell, p \rinner
			+
			\beta'
			\psi(p)
			+
			\betab
			\psib(p)
		\right\}
	\end{align}
	with
	\begin{align}
		\psi(p)
		=
		-
		\frac{1}{\alpha}
		\sum_{i=1}^{K} ( p_i^{\alpha} -  p_i ),
		\quad
		\psib(p)
		=
		-
		\frac{1}{\alphab}
		\sum_{i=1}^{K} ( p_i^{\alphab} -  p_i ),
	\end{align}
	where $0 \le \alphab < \alpha < 1$,
	$0 < \beta \le \beta'$,
	and
	$0 \le \betab $.
	Denote
	$q_* = \min\left\{ 1 - \max_{i \in [ K]}q_i,  \max_{i \in [ K]}q_i \right\}$.
	We also assume
	\begin{align}
		\label{eq:boundqr1condiell}
		\|
		\ell
		\|_{\infty}
		\le 
		\max \left\{
		\frac{1 - \omega^{\alpha-1}}{2} \beta q_*^{\alpha - 1},
		\frac{1 - \omega^{\alphab-1}}{2} \betab q_*^{\alphab - 1}
		\right\},
		\\
		0
		\le
		\beta' - \beta
		\le
		\max\left\{
			(1-\omega^{\alpha-1})
			\beta,
			\frac{ 1-\omega^{\alphab-1}}{\omega}
			\betab
			q_*^{\alphab - \alpha}
		\right\}.
		\label{eq:boundqr1condibeta}
	\end{align}
	We then have
		$
		r_i \le 2 q_i
		$
	for all $i \in [K]$.
\end{lemma}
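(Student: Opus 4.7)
The plan is to proceed via the first-order (KKT) optimality conditions of the two FTRL problems combined with a pigeonhole argument on the simplex. First I would observe that since $\nabla_i \psi(p) = -p_i^{\alpha-1} + 1/\alpha$ (and similarly for $\psib$) diverges as $p_i \downarrow 0$ while $\beta, \beta' > 0$, both minimizers $q$ and $r$ lie in the relative interior of $\cP(K)$. Writing Lagrange multipliers $\mu, \mu'$ for the simplex constraint, the optimality conditions read, for every $i \in [K]$,
\begin{align*}
  L_i = \mu + \beta q_i^{\alpha-1} + \betab q_i^{\alphab-1},
  \qquad
  L_i + \ell_i = \mu' + \beta' r_i^{\alpha-1} + \betab r_i^{\alphab-1}.
\end{align*}
A preliminary reduction handles indices where $q_i > q_*$: the definition of $q_*$ forces $q_i = \max_j q_j > 1/2$, and then $r_i \le 1 < 2 q_i$ trivially, so I may assume $q_i \le q_*$ throughout.

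Next, suppose for contradiction that some such index $i^\dagger$ satisfies $r_{i^\dagger} > \omega^2 q_{i^\dagger} = 2 q_{i^\dagger}$. Since $\sum_j r_j = \sum_j q_j = 1$, the pigeonhole principle furnishes some $j^\dagger$ with $r_{j^\dagger} \le q_{j^\dagger}$. Subtracting the two KKT conditions at $j^\dagger$ and using $\beta' \ge \beta$ together with the monotone decreasing character of $x \mapsto x^{\alpha-1}$ and $x \mapsto x^{\alphab-1}$ yields $\mu' - \mu \le \ell_{j^\dagger}$. Performing the same subtraction at $i^\dagger$ but pushing the strict inequality $r_{i^\dagger} > \omega^2 q_{i^\dagger}$ through those decreasing maps gives
\begin{align*}
  \mu' - \mu > \ell_{i^\dagger} + (\beta - \omega^{2(\alpha-1)}\beta')\, q_{i^\dagger}^{\alpha-1} + \betab \bigl(1 - \omega^{2(\alphab-1)}\bigr) q_{i^\dagger}^{\alphab-1}.
\end{align*}
Combining the two bounds with $\ell_{j^\dagger} - \ell_{i^\dagger} \le 2\|\ell\|_\infty$ and using $q_{i^\dagger} \le q_*$ to write $q_{i^\dagger}^{\alpha-1} \ge q_*^{\alpha-1}$ and $q_{i^\dagger}^{\alphab-1} \ge q_*^{\alphab-1}$, I reduce the desired contradiction to showing
\begin{align*}
  (\beta - \omega^{2(\alpha-1)}\beta')\, q_{i^\dagger}^{\alpha-1} + \betab \bigl(1 - \omega^{2(\alphab-1)}\bigr) q_{i^\dagger}^{\alphab-1} \ge 2 \|\ell\|_\infty .
\end{align*}

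The main obstacle will be the case analysis forced by the two-way maxima in \eqref{eq:boundqr1condiell} and \eqref{eq:boundqr1condibeta}. In the branch $\beta' - \beta \le (1-\omega^{\alpha-1})\beta$, the algebraic identity $1 - 2 a^2 + a^3 = (1-a)(1 + a - a^2)$ applied with $a = \omega^{\alpha-1} \in (0,1)$ shows $\beta - \omega^{2(\alpha-1)}\beta' \ge (1-\omega^{\alpha-1})\beta$, so the first summand alone dominates $(1-\omega^{\alpha-1})\beta q_*^{\alpha-1}$, while the second summand exceeds $(1-\omega^{\alphab-1})\betab q_*^{\alphab-1}$ since $1 - \omega^{2(\alphab-1)} = (1-\omega^{\alphab-1})(1+\omega^{\alphab-1}) \ge 1 - \omega^{\alphab-1}$. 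In the complementary branch $\beta' - \beta \le \tfrac{1-\omega^{\alphab-1}}{\omega}\betab q_*^{\alphab-\alpha}$, the first coefficient may turn negative, but the adverse contribution $-\omega^{2\alpha-3}(1-\omega^{\alphab-1})\betab q_*^{\alphab-\alpha} q_{i^\dagger}^{\alpha-1}$ is absorbed into the $\betab$-term using $q_{i^\dagger}^{\alphab-\alpha} \ge q_*^{\alphab-\alpha}$ (since $\alphab < \alpha$) and the inequality $\omega^{\alphab-1} \ge \omega^{2\alpha-3}$ (equivalent to $\alpha \le 1$); the residual $\beta(1-\omega^{2(\alpha-1)}) q_*^{\alpha-1}$ still dominates $(1-\omega^{\alpha-1})\beta q_*^{\alpha-1}$ because $\omega^{2(\alpha-1)} \le \omega^{\alpha-1}$. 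In either branch and under either of the two alternatives in \eqref{eq:boundqr1condiell}, the displayed bound holds and delivers the contradiction.
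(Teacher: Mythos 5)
Your proof is correct but takes a genuinely different route from the paper's. The paper introduces an intermediate minimizer $s \in \argmin_p\{\langle L+\ell, p\rangle + \beta\psi(p) + \betab\psib(p)\}$ and handles the two perturbations separately: it first shows $\omega^{-1}q_i \le s_i \le \omega q_i$ (perturbing the loss by $\ell$ under the original coefficients, using condition \eqref{eq:boundqr1condiell}), then shows $r_i \le \omega s_i$ (increasing $\beta$ to $\beta'$ under the fixed perturbed loss, using condition \eqref{eq:boundqr1condibeta}), and finally concatenates to get $r_i \le \omega^2 q_i = 2 q_i$. You instead compare $q$ and $r$ directly: assuming $r_{i^\dagger} > 2 q_{i^\dagger}$ at some index with $q_{i^\dagger} \le q_*$, you extract $j^\dagger$ with $r_{j^\dagger} \le q_{j^\dagger}$ by pigeonhole, squeeze $\mu'-\mu$ between the two subtracted KKT conditions, and then drive a contradiction with a case split on the two branches of \eqref{eq:boundqr1condibeta}. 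The price you pay is a messier coefficient $\beta - \omega^{2(\alpha-1)}\beta'$ whose sign depends on the branch (requiring the $1-2a^2+a^3 = (1-a)(1+a-a^2)$ identity in one branch and the absorption of the $\omega^{2\alpha-3}$ term into the $\betab$-part in the other); what you buy is that you avoid the intermediate minimizer and the two-step modularity, giving an elementary one-shot contradiction. Both conditions \eqref{eq:boundqr1condiell} and \eqref{eq:boundqr1condibeta} enter your argument through the single inequality that bounds the combined coefficient from below by $(1-\omega^{\alpha-1})\beta q_*^{\alpha-1} + (1-\omega^{\alphab-1})\betab q_*^{\alphab-1} \ge 2\|\ell\|_\infty$, which is the same quantity the paper uses but deployed in one piece rather than split across the two steps. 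One cosmetic remark: the inequality $\omega^{\alphab-1} \ge \omega^{2\alpha-3}$ you invoke is equivalent to $\alphab \ge 2\alpha - 2$, not literally to $\alpha \le 1$; it holds under the hypothesis because $\alphab \ge 0 > 2\alpha - 2$ when $\alpha < 1$, so the step is sound even though the parenthetical justification is phrased loosely.
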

\begin{proof}
	Let $i^* \in \argmax_{i \in [K]} q_i$.
	We then have $q_* = \min\{ q_{i^*}, 1 - q_{i^*} \}$.
	For any $i \in [K] \setminus \{ i^* \}$,
	we have
	$q_i \le q_{i^*}$ and
	$q_i = 1 - \sum_{i' \in [K] \setminus \{ i \}} q_{i'} \le 1 - q_{i^*}$,
	which implies $q_i \le q_{*}$.
	If $q_{i^*} > q_{*}$,
	we have $q_{i^*} > 1 - q_{i^*}$,
	which means $q_{i^*} > 1/2$.
	Hence,
	we can see that it
	suffices to show $r_i \le 2 q_i$ for all $i \in [K]$ such that $q_i \le q_*$.
	In fact,
	if $q_i > q_*$,
	such $i$ must be $i^*$ and $q_{i^*} > 1/2$,
	and therefore it is qlear that $ r_{i} \le 1 \le 2 q_{i} $.
	In the following,
	we focus on $i$ such that $q_i \le q_*$.

	We define a monotone decreasing function $g:\re_{>0} \rightarrow \re_{>0}$ by
	\begin{align}
		g(x) = \beta x^{\alpha-1} + \betab x^{\alphab-1}.
	\end{align}
	and define
	\begin{align}
		s
		\in \argmin_{p \in \cP(K)} \left\{
			\linner L + \ell, p \rinner
			+
			\beta \psi(p)
			+
			\betab \psib(p)
		\right\}.
	\end{align}

	We first show that $\omega^{-1} q_i \le s_i \le \omega q_i$ holds for all $i$ such that $q_i \le q_{*}$.
	From the first-order optimality condition,
	there exists $\lambda \in \re$ such that
	\begin{align}
		g(s_i)
		=
		g(q_i) + 
		\ell_i + \lambda
	\end{align}
	holds for all $i \in [K]$.
	If $\lambda < - \| \ell \|_{\infty}$,
	we have
	$g(s_i) < g(q_i)$ for all $i \in [K]$.
	Then,
	as $g$ is monotone decreasing,
	we have
	$s_i > q_i$ for all $i \in [K]$,
	which contradicts to $\| s \|_1 = \| q \|_1 = 1$.
	Hence,
	we have $\lambda \ge - \| \ell \|_{\infty}$.
	Similarly,
	we can see
	$\lambda \le \| \ell \|_{\infty}$.
	We hence have
	\begin{align}
		g(q_i) - 2 \| \ell \|_{\infty}
		\le
		g(s_i)
		\le
		g(q_i) + 2 \| \ell \|_{\infty}
	\end{align}
	for all $i \in [K]$.
	This implies that
	$\omega^{-1} q_i \le s_i \le \omega q_i$
	for all $i $ such that $q_i \le q_*$.
	In fact,
	we have
	\begin{align}
		\nonumber
		g(\omega q_i)
		&
		=
		\beta (\omega q_i)^{\alpha-1}
		+
		{\betab} (\omega q_i)^{{\alphab}-1}
		=
		\beta  q_i^{\alpha-1}
		+
		{\betab} q_i^{{\alphab}-1}
		-
		\beta (1 - \omega^{\alpha-1})q_i^{\alpha - 1}
		-
		{\betab} (1 - \omega^{\alpha - 1}) q_i^{{\alphab}-1}
		\\
		&
		\le
		g(q_i)
		-
		\beta (1 - \omega^{\alpha-1})q_*^{\alpha - 1}
		-
		{\betab} (1 - \omega^{\alpha - 1}) q_*^{{\alphab}-1}
		\le
		g(q_i) - 2 \| \ell \|_{\infty}
		\le 
		g(s_i),
		\\
		\nonumber
		g(\omega^{-1} q_i)
		&
		=
		\beta (\omega^{-1}q_i)^{\alpha-1}
		+
		\betab (\omega^{-1} q_i)^{\alphab - 1}
		=
		\beta q_i^{\alpha-1}
		+
		\betab q_i^{\alphab - 1}
		+
		\beta (\omega^{1-\alpha} - 1)q_i^{\alpha-1}
		+
		\betab (\omega^{1-\alpha} - 1)q_i^{\alphab - 1}
		\\
		&
		\ge
		g(q_i)
		+
		\beta (1 - \omega^{\alpha-1})q_*^{\alpha-1}
		+
		\betab (1 - \omega^{\alpha-1})q_*^{\alphab - 1}
		\ge
		g(q_i)
		+
		2 \| \ell \|_{\infty}
		\ge
		g(s_i) .
	\end{align}
	Since $g$ is a decreasing function,
	these implies that $\omega^{-1} q_i \le s_i \le \omega q_i$.

	We next show that $r_i \le \omega s_i$ holds for all $i$ such that $q_i \le q_*$.
	From the first-order optimality condition,
	there exists $\lambda \in \re$ such that
	\begin{align}
		g(r_i)
		+
		(\beta' - \beta) r_i^{\alpha-1}
		=
		g(s_i) + \lambda
	\end{align}
	holds for all $i \in [K]$.
	If $\lambda < 0$,
	we have
	$g(r_i) = g(s_i) + \lambda - (\beta' - \beta)r_i^{\alpha - 1} < g(s_i)$,
	which contradicts to $\|r \|_1 = \| s \|_1 = 1$.
	We hence have $\lambda \ge 0$,
	which implies
	\begin{align}
		g(r_i) + (\beta' - \beta) r_i^{\alpha-1} = g(s_i) + \lambda \ge g(s_i) .
	\end{align}
	For $i \in [K]$ such that $q_i \le q_*$,
	we have
	\begin{align}
		\nonumber
		&
		g(\omega s_i)
		+
		(\beta' - \beta)
		(\omega s_i)^{\alpha - 1}
		=
		\beta \omega^{\alpha-1} 
		s_i^{\alpha-1}
		+
		\betab \omega^{\alphab-1} 
		s_i^{\alphab-1}
		+
		(\beta' - \beta)
		s_i^{\alpha-1}
		\\
		\nonumber
		&
		=
		g(s_i)
		+
		\beta (\omega^{\alpha-1} -1)
		s_i^{\alpha-1}
		+
		\betab (\omega^{\alphab-1} -1)
		s_i^{\alphab-1}
		+
		(\beta' - \beta)
		s_i^{\alpha-1}
		\\
		\nonumber
		&
		\le
		g(s_i)
		+
		\beta (\omega^{\alpha-1} -1)
		s_i^{\alpha-1}
		+
		\betab (\omega^{\alphab-1} -1)
		s_i^{\alphab-1}
		+
		\left(
			(1-\omega^{\alpha-1})\beta
			+
			\frac{1-\omega^{\alphab - 1}}{\omega}
			\betab
			q_*^{\alphab - \alpha}
		\right)
		s_i^{\alpha-1}
		\\
		\nonumber
		&
		=
		g(s_i)
		+
		\betab (\omega^{\alphab-1} -1)
		s_i^{\alphab-1}
		+
		\left(\frac{q_*}{s_i}\right)^{\alphab - \alpha}
		\frac{1-\omega^{\alphab - 1}}{\omega}
		\betab
		s_i^{\alphab-1}
		\\
		\nonumber
		&
		=
		g(s_i)
		+
		\betab (\omega^{\alphab-1} -1)
		s_i^{\alphab-1}
		+
		(1-\omega^{\alphab - 1})
		\betab
		s_i^{\alphab-1}
		=
		g (s_i)
		\le
		g(r_i)
		+
		(\beta' - \beta)
		r_i^{\alpha - 1}.
	\end{align}
	This
	implies
	that
	$r_i \le \omega s_i $
	since the function of $x \mapsto g(x) + (\beta'-\beta)x^{\alpha - 1}$ is monotone decreasing.

	We hence have $r_i \le \omega s_i \le \omega^2 q_i = 2q_i$
	for all $i \in [K]$ such that $q_i \le q_*$,
	which completes the proof.
\end{proof}

\begin{lemma}
	\label{lem:boundqr2}
	Fix arbitrary $\omega \in (1, 2]$.
	Let $G = (V = [K], E)$ be an arbitrary undirected graph such that
	$(i,i) \in E$ holds for all $i \in V$,
	and let $N(i)$ denote the neighborhood of $i$,
	i.e.,
	$N(i) = \{ j  \in V \mid (i, j) \in E \}$.
	Suppose
	$q, r \in \cP(K)$ are given by
	\begin{align}
		q 
		&
		\in \argmin_{p \in \cP(K)} \left\{
			\linner L, p \rinner
			+
			\beta \psi(p)
			+
			\betab \psib(p)
		\right\},
		\\
		r
		&
		\in \argmin_{p \in \cP(K)} \left\{
			\linner L + \ell, p \rinner
			+
			\beta
			\psi(p)
			+
			\betab
			\psib(p)
		\right\}
	\end{align}
	with
	\begin{align}
		\psi(p)
		=
		-
		\frac{1}{\alpha}
		\sum_{i=1}^{K} ( p_i^{\alpha} -  p_i ),
		\quad
		\psib(p)
		=
		-
		\frac{1}{\alphab}
		\sum_{i=1}^{K} ( p_i^{\alphab} -  p_i ),
	\end{align}
	where $0 \le \alphab < \alpha < 1$,
	$\beta \ge \frac{K}{(\omega - 1)(1-\omega^{\alpha - 1})}  $,
	and
	$ \betab \ge 0 $.
	Suppose 
	$\ell$ is given by
	\begin{align}
		\label{eq:lemboundqr2condiell}
		\ell_i
		=
		\frac{\mathbf{1}[i' \in N(j)]}{\sum_{i' \in N(j)} q_{i'} } \ell'_i
	\end{align}
	for some $j$ and $\ell' \in [0,1]^K$.
	We then have
		$
		r_i \le \omega q_i
		$
	for all $i \in [K]$.
\end{lemma}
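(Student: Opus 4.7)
The plan is to reduce $r_i \le \omega q_i$ to a single scalar bound on a Lagrange multiplier, and then verify that bound using the sum-to-one constraint on $r$ together with the key normalization $\sum_i q_i \ell_i = \sum_{i\in N(j)} q_i \ell'_i / \sum_{i'\in N(j)} q_{i'} \le 1$ built into the graph-structured form of $\ell$.

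I first introduce Lagrange multipliers $\lambda_q,\lambda_r$ for the two simplex constraints and set $g(x)=\beta x^{\alpha-1}+\betab x^{\alphab-1}$, which is positive, strictly decreasing, and convex on $(0,1]$. Subtracting the two first-order conditions yields, with $\mu:=\lambda_r-\lambda_q$, the identity $g(r_i)=g(q_i)+\ell_i-\mu$ for every $i$. Since $\ell_i \ge 0$, a negative $\mu$ would force $r_i<q_i$ on every coordinate and violate $\sum_i r_i=\sum_i q_i=1$, so $\mu \ge 0$. Applying the mean-value theorem to $g(q_i)-g(r_i)=\mu-\ell_i$ and using $|g'(x)| \ge (1-\alpha)\beta x^{\alpha-2}$, any coordinate $i$ with $r_i>q_i$ satisfies
\[
r_i - q_i \;\le\; \frac{\mu}{|g'(r_i)|} \;\le\; \frac{\mu\, r_i^{2-\alpha}}{(1-\alpha)\beta} \;\le\; \frac{\mu\, r_i}{(1-\alpha)\beta},
\]
where the last step uses $r_i \le 1$ and $2-\alpha \ge 1$. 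Rearranging gives the pointwise bound $r_i/q_i \le 1/(1-\mu/((1-\alpha)\beta))$, which is at most $\omega$ as soon as $\mu \le (1-\alpha)\beta(\omega-1)/\omega$; coordinates with $r_i\le q_i$ already satisfy $r_i/q_i\le 1\le \omega$.

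It therefore suffices to prove $\mu \le (1-\alpha)\beta(\omega-1)/\omega$. Running the MVT in the opposite direction on the ``loss'' coordinates $\{i:r_i<q_i\}\subseteq N(j)$ and equating total mass gain to total mass loss yields
\[
\mu \sum_{i:\, r_i > q_i} q_i^{2-\alpha} \;\le\; \sum_i \ell_i\, q_i^{2-\alpha} \;\le\; \sum_i \ell_i\, q_i \;\le\; 1,
\]
using $q_i^{2-\alpha}\le q_i$ for the middle step. A Jensen (power-mean) lower bound on the remaining sum, $\sum_i q_i^{2-\alpha}\ge K^{\alpha-1}$, together with the hypothesis $\beta\ge K/((\omega-1)(1-\omega^{\alpha-1}))$ and the elementary estimate $1-\omega^{\alpha-1}\le (1-\alpha)\ln\omega\le (1-\alpha)(\omega-1)$, converts this into the required bound $\mu\le K^{1-\alpha}\le (1-\alpha)\beta(\omega-1)/\omega$.

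The hardest step is the Jensen lower bound on $\sum_{i:r_i>q_i}q_i^{2-\alpha}$ restricted to the gain coordinates: the bound is only tight when the mass-gain set is broad, so a delicate case analysis is needed when the gain concentrates on coordinates with very small $q_i$. Resolving this requires combining the pointwise ratio bound above with the scalar upper bound on $\mu$ and showing that a simultaneous violation is inconsistent with the hypothesis on $\beta$. The additional regularizer $\betab\psib$ only enlarges $|g'|$ and therefore strengthens every MVT-based inequality in the argument, so it integrates transparently without requiring separate treatment.
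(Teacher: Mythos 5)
Your overall strategy (bound the single Lagrange multiplier $\mu$, then propagate) is genuinely different from the paper's proof of Lemma~\ref{lem:boundqr2}; in fact it is essentially the paper's proof of the neighboring Lemma~\ref{lem:boundqr3}. The paper instead argues by a case split on $Q_j := \sum_{i'\in N(j)}q_{i'}$: if $Q_j \ge \epsilon := 1/(\beta(1-\omega^{\alpha-1}))$, then $\lambda \le \|\ell\|_\infty \le 1/Q_j \le \beta(1-\omega^{\alpha-1})$ and the conclusion is immediate; if $Q_j < \epsilon \le 1/K$, then the most probable coordinate $i^*$ (with $q_{i^*}\ge 1/K$) lies outside $N(j)$, the net gain at $i^*$ is at most $Q_j < \epsilon$, the ratio $a=r_{i^*}/q_{i^*}$ is at most $1+K\epsilon\le\omega$, and the monotonicity of $x\mapsto g(x)-g(ax)$ propagates the ratio bound to every coordinate. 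So your route is a legitimate alternative in spirit, but as written it has several concrete gaps.

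First, the handling of the hybrid regularizer $\betab\psib$ is incorrect. Your claim that ``$\betab\psib$ only enlarges $|g'|$ and therefore strengthens every MVT-based inequality'' is true on the loss side but \emph{false} on the gain side: for gain coordinates, the inequality $r_i-q_i\ge(\mu-\ell_i)/|g'(q_i)|$ becomes \emph{weaker} when $|g'|$ grows, so the displayed inequality $\mu\sum_{i:r_i>q_i}q_i^{2-\alpha}\le\sum_i\ell_iq_i^{2-\alpha}$ does not follow from the argument when $\betab>0$. The fix is to weight each side of the mass balance by the natural weight $(-g'(q_i))^{-1}$ (not $q_i^{2-\alpha}/((1-\alpha)\beta)$), giving $\mu\sum_i(-g'(q_i))^{-1}\le\sum_i\ell_i(-g'(q_i))^{-1}$, and then use $((1-\alpha)\beta+(1-\alphab)\betab)^{-1}q_i^2 \le (-g'(q_i))^{-1} \le ((1-\alpha)\beta+(1-\alphab)\betab)^{-1}q_i$; this is exactly the device in the paper's proof of Lemma~\ref{lem:boundqr3}.

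Second, the ``delicate case analysis'' you flag around the Jensen step is not needed: the mass-balance inequality above holds with the sum over \emph{all} of $[K]$ on both sides (the negative terms on the loss side cancel against the gain side), so there is no concentration issue. You have placed an artificial restriction to the gain set.

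Third, the final arithmetic fails. Your pointwise bound $r_i/q_i\le 1/(1-\mu/((1-\alpha)\beta))$ requires $\mu\le(1-\alpha)\beta(\omega-1)/\omega$, and the chain $\mu\le K^{1-\alpha}\le(1-\alpha)\beta(\omega-1)/\omega$ is not valid in general: with $\omega=2$, $\alpha=1/2$, and the given lower bound on $\beta$ it reduces to $K^{\alpha}\ge\omega(\omega-1)$, i.e.\ $K\ge4$, which fails for small $K$. This step is also unnecessarily lossy: a direct comparison $g(\omega q_i)=g(q_i)-(1-\omega^{\alpha-1})\beta q_i^{\alpha-1}-(1-\omega^{\alphab-1})\betab q_i^{\alphab-1}\le g(q_i)-\beta(1-\omega^{\alpha-1})\le g(q_i)-\mu\le g(r_i)$ only needs $\mu\le\beta(1-\omega^{\alpha-1})$, which does follow (for all $K\ge1$) from $\mu\le K$ and $\beta\ge K/((\omega-1)(1-\omega^{\alpha-1}))$ with $\omega\le 2$. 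If you replace your pointwise ratio step by this comparison and use the $(-g'(q_i))^{-1}$ weights, your approach goes through cleanly and effectively reproduces Lemma~\ref{lem:boundqr3}, but it remains a different proof than the paper's two-case argument for this particular lemma.
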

\begin{proof}
	Denote $Q_j = \sum_{i' \in N(j)} q_{i'}$.
	Define
	\begin{align}
		g(x) = \beta x^{\alpha-1} + \betab x^{\alphab-1} .
	\end{align}
	From the first-order optimality condition,
	there exists $\lambda \in \re$ such that
	\begin{align}
		g(r_i) = g(q_i) + \ell_i - \lambda
	\end{align}
	holds for all $i \in [K]$.
	As $g$ is monotone decreasing and $\| r \|_1 = \| q \|_1 = 1$,
	we have
	$0 \le \lambda \le \| \ell \|_{\infty}$
	We also have $ \| \ell \|_{\infty} \le Q_j$ fron the assumption of \eqref{eq:lemboundqr2condiell}.
	Suppose $Q_j \ge \epsilon$
	with $\epsilon := \frac{1}{\beta(1-\omega^{\alpha-1})} \le \frac{\omega-1}{K} \le \frac{1}{K}$.
	We then have
	$\lambda \le 1/Q_j \le 1/\epsilon$.
	For $i \in [K] $,
	we have
	\begin{align}
		g(r_i)
		\ge
		g(q_i)
		-
		\lambda
		\ge
		g(q_i)
		-
		1/\epsilon
		\ge
		g(\omega q_i)
		+
		\beta (1 - \omega^{\alpha - 1}) q_i^{\alpha - 1}
		-
		1/\epsilon
		\ge
		g( \omega q_i),
	\end{align}
	which implies $r_i \le \omega q_i$.
	Suppose $Q_j < \epsilon$.
	Then,
	noting that $\epsilon \le 1/K$,
	we can see that
	$i^* \in \argmax_{i \in [K]} q_i$ 
	is not included in $N(j)$ as
	$q_{i^*} \ge 1/K$.
	As we have $r_i \ge q_i$ for all $i \in [K] \setminus N(j)$,
	we have
	$r_{i^*} - q_{i^*} \le 
	\sum_{i \in [K] \setminus N(j)}( r_i - q_i ) = 
	\sum_{i \in N(j)}( q_i - r_i ) 
	\le Q_j < \epsilon$.
	Denote 
	$a := r_{i^*} / q_{i^*} \ge 1$.
	We then have
	$
		a
		=
		1
		+
		(r_{i^*} -q_{i^*})/q_{i^*}
		<
		1 + K \epsilon
    \le
    \omega
	$.
	In addition,
	we have
	\begin{align}
		g(q_i) - g(r_i)
		=
		\lambda - \ell_i
		\le
		\lambda
		=
		g(q_{i^*}) - g(r_{i^*})
		=
		g(q_{i^*}) - g(a q_{i^*})
		\le
		g(q_{i}) - g(a q_{i}),
	\end{align}
	where the last inequality follows from the fact that
	the function of $x \mapsto g(x) - g(ax)$ is monotone non-increasing for $a \ge 1$.
	This means that
	$g (a q_i) \le g(r_i)$,
	which implies
	$r_i \le a q_i $ as $g$ is monotone decreasing.
	By combining this with $a \le \omega$,
	we obtain 
	$r_i \le \omega q_i $ for all $i \in [K]$.
\end{proof}
\begin{lemma}
	\label{lem:boundqr3}
	Fix arbitrary $\omega > 1$.
	Suppose
	$q, r \in \cP(K)$ are given by
	\begin{align}
		q 
		&
		\in \argmin_{p \in \cP(K)} \left\{
			\linner L, p \rinner
			+
			\beta \psi(p)
			+
			\betab \psib(p)
		\right\},
		\\
		r
		&
		\in \argmin_{p \in \cP(K)} \left\{
			\linner L + \ell, p \rinner
			+
			\beta
			\psi(p)
			+
			\betab
			\psib(p)
		\right\}
	\end{align}
	with
	\begin{align}
		\psi(p)
		=
		-
		\frac{1}{\alpha}
		\sum_{i=1}^{K} ( p_i^{\alpha} -  p_i ),
		\quad
		\psib(p)
		=
		-
		\frac{1}{\alphab}
		\sum_{i=1}^{K} ( p_i^{\alphab} -  p_i ),
	\end{align}
	where $0 \le \alphab < \alpha < 1$,
	and
	$ \betab \ge 0 $.
	Suppose $\ell \in \re_{\ge 0}^K$ and
	\begin{align}
		\label{eq:sumqell}
		\sum_{i=1}^K q_i \ell_i \le 
		\frac{1}{K}
		\left(
		(1 - \omega^{\alpha-1})
		\beta 
		+
		(1 - \omega^{\alphab-1})
		\betab 
		\right)
	\end{align}
	We then have
		$
		r_i \le \omega q_i
		$
	for all $i \in [K]$.
\end{lemma}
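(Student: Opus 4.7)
My plan is to adapt the first-order-optimality argument used in Lemmas~\ref{lem:boundqr1} and \ref{lem:boundqr2} by turning the hypothesis $\sum_i q_i \ell_i \le \frac{1}{K}\bigl((1-\omega^{\alpha-1})\beta + (1-\omega^{\alphab-1})\betab\bigr)$ into a usable upper bound on the Lagrange-multiplier gap. Writing $g(x) = \beta x^{\alpha-1} + \betab x^{\alphab-1}$ (which is positive, strictly decreasing, and convex on $(0,\infty)$), the simplex KKT conditions for the two optimization problems give a single scalar $\lambda$ with $g(r_i) = g(q_i) + \ell_i - \lambda$ for every $i$. Since $g$ is decreasing, the desired conclusion $r_{i_0}\le\omega q_{i_0}$ is equivalent to $g(r_{i_0}) \ge g(\omega q_{i_0})$, i.e.,
\begin{align}
  \lambda - \ell_{i_0} \le (1-\omega^{\alpha-1})\beta\, q_{i_0}^{\alpha-1} + (1-\omega^{\alphab-1})\betab\, q_{i_0}^{\alphab-1}.
\end{align}
Because $\ell_{i_0}\ge 0$ and $q_{i_0}\le 1$ forces $q_{i_0}^{\alpha-1},q_{i_0}^{\alphab-1}\ge 1$, it suffices to establish the single uniform bound $\lambda \le (1-\omega^{\alpha-1})\beta + (1-\omega^{\alphab-1})\betab$.

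To control $\lambda$, I would invoke the convexity of $g^{-1}$, which follows from $g''>0$ and $g'<0$ via $(g^{-1})'' = -g''/(g')^3 > 0$. The tangent-line inequality applied to $r_i = g^{-1}(g(q_i) + \ell_i - \lambda)$ yields $r_i \ge q_i + (\ell_i - \lambda)/g'(q_i)$; summing over $i$ and using $\sum_i r_i = \sum_i q_i = 1$ collapses to $\sum_i (\ell_i - \lambda)/g'(q_i) \le 0$. Setting $u_i = -1/g'(q_i) > 0$ and rearranging produces the weighted-average bound
\begin{align}
  \lambda \le \frac{\sum_i \ell_i u_i}{\sum_j u_j}.
\end{align}

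The most delicate step, and the one I expect to be the main obstacle, is converting this $u$-weighted average into the $q$-weighted average appearing in the hypothesis, recovering exactly the factor $K$. The key observation is that $u_i/q_i = 1/\bigl(\beta(1-\alpha)q_i^{\alpha-1} + \betab(1-\alphab)q_i^{\alphab-1}\bigr)$ is non-decreasing in $q_i$ (because both $q^{\alpha-1}$ and $q^{\alphab-1}$ are decreasing in $q$), so its maximum is attained at $i_\star\in\argmax_i q_i$. Writing $u_i \le (u_{i_\star}/q_{i_\star})\, q_i$ gives $\sum_i \ell_i u_i \le (u_{i_\star}/q_{i_\star})\sum_i \ell_i q_i$, and combining $u_{i_\star}\le\sum_j u_j$ with the pigeonhole estimate $q_{i_\star}\ge 1/K$ yields $\lambda \le K \sum_i q_i \ell_i$. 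Plugging in the hypothesis then delivers $\lambda \le (1-\omega^{\alpha-1})\beta + (1-\omega^{\alphab-1})\betab$, which by the reduction above completes the proof. Once this monotonicity-plus-pigeonhole observation is in hand, everything else is bookkeeping.
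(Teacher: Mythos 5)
Your proof is correct and follows essentially the same route as the paper's: the first-order optimality condition gives $g(r_i) = g(q_i) + \ell_i - \lambda$, convexity (you use convexity of $g^{-1}$, the paper uses convexity of $g$ — same inequality) plus $\sum r_i = \sum q_i = 1$ yields the weighted-average bound $\lambda \le \sum_i \ell_i u_i / \sum_j u_j$ with $u_i = -1/g'(q_i)$, and then the factor $K$ is extracted to reduce to the hypothesis. The only cosmetic difference is in that last extraction: the paper uses the two-sided bounds $Cx^{-1} \le -g'(x) \le Cx^{-2}$ with $C = (1-\alpha)\beta + (1-\bar\alpha)\bar\beta$ together with $\sum_i q_i^2 \ge 1/K$, while you use monotonicity of $u_i/q_i$ in $q_i$ together with $\max_i q_i \ge 1/K$; both give $\lambda \le K\sum_i q_i\ell_i$.
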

\begin{proof}
	Define
	\begin{align}
		g(x) = \beta x^{\alpha-1} + \betab x^{\alphab-1} .
	\end{align}
	From the first-order optimality condition,
	there exists $\lambda \in \re$ such that
	\begin{align}
		\label{eq:gelllambda}
		g(r_i) = g(q_i) + \ell_i - \lambda
	\end{align}
	As $g$ is monotone decreasing and $\| r \|_1 = \| q \|_1 = 1$,
	we have
	$0 \le \lambda \le \| \ell \|_{\infty}$.
	Let 
	$g'(x) = (\alpha - 1) \beta x^{\alpha - 2} + (\alphab - 1) \betab x^{\alphab - 2} < 0$
	denote the derivative of $g(x)$.
	As $g$ is a convex function,
	we have
	\begin{align}
		\label{eq:gconv}
		g(r_i) \ge g(q_i) + g'(q_i) (r_i - q_i) .
	\end{align}
	Combining \eqref{eq:gelllambda} and \eqref{eq:gconv},
	we obtain
	\begin{align}
		\ell_i - \lambda \ge g'(q_i) (r_i - q_i),
	\end{align}
	which implies
	\begin{align}
		\sum_{i=1}^{K}
		(g'(q_i))^{-1} (\ell_i - \lambda ) \le 
		\sum_{i=1}^K ( r_i - q_i)
		=
		1 - 1
		=
		0.
	\end{align}
	We hence have
	\begin{align}
		\label{eq:lambdaell}
		\lambda
		\le
		\left(
		\sum_{i=1}^{K}
		\left(-g'(q_i)\right)^{-1}
		\right)^{-1}
		\sum_{i=1}^{K}
		\left( - g'(q_i) \right)^{-1} \ell_i .
	\end{align}
	Further,
	since it holds for any $x \in (0, 1)$ that
	\begin{align*}
		\left((1 - \alpha ) \beta + (1-\alphab) \betab \right) x^{-1}
		&
		\le
		(1 - \alpha ) \beta x^{\alpha - 2} + (1 - \alphab) \betab x^{\alphab - 2}
		=
		- g'(x)
		\\
		&
		\le
		\left((1 - \alpha ) \beta + (1-\alphab) \betab \right) x^{-2},
	\end{align*}
	we have
	\begin{align}
		\label{eq:gsum}
		\sum_{i=1}^{K}
		\left(-g'(q_i) \right)^{-1}
		\ge
		\sum_{i=1}^{K}
		\left((1 - \alpha ) \beta + (1-\alphab) \betab \right)^{-1} q_i^{2}
		\ge
		\left((1 - \alpha ) \beta + (1-\alphab) \betab \right)^{-1}
		\frac{1}{K}
	\end{align}
	and
	\begin{align}
		\label{eq:gellsum}
		\sum_{i=1}^{K}
		\left(-g'(q_i) \right)^{-1} \ell_i
		&
		\le
		\sum_{i=1}^{K}
		\left((1 - \alpha ) \beta + (1-\alphab) \betab \right)^{-1} q_i
		\ell_i
		\\
		&
		\le
		\left((1 - \alpha ) \beta + (1-\alphab) \betab \right)^{-1} 
		\frac{1}{K}
		(1 - \omega^{\alpha-1})
		\beta 
		+
		(1 - \omega^{\alphab-1})
		\betab ,
	\end{align}
	where the second inequality follows from the assumption of \eqref{eq:sumqell}.
	Combining \eqref{eq:lambdaell}, \eqref{eq:gsum} and \eqref{eq:gellsum},
	we obtain
	\begin{align}
		\label{eq:lambdabeta}
		\lambda
		\le
		(1 - \omega^{\alpha-1})
		\beta 
		+
		(1 - \omega^{\alphab-1})
		\betab .
	\end{align}
	Therefore,
	we have
	\begin{align}
		g(\omega q_i)
		&
		=
		\beta (\omega q_i)^{\alpha - 1}
		+
		\betab (\omega q_i)^{\alphab - 1}
		=
		g(q_i)
		-
		(1 - \omega^{\alpha-1})
		\beta q_i^{\alpha - 1}
		-
		(1 - \omega^{\alphab-1})
		\betab q_i^{\alphab - 1}
		\\
		&
		\le
		g(q_i)
		-
		(1 - \omega^{\alpha-1})
		\beta 
		-
		(1 - \omega^{\alphab-1})
		\betab 
		\le
		g(q_i)
		- \lambda
		\le
		g(r_i) 
	\end{align}
	for any $i \in [K]$,
	where the second and the last inequalities follow from \eqref{eq:lambdabeta} and \eqref{eq:gelllambda} with $\ell_i \ge 0$.
	Hence,
	as $g$ is monotone decreasing,
	we have $r_i \le \omega q_i$.
\end{proof}



\subsection{Proof of Proposition~\ref{lem:BOBW}}
\begin{proof}
	Fix arbitrary $i^* \in [K]$.
	Let $p^* \in \{ 0, 1 \}^K$ denote the indicator vector of $i^*$,
	i.e.,
	$p^*_{i^*} = 1$ and $p^*_i = 0$ for all $i \in [K] \setminus \{ i^* \}$.
	From the definition \eqref{eq:defpt} of $p_t$ and the assumption that $\hat{\ell}_t$ is an unbiased estimator of $\ell_t$,
	we have
	\begin{align}
		\nonumber
		R_T(i^*)
		&
		=
		\E \left[
			\sum_{t=1}^T \ell_{t, I(t)} - 
			\sum_{t=1}^T \ell_{t, i^*}
		\right]
		\\
		\nonumber
		&
		=
		\E \left[
			\sum_{t=1}^T \linner \ell_{t}, p_t - p^*  \rinner
		\right]
		\\
		\nonumber
		&
		=
		\E \left[
			\sum_{t=1}^T \linner \ell_{t}, q_t - p^*  \rinner
			+
			\sum_{t=1}^T \gamma_t \linner \ell_{t}, p_0 - q_t   \rinner
		\right]
		\\
		&
		\le
		\E \left[
			\sum_{t=1}^T \linner \hat{\ell}_{t}, q_t - p^*  \rinner
			+
			2
			\sum_{t=1}^T \gamma_t 
		\right].
	\end{align}
	From Lemma~\ref{lem:FTRL},
	we have
	\begin{align}
		\sum_{t=1} \linner \hat{\ell}_{t}, q_t - p^*  \rinner
		\le
		\sum_{t=1}^T
		\left(
			\linner 
			\hat{\ell}_t, q_{t} - q_{t+1}
			\rinner
			-
			\beta_t D(q_{t+1}, q_t)
			+
			(\beta_t - \beta_{t-1}) 
			h_t
			+
			\betab
			h'
		\right),
	\end{align}
	where 
	$D(p, q)$ represents the Bregman divergence associated with $\psi(p)$,
	i.e.,
	$D(p, q) = \psi(p) - \psi(q) - \linner \nabla \psi(q), p - q \rinner$,
	and
	we denote
	$h_t = -\psi(q_t)$
	and
	$\hb = -\psib(q_1) \le \frac{1}{\alphab}K^{1-\alphab}$.
	By combining these inequalities,
	we obtain
	\begin{align}
		\nonumber
		R_T(i^*)
		&
		\le
		\E \left[
		\sum_{t=1}^T
		\left(
			2\gamma_t
			+
			\linner 
			\hat{\ell}_t, q_{t} - q_{t+1}
			\rinner
			-
			\beta_t D(q_{t+1}, q_t)
			+
			(\beta_t - \beta_{t-1}) 
			h_t
			+
			\betab
			\hb
		\right)
		\right]
		\\
		\nonumber
		&
		=
		O
		\left(
		\E \left[
		\sum_{t=1}^T
		\left(
			\frac{z_t}{\beta_t}
			+
			(\beta_t - \beta_{t-1}) 
			h_{t-1}
			+
			\betab
			\hb
		\right)
		\right]
		\right)
		\\
		&
		=
		O \left(
			\E\left[
				F(\beta_{1:T}; z_{1:T}, h_{0:T-1} )
			\right]
			+
			\betab
			\hb
		\right),
		\label{eq:propRT}
	\end{align}
	where the second inequality follows from the assumption of \eqref{eq:BOBWcondition}
	and we define $h_{0} = h_1$.
	From Theorem~\ref{thm:FUB},
	if $\beta_t$ is given by \eqref{eq:stability-agnostic} with $\hat{h}_t = h_{t-1}$
	(which is clearly an upper bound on $h_{t-1}$),
	we have
	\begin{align}
		F(\beta_{1:T}; z_{1:T}, h_{0:T-1})
		&
		=
		O
		\left(
			\sqrt{
				{h}_1
				\sum_{t=1}^T z_t 
			}
			+
			\frac{z_{\max}}{\beta_1}
			+
			\beta_{1} \hat{h}_{1}
		\right),
		\label{eq:propF1}
		\\
		F(\beta_{1:T}; z_{1:T}, h_{0:T-1})
		&
		=
		O
		\left(
			\inf_{\epsilon \ge \frac{1}{T}} \left\{
			\sqrt{
			\sum_{t=1}^T z_t {h}_{t} \log (\epsilon T)
			+
			\frac{z_{\max}{h}_{1}}{\epsilon}
			}
			\right\}
			+
			\frac{z_{\max}}{\beta_1}
			+
			\beta_{1} \hat{h}_{1}
		\right).
		\label{eq:propF2}
	\end{align}
	By combining \eqref{eq:propRT} and \eqref{eq:propF1},
	we obtain
	$R_T =
	O \left( \E \left[ \sqrt{h_1 \sum_{t=1}^T z_t} + \kappa \right] \right)
	\le
	O \left(  \sqrt{h_1 z_{\max} T } + \kappa \right)
	$
	in adversarial regimes.

	We next consider the case of adversarial regimes with self-bounding constraints.
	By combining \eqref{eq:propRT}, \eqref{eq:propF2},
	and Jensen's inequality,
	we obtain
	\begin{align}
		R_T
		=
		O \left(
			\sqrt{
				\E \left[
			\sum_{t=1}^T z_t h_t
			\right]
			\log (\epsilon T)
			+
			\frac{z_{\max} h_1 }{\epsilon}
			}
			+
			\kappa
		\right)
	\end{align}
	for any $\epsilon \ge 1/T$.
	Under the condition of adversarial regimes with $(\Delta, C, T)$ self-bounding constraints,
	we have
	\begin{align}
		\E \left[ \sum_{t=1}^T z_t h_t \right]
		\le
		\omega(\Delta)
		\E \left[ \sum_{t=1}^T \linner \Delta, q_t \rinner \right]
		\le
		2 
		\omega(\Delta)
		\E \left[ \sum_{t=1}^T \linner \Delta, p_t \rinner \right]
		\le
		2 
		\omega(\Delta)
		(R_T + 2 C),
	\end{align}
	where the first inequality follows from \eqref{eq:htztomega},
	the second inequality follows from $p_{ti} = (1-\gamma_t)q_{ti} + \gamma_t p_{0i} \ge \frac{1}{2} q_{ti}$,
	and the last inequality follows from the assumption of self-bounding constraints given in Definition~\ref{def:ASC}.
	We hence have
	\begin{align}
		R_T
		=
		O \left(
			\sqrt{
				\omega(\Delta) (R_T + C)
			\log (\epsilon T)
			+
			\frac{z_{\max} h_1 }{\epsilon}
			}
			+
			\kappa
		\right),
	\end{align}
	which implies
	\begin{align}
		R_T
		=
		O\left(
			\omega(\Delta) \log (\epsilon T)
			+
			\sqrt{
				C \omega(\Delta) \log (\epsilon T)
				+
				\frac{z_{\max} h_1 }{\epsilon}
			}
			+
			\kappa
		\right).
	\end{align}
	We here used the fact that
	$X = O(\sqrt{AX}+B)$
	implies $X = O(A + B)$ for any $X, A, B \ge 0$.
	By setting
	\begin{align}
		\epsilon
		=
		\frac{ {z_{\max} h_1} }{
			\omega(\Delta)^2
			+
			C \omega(\Delta)
		},
	\end{align}
	we obtain
	\begin{align*}
		R_T
		=
		O\left(
			\omega(\Delta) 
			\log_+ \left( 
				\frac{ z_{\max} h_1 T  }{
					\omega(\Delta)^2
					+
					C \omega(\Delta)
				}
			\right)
			+
			\sqrt{
				C \omega(\Delta) 
				\log_+ \left( 
					\frac{ z_{\max} h_1 T  }{
						\omega(\Delta)^2
						+
						C \omega(\Delta)
					}
				\right)
			}
			+
			\kappa
		\right).
	\end{align*}
\end{proof}

\subsection{Multi-Armed Bandit: Proof of Theorem~\ref{thm:BOBWMAB}}
\label{sec:ProofMAB}
From Proposition~\ref{lem:BOBW},
it suffices to verify that conditions \eqref{eq:BOBWcondition} and \eqref{eq:htztomega} hold.
\paragraph{Verifying condition \eqref{eq:BOBWcondition}}
In the following,
we denote 
\begin{align}
	\tilde{I}(t) \in \argmax_{i \in [K]} q_{ti}.
\end{align}
We then have $p_{t,\tilde{I}(t)} = q_{t, \tilde{I}(t)} \ge 1/K$,
and hence
$\hat{\ell}_{t, \tilde{I}(t)} \le \frac{\ell_{t,\tilde{I}}}{p_{t, \tilde{I}(t)}} \le K \le \frac{(1-\alpha) \beta_1}{4} \le \frac{(1-\alpha) \beta_t}{4}$.
Hence,
from Lemma~\ref{lem:Tsallis-stab} with $i^* = \tilde{I}(t)$,
we have
\begin{align}
		\nonumber
  \E \left[
    \linner 
    \hat{\ell}_t,
    q_{t} - q_{t+1}
    \rinner
    -
    \beta_{t} D(q_{t+1}, q_t)
    |
    \cH_{t-1}
  \right]
  &
  \le
    \frac{4}{(1-\alpha)\beta_t}
    \left(
    \sum_{i \in [K] \setminus \{ \tilde{I}(t) \}}
    q_{ti}^{1-\alpha}
    +
    q_{t*}^{1-\alpha}
    \right)
  \\
  &
  \le
    \frac{8}{(1-\alpha)\beta_t}
    \sum_{i \in [K] \setminus \{ \tilde{I}(t) \}}
    q_{ti}^{1-\alpha}
    =
    O \left(
	\frac{z_t}{\beta_t}
    \right),
\end{align}
which implies that the second part of \eqref{eq:BOBWcondition} holds.

We next show that the first part of \eqref{eq:BOBWcondition} holds.
Define
$q'_t \in \argmin_{ p \in \cP(K)} \left\{ \linner \sum_{s=1}^{t-1} \hat{\ell}_s, p \rinner + \beta_{t+1} \psi( p ) + \betab \psib (p)  \right\}$.
We
show 
$q'_{ti} \le 2 q_{ti}$ and $ q_{t+1, i} \le 2 q'_{ti} $
by using 
Lemmas~\ref{lem:boundqr1} and \ref{lem:boundqr2},
respectively.
The condition for Lemma~\ref{lem:boundqr1} can be verified as follows:
From the definition of $z_t$,
we have
\begin{align}
  z_t \le \frac{K}{1-\alpha} q_{t*}^{1-\alpha}
\end{align}
and
\begin{align}
	\label{eq:LBht}
  h_t
  =
  -\psi(q_t)
  \ge
  \frac{q_{t*}^{\alpha}}{\alpha} (1 - 2^{\alpha - 1})
  \ge
  \frac{(1-\alpha)q_{t*}^{\alpha}}{4 \alpha}.
\end{align}
We hence have
\begin{align}
  \beta_{t+1} - \beta_{t}
  =
  \frac{z_t}{\beta_t \hat{h}_{t+1} }
  =
  \frac{z_t}{\beta_t {h}_{t} }
  \le
  \frac{4 \alpha K  q_{t*}^{1-2\alpha}}{\beta_1 (1-\alpha)^2 }.
\end{align}
Therefore,
from the definition of $\beta_1$ in \eqref{eq:paramMAB},
if $\alpha \le 1/2$,
we have
\begin{align}
  \frac{4 \alpha K  q_{t*}^{1-2\alpha}}{\beta_1 (1-\alpha)^2 }
  \le
  \frac{4 K  }{\beta_1 (1-\alpha) }
  \le
  1
  \le
  \frac{1-\alpha}{4} \beta_1
  \le
  (1 - \sqrt{2}^{\alpha-1}) \beta_t
\end{align}
and hence the condition \eqref{eq:boundqr1condibeta} in Lemma~\ref{lem:boundqr1} holds.
If $\alpha > 1/2$,
as we have $\alphab = 1-\alpha$,
from the definition of $\betab$ in \eqref{eq:paramMAB2},
we obtain
\begin{align}
  \frac{4 \alpha K q_{t*}^{1-2\alpha}}{\beta_1 (1-\alpha)^2 }
  \le
  \frac{\alpha}{8}
  \betab
  q_{t*}^{1-2\alpha} 
  =
  \frac{\alpha}{8}
  \betab
  q_{t*}^{\alphab-\alpha} 
  \le
  \frac{1-\sqrt{2}^{\alphab-1}}{\sqrt{2}} 
  \betab
  q_{t*}^{\alphab-\alpha} 
  ,
\end{align}
which implies the condition \eqref{eq:boundqr1condibeta} in Lemma~\ref{lem:boundqr1} holds.
Hence,
by applying Lemma~\ref{lem:boundqr1}
with $\ell = 0$,
$\beta = \beta_t$,
$\beta' = \beta_{t+1}$,
and $\alphab =  1-\alpha$,
we obtain
$q'_{ti} \le 2q_{ti}$ for all $i \in [K]$.
Further,
as we have
$\beta_{t+1} \ge \beta_1 \ge
\frac{4 K}{1-\alpha}
\ge
\frac{2 K}{1 - 2^{\alpha - 1}} $,
we can apply Lemma~\ref{lem:boundqr2} with
$\omega = 2$,
$E = \{ (i, i) \mid i \in [K] \}$,
$\ell = \hat{\ell}_t$,
and
$\beta = \beta_{t+1} / 2$
to obtain 
$q_{t+1,i} \le 2q'_{ti}$ for all $i \in [K]$.
We hence have
$q_{t+1, i} \le 4 q_{ti}$ for all $i \in [K]$.
Therefore,
from Lemma~\ref{lem:psiqr},
we obtain
$h_{t+1} = O(h_t)$,
which means that the first part of \eqref{eq:BOBWcondition} holds.

\paragraph{Verifying condition \eqref{eq:htztomega}}
For any $i^* \in [K]$,
we have
\begin{align}
		\nonumber
	z_t
	&
	=
	\frac{1}{1-\alpha}
		\sum_{i=1}^{K} \tilde{q}_{ti}^{1-\alpha}
	\le
	\frac{1}{1-\alpha}
	\left(
		\sum_{i \in [K] \setminus \{ \tilde{I}(t) \}} {q}_{ti}^{1-\alpha}
		+
		(1 - q_{t, \tilde{I}(t)})^{1-\alpha}
	\right)
	\\
	&
	\le
	\frac{2}{1-\alpha}
	\left(
		\sum_{i \in [K] \setminus \{ \tilde{I}(t) \}} {q}_{ti}^{1-\alpha}
	\right)
	\le
	\frac{2}{1-\alpha}
	\left(
		\sum_{i \in [K] \setminus \{ i^* \}} {q}_{ti}^{1-\alpha}
	\right)
	\le
	\frac{2 (K-1)^{\alpha}}{1-\alpha}
\end{align}
and
\begin{align}
	h_t
	&
	=
	\frac{1}{\alpha}
	\left(
		\sum_{i=1}^{K} {q}_{ti}^{\alpha}
		-1
	\right)
	\le
	\frac{1}{\alpha}
	\left(
		\sum_{i=1}^{K} {q}_{ti}^{\alpha}
		- q_{t,i^*}^{\alpha}
	\right)
	=
	\frac{1}{\alpha}
		\sum_{i \in [K] \setminus \{ i^* \}} {q}_{ti}^{\alpha}
	\le
	\frac{(K-1)^{1-\alpha}}{\alpha}.
	\label{eq:htbound1}
\end{align}
We hence have $h_1 z_{\max} \le \frac{2(K-1)}{(1-\alpha) \alpha}$.
Further,
from H\"older's inequality,
we have
\begin{align}
		\nonumber
	z_t
	&
	\le
	\frac{2}{1-\alpha}
	\left(
		\sum_{i \in [K] \setminus \{ i^* \}} {q}_{ti}^{1-\alpha}
	\right)
	=
	\frac{2}{1-\alpha}
	\left(
		\sum_{i \in [K] \setminus \{ i^* \}} \frac{1}{\Delta_i^{1-\alpha}} (\Delta_{i} {q}_{ti})^{1-\alpha}
	\right)
	\\
	&
	\le
	\frac{2}{1-\alpha}
	\left(
		\sum_{i \in [K] \setminus \{ i^* \}} 
		\frac{1}{\Delta_i^{\frac{1-\alpha}{\alpha}}} 
	\right)^{\alpha}
	\left(
		\sum_{i \in [K] \setminus \{ i^* \}} 
		\Delta_{i} {q}_{ti}
	\right)^{1-\alpha}
\end{align}
and 
\begin{align}
		\nonumber
	h_t
	&
	\le
	\frac{1}{\alpha}
	\left(
		\sum_{i \in [K] \setminus \{ i^* \}} {q}_{ti}^{\alpha}
	\right)
	=
	\frac{1}{\alpha}
	\left(
		\sum_{i \in [K] \setminus \{ i^* \}} \frac{1}{\Delta_i^{\alpha}} (\Delta_{i} {q}_{ti})^{\alpha}
	\right)
	\\
	&
	\le
	\frac{1}{\alpha}
	\left(
		\sum_{i \in [K] \setminus \{ i^* \}} 
		\frac{1}{\Delta_i^{\frac{\alpha}{1-\alpha}}} 
	\right)^{1-\alpha}
	\left(
		\sum_{i \in [K] \setminus \{ i^* \}} 
		\Delta_{i} {q}_{ti}
	\right)^{\alpha}.
	\label{eq:htbound2}
\end{align}
We hence have
\begin{align}
	h_t
	z_t 
	\le
	\frac{2}{\alpha (1-\alpha)} 
	\left(
	\sum_{i \in [K] \setminus \{i^*\}} \Delta_i^{-\frac{ \alpha }{1-\alpha}}
	\right)^{1-\alpha}
	\left(
	\sum_{i \neq [K] \setminus \{i^*\}} \Delta_i^{-\frac{1- \alpha }{\alpha}}
	\right)^{\alpha}
	\linner \Delta, q_t \rinner,
\end{align}
which means that \eqref{eq:htztomega} holds with $\omega(\Delta)$ defined by \eqref{eq:defomegaMAB}.

\subsection{Linear Bandit: Proof of Theorem~\ref{thm:BOBWLinear}}
From Proposition~\ref{lem:BOBW},
it suffices to verify that conditions \eqref{eq:BOBWcondition} and \eqref{eq:htztomega} hold.
\paragraph{Verifying condition \eqref{eq:BOBWcondition}}
From~\eqref{eq:paramLinear} and \eqref{eq:defp0linear},
we have
\begin{align}
  |\hat{\ell}_{ti}|
  \le
  \frac{cd}{\gamma_t}
  \le
  \frac{(1-\alpha) q_{t*}^{\alpha-1}}{4}\beta_t.
\end{align}
Hence,
we can apply Lemma~\ref{lem:Tsallis-stab2} to obtain the following:
\begin{align}
		\nonumber
	&
	\E \left[
		\linner 
		\hat{\ell}_t,
		q_{t} - q_{t+1}
		\rinner
		-
		\beta_{t} D(q_{t+1}, q_t)
		|
		\cH_{t-1}
	\right]
	\le
	\frac{4}{(1-\alpha) \beta_t}
	\E \left[
		\sum_{i = 1}^K
		\hat{\ell}_{ti}^2 \tilde{q}_{ti}^{2-\alpha}
		|
		\cH_{t-1}
	\right]
	\\
		\nonumber
	&
	\le
	\frac{4}{(1-\alpha) \beta_t}
	\E \left[
		\sum_{i = 1}^K
		\phi_i^\top
		S(p_t)^{-1} 
		\phi_{I(t)}
		\phi_{I(t)}^\top
		S(p_t)^{-1} 
		\phi_i 
		\tilde{q}_{ti}^{2-\alpha}
		|
		\cH_{t-1}
	\right]
	\\
		\nonumber
	&
	=
	\frac{4}{(1-\alpha) \beta_t}
	\sum_{i = 1}^K
	\phi_i^\top
	S(q_t)^{-1} 
	\phi_i 
	\tilde{q}_{ti}^{2-\alpha}
	\le
	\frac{8}{(1-\alpha) \beta_t}
	\sum_{i = 1}^K
	\phi_i^\top
	S(q_t)^{-1} 
	\phi_i 
	\tilde{q}_{ti}^{2-\alpha}
	\\
		\nonumber
	&
	=
	\frac{8}{(1-\alpha) \beta_t}
	\mathrm{tr}
	\left(
	S(q_t)^{-1} 
	\sum_{i = 1}^K
	\phi_i 
	\phi_i^\top
	\tilde{q}_{ti}^{2-\alpha}
	\right)
	\le
	\frac{8}{(1-\alpha) \beta_t}
	\mathrm{tr}
	\left(
	S(q_t)^{-1} 
	\sum_{i = 1}^K
	\phi_i 
	\phi_i^\top
	{q}_{ti}
	\right)
	{q}_{t*}^{1-\alpha}
	\\
	&
	=
	\frac{8}{(1-\alpha) \beta_t}
	\mathrm{tr}(I_d)
	{q}_{t*}^{1-\alpha}
	=
	\frac{8d}{(1-\alpha) \beta_t}
	{q}_{t*}^{1-\alpha}
	=
	O \left(
		\frac{z_t}{\beta_t}
	\right),
\end{align}
where $\mathrm{tr}(M)$ represents the trace of a matrix $M$ and
$I_d \in \re^{d \times d}$ denotes the identity matrix of size $d$.
As it is clear from the definition of $\gamma_t$ in \eqref{eq:paramLinear} that $\gamma_t = O(z_t/\beta_t)$,
we can verify that the second part of \eqref{eq:BOBWcondition} holds.
We next see that $h_{t+1} = O(h_t)$.
From \eqref{eq:LBht} and the definition of $z_t$,
we have
\begin{align}
	\beta_{t+1} - \beta_t
	=
	\frac{z_t}{\beta_t \hat{h}_{t+1}}
	=
	\frac{z_t}{\beta_t {h}_{t}}
	\le
	\frac{4  \alpha d q_{t*}^{1-2\alpha}}{\beta_1 (1-\alpha)^2}
	\le
	\frac{\betab \alpha q_{t*}^{1-2\alpha}}{8},
\end{align}
where the first equality comes from \eqref{eq:stability-agnostic},
the second equality follows from the definition $\hat{h}_t$ in Algorithm~\ref{alg:FTRL},
the first inequality follows from \eqref{eq:LBht} and the definition of $z_t$ in \eqref{eq:paramLinear},
and the last inequality follows from the condition on $\betab$ in \eqref{eq:paramLinear}.
Thus,
we can apply Lemma~\ref{lem:boundqr1} to $\ell = \hat{\ell}_t$,
$\beta = \beta_t$,
and
$\beta' = \beta_{t+1}$
to obtain $h_{t+1} = O(h_t)$.
Therefore,
it has been confirmed that condition \eqref{eq:BOBWcondition} is satisfied.

\paragraph{Verifying condition \eqref{eq:htztomega}}
From the definition of $z_t$ in \eqref{eq:paramLinear},
and from \eqref{eq:htbound1},
we have
$h_1 z_{\max} \le \frac{d}{\alpha(1-\alpha)} K^{1-\alpha} $.
In addition,
for any $i^* \in [K]$
we have
\begin{align}
		\nonumber
	z_t 
	&
	\le \frac{d}{1-\alpha} \left(1 - q_{t, \tilde{I}(t)} \right)^{1-\alpha} 
	\le \frac{d}{1-\alpha} \left(1 - q_{t, i^*} \right)^{1-\alpha}
	\\
	&
	\le 
	\frac{d}{(1-\alpha)\Delta_{\min}^{1-\alpha}} 
	\left( \Delta_{\min} \sum_{i \in [K] \setminus \{ i^* \}} q_{ti} \right)^{1-\alpha}
	\le 
	\frac{d}{(1-\alpha)\Delta_{\min}^{1-\alpha}} 
	\left( \linner \Delta, q_t \rinner \right)^{1-\alpha}.
\end{align}
By combining this with \eqref{eq:htbound2},
we obtain
\begin{align}
	h_t z_t
	\le
	\frac{d}{\alpha (1-\alpha)} 
	\Delta_{\min}^{\alpha-1}
	\left(
	\sum_{i \neq i^*} \Delta_i^{-\frac{\alpha }{1-\alpha}}
	\right)^{1-\alpha}
	\linner \Delta, q_t \rinner,
\end{align}
which implies that \eqref{eq:htztomega} holds with $\omega(\Delta)$ defined by \eqref{eq:defomegaLinear}.


\subsection{Graph bandit}
\label{sec:graph}
In the \textit{graph bandit} problems,
the player is given \textit{feedback graph} $G = (V, E)$,
where $V = [K]$ is the set of vertices and $E \subseteq V \times V$ is the set of edges.
In this paper,
we assume that the graph is undirected and
that every vertex has a self-loop,
i.e.,
$(i, j) \in E$ if $(j, i) \in E$
and
$(i, i) \in E$ for all $i,j \in V$.
Denote $N(i) = \{ j \in [K] \mid (i, j) \in E \}$.
The feedback from the environment is the values of losses for vertices adjacent to the chosen vertex,
i.e.,
the player can observe $\ell_{t i}$ for all $i \in N(I(t))$,
after incurring the loss of $\ell_{t, I(t)}$.
Let
$P_{ti} \in [0,1]$
denote the probability that $\ell_{ti}$ is observed,
i.e.,
let
$P_{ti} = \sum_{j \in N(i)} p_{tj}$.
Let $\zeta \ge 1$ denote the independence number of the feedback graph $G$.

In applying Algorithm~\ref{alg:FTRL} to graph bandit problems,
we choose arbitrary $\alpha \in (0, 1)$ and
set parameters as
\begin{align}
  \label{eq:paramGraph}
  \beta_1 \ge
    \frac{4K}{1-\alpha},
  \quad
  z_t = \frac{1}{1-\alpha} \sum_{i =1}^{K} \frac{\tilde{q}_{ti}^{2-\alpha}}{P_{ti}},
  \quad
	\gamma_t = 0,
	\quad
	\hat{\ell}_{ti}
	=
	\frac{\mathbf{1}[i \in N(I(t))]}{P_{ti}} \ell_{ti}.
\end{align}
We also set $\betab \ge 0$ 
by \eqref{eq:paramMAB2}.
Then,
\eqref{eq:BOBWcondition} holds
under the conditions of
\eqref{eq:paramMAB2} and \eqref{eq:paramGraph}.
In addition,
we can show that
$h_t = - \psi(q_t)$
and
$z_t $ in \eqref{eq:paramMAB}
satisfy
$h_1 z_t \le 2 \frac{\zeta}{\alpha (1-\alpha)} \left( \frac{K}{\zeta} \right)^{1-\alpha}$ and
that
\eqref{eq:htztomega} holds with
$\omega(\Delta)$ defined by
\begin{align}
  \label{eq:defomegaGraph}
  \omega(\Delta)
  =
  \frac{2 \zeta^{\alpha}}{\alpha (1-\alpha)} 
  \Delta_{\min}^{\alpha-1}
  \left(
    \sum_{i \neq i^*} \Delta_i^{-\frac{\alpha }{1-\alpha}}
  \right)^{1-\alpha}
  \le
  \frac{2 \zeta}{\alpha (1-\alpha) \Delta_{\min}} 
  \left(
    \frac{K}{\zeta}
  \right)^{1-\alpha}
  .
\end{align}
Hence,
Proposition~\ref{lem:BOBW} leads to the following regret bounds:
\begin{theorem}
  \label{thm:BOBWGraph}
  Let $G = (V=[K], E)$ be an undirected graph,
  of which all vertices have self-loops,
  with the independence number $\zeta \ge 1$.
  For the graph bandit problem associated with $G$,
  Algorithm~\ref{alg:FTRL} with \eqref{eq:paramGraph} and \eqref{eq:paramMAB2}
  achieves BOBW regret bounds in Proposition~\ref{lem:BOBW} with 
  $h_1 z_{\max} = O\left(
	\frac{\zeta}{\alpha (1-\alpha)} \left( \frac{K}{\zeta} \right)^{1-\alpha}
\right)$
  and
  $\omega(\Delta)$ given by \eqref{eq:defomegaGraph}.
\end{theorem}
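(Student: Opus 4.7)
The plan is to apply Proposition~\ref{lem:BOBW}, which reduces the theorem to three tasks: verifying the stability/monotonicity conditions in \eqref{eq:BOBWcondition} under the parameter choices \eqref{eq:paramGraph} and \eqref{eq:paramMAB2}, bounding $h_1 z_{\max}$ as claimed, and proving the self-bounding inequality \eqref{eq:htztomega} with $\omega(\Delta)$ from \eqref{eq:defomegaGraph}. The overall skeleton parallels the proofs of Theorems~\ref{thm:BOBWMAB} and \ref{thm:BOBWLinear}; the only new ingredient is a graph-theoretic bound relating $z_t$ to the independence number $\zeta$.

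For the stability half of \eqref{eq:BOBWcondition}, let $\tilde{I}(t)\in\argmax_i q_{ti}$. The self-loop assumption gives $P_{t,\tilde{I}(t)}\ge q_{t,\tilde{I}(t)}\ge 1/K$, hence $\hat{\ell}_{t,\tilde{I}(t)}\le K\le (1-\alpha)\beta_t/4$; together with $\hat{\ell}_{ti}\ge 0$ this lets us invoke Lemma~\ref{lem:Tsallis-stab} with $i^*=\tilde{I}(t)$. Taking conditional expectation and using $\E[\hat{\ell}_{ti}^2\mid\cH_{t-1}]\le 1/P_{ti}$ reproduces the bound $O(z_t/\beta_t)$ for $z_t$ as in \eqref{eq:paramGraph}; since $\gamma_t=0$, this verifies the second part of \eqref{eq:BOBWcondition}. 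For the monotonicity $h_{t+1}=O(h_t)$, I would mimic the MAB argument: introduce the auxiliary $q'_t$ by replacing $\beta_t\mapsto\beta_{t+1}$ in the FTRL objective, verify \eqref{eq:boundqr1condibeta} using $\beta_{t+1}-\beta_t=z_t/(\beta_t h_t)$ together with the lower bound \eqref{eq:LBht} (neither depends on the observation structure) to obtain $q'_{ti}\le 2 q_{ti}$ via Lemma~\ref{lem:boundqr1}, invoke Lemma~\ref{lem:boundqr2} applied with the feedback graph $G$ to obtain $q_{t+1,i}\le 2 q'_{ti}$, and finally apply Lemma~\ref{lem:psiqr}.

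The main obstacle is the graph-theoretic bound $z_t = O(\zeta^{\alpha}(1-q_{t,\tilde{I}(t)})^{1-\alpha}/(1-\alpha))$, i.e., the $z_t$ entry in Table~\ref{tab:SPAS}. The plan is: for $i\ne\tilde{I}(t)$ a short case analysis on $q_{t,\tilde{I}(t)}$ versus $1/2$ shows $\tilde{q}_{ti}=q_{ti}$, so one may apply H\"older's inequality with exponents $(1/(1-\alpha),1/\alpha)$ to obtain
\begin{align*}
\sum_{i\ne\tilde{I}(t)}\frac{q_{ti}^{2-\alpha}}{P_{ti}}
=\sum_{i\ne\tilde{I}(t)} q_{ti}^{1-\alpha}\cdot\frac{q_{ti}}{P_{ti}}
\le\Bigl(\sum_{i\ne\tilde{I}(t)} q_{ti}\Bigr)^{1-\alpha}\Bigl(\sum_i (q_{ti}/P_{ti})^{1/\alpha}\Bigr)^{\alpha},
\end{align*}
and then exploit $q_{ti}/P_{ti}\le 1$ combined with $1/\alpha\ge 1$ to conclude $(q_{ti}/P_{ti})^{1/\alpha}\le q_{ti}/P_{ti}$; the standard independence-number inequality $\sum_i q_{ti}/P_{ti}\le\zeta$ for self-looped undirected graphs (see, e.g., \citealp{alon2017nonstochastic,eldowa2023minimax}) then supplies the $\zeta^{\alpha}$ factor. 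The residual $i=\tilde{I}(t)$ contribution is handled by $\tilde{q}_{t,\tilde{I}(t)}\le 1-q_{t,\tilde{I}(t)}$ and $P_{t,\tilde{I}(t)}\ge q_{t,\tilde{I}(t)}\ge 1/K$. With this bound on $z_t$, the estimate $h_1\le K^{1-\alpha}/\alpha$ delivers $h_1 z_{\max}=O(\zeta(K/\zeta)^{1-\alpha}/(\alpha(1-\alpha)))$; combining the $z_t$ bound with $(1-q_{t,i^*})\le\langle\Delta,q_t\rangle/\Delta_{\min}$ and the entropy estimate \eqref{eq:htbound2} yields \eqref{eq:htztomega} with the claimed $\omega(\Delta)$, in close analogy with the linear bandit argument.
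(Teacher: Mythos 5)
Your proof is correct and follows the same high-level skeleton as the paper (apply Proposition~\ref{lem:BOBW}, verify \eqref{eq:BOBWcondition} and \eqref{eq:htztomega}, and use the same stability and monotonicity arguments), but you take a genuinely different route to the central graph-theoretic estimate that the paper packages as Lemma~\ref{lem:boundztGraph}. The paper invokes Lemma~1 of \citet{eldowa2023minimax}, which guarantees the existence of an independent set $S\subseteq[K]\setminus\{\tilde{I}(t)\}$ with $\sum_{i\ne\tilde{I}(t)}q_{ti}^{2-\alpha}/P_{ti}\le\sum_{i\in S}q_{ti}^{1-\alpha}$, and only then applies H\"older's inequality to that restricted sum to pull out $|S|^\alpha\le\zeta^\alpha$. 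You instead write $q_{ti}^{2-\alpha}/P_{ti}=q_{ti}^{1-\alpha}\cdot(q_{ti}/P_{ti})$, apply H\"older with exponents $\bigl(1/(1-\alpha),\,1/\alpha\bigr)$ directly, exploit $q_{ti}/P_{ti}\le1$ and $1/\alpha\ge1$ to drop the exponent, and then close with the classical independence-number bound $\sum_i q_{ti}/P_{ti}\le\zeta$ for undirected self-looped graphs (the Alon et al.\ lemma). Both routes produce exactly the bound $(1+\zeta^\alpha)(1-q_{t,\tilde{I}(t)})^{1-\alpha}$ of Lemma~\ref{lem:boundztGraph}, so the downstream computations of $h_1z_{\max}$ and $\omega(\Delta)$ proceed identically. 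Your version is arguably more self-contained in that it reduces to a widely cited, simpler graph lemma rather than a paper-specific one, at the cost of the extra elementary step $(q_{ti}/P_{ti})^{1/\alpha}\le q_{ti}/P_{ti}$; one small bookkeeping remark is that after H\"older the second factor should strictly be a sum over $i\ne\tilde{I}(t)$, which you then enlarge to the full sum before applying the $\zeta$-bound — this is harmless since all summands are nonnegative, but is worth stating explicitly.
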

\begin{proof}
From Proposition~\ref{lem:BOBW},
it suffices to verify that conditions \eqref{eq:BOBWcondition} and \eqref{eq:htztomega} hold.
\paragraph{Verifying condition \eqref{eq:BOBWcondition}}
As $\hat{\ell}_{t, \tilde{I}(t)} \le \frac{\ell_{t,\tilde{I}(t)}}{p_{t, \tilde{I}(t)}} \le K$
we can apply Lemma~\ref{lem:Tsallis-stab} with $i^* = \tilde{I}(t)$ to obtain
\begin{align}
		\nonumber
	&
	\E \left[
		\linner 
		\hat{\ell}_t,
		q_{t} - q_{t+1}
		\rinner
		-
		\beta_{t} D(q_{t+1}, q_t)
		|
		\cH_{t-1}
	\right]
	\le
	\frac{4}{(1-\alpha) \beta_t}
	\E \left[
		\sum_{i = 1 }^K
		\hat{\ell}_{ti}^2 \tilde{q}_{ti}^{2-\alpha}
		|
		\cH_{t-1}
	\right]
	\\
	&
	\le
	\frac{4}{(1-\alpha) \beta_t}
	\E \left[
		\sum_{i = 1}^K
		\frac{ \mathbf{1}\{I_t \in N(i)\} }{P_{ti}^2}
		\tilde{q}_{ti}^{2-\alpha}
		|
		\cH_{t-1}
	\right]
	=
	\frac{4}{(1-\alpha) \beta_t}
	\sum_{i = 1}^K
	\frac{ 
	\tilde{q}_{ti}^{2-\alpha}
	}{P_{ti}}
	=
	O \left(
		\frac{z_t}{\beta_t}
	\right).
\end{align}
Further,
$h_{t+1} = O(h_t)$ can be shown following the approach outlined in Section~\ref{sec:ProofMAB}.
Thus,
it has been confirmed that condition \eqref{eq:BOBWcondition} is satisfied.

\paragraph{Verifying condition \eqref{eq:htztomega}}
We can obtain a bound on $z_t$
from Lemma~1 by \citet{eldowa2023minimax} as follows:
\begin{lemma}
  \label{lem:boundztGraph}
  Let $\zeta \ge 1$ be the independence number of $G$.
  We then have
  \begin{align}
    \label{eq:boundztGraph}
    \sum_{i =1}^K
    \frac{\tilde{q}_{ti}^{2-\alpha}}{P_{ti}}
    \le
    (1 + \zeta^{\alpha})
    \left( 1 - q_{t, \tilde{I}(t)} \right)^{1-\alpha}.
  \end{align}
\end{lemma}
\begin{proof}
  From the proof of Lemma~1 by \citet{eldowa2023minimax},
  there exists an independent set $S \subseteq [K] \setminus \{ \tilde{I}(t) \}$ such that
  \begin{align}
    \sum_{i \in [K] \setminus \{ \tilde{I}(t) \}}
    \frac{q_{ti}^{2-\alpha}}{P_{ti}}
    \le
    \sum_{i \in S}
    q_{ti}^{1-\alpha}.
  \end{align}
  From H\"older's inequality,
  we have
  \begin{align}
    \sum_{i \in S}
    q_{ti}^{1-\alpha}
    \le
    |S|^\alpha
    \left(
    \sum_{i \in S}
    q_{ti}
    \right)^{1-\alpha}.
  \end{align}
  As $S$ is an independent set of $G$ and
  $S \subseteq [K] \setminus \{ \tilde{I}(t) \}$,
  we have
  \begin{align}
    |S|^\alpha
    \left(
    \sum_{i \in S}
    q_{ti}
    \right)^{1-\alpha}
    \le
    \zeta^{\alpha}
    \left(
    \sum_{i \in [K] \setminus \{\tilde{I}(t)\}}
    q_{ti}
    \right)^{1-\alpha}
    =
    \zeta^{\alpha}
    \left(
      1 - q_{t, \tilde{I}(t)}
    \right)^{1-\alpha}.
  \end{align}
  In addition,
  we have
  \begin{align}
    \frac{
    q_{t*}^{2-\alpha}
    }{P_{t,\tilde{I}(t)}}
    \le
    \frac{
    q_{t*}^{2-\alpha}
    }{q_{t*}}
    \le
    q_{t*}^{1-\alpha}
    \le
    \left(1 - q_{t, \tilde{I}(t)} \right)^{1-\alpha}.
  \end{align}
  Combining these inequalities,
  we obtain \eqref{eq:boundztGraph}.
\end{proof}
From this lemma,
we have
\begin{align}
  z_t \le \frac{1 + \zeta^{\alpha}}{1-\alpha} \left(1 - q_{t, \tilde{I}(t)} \right)^{1-\alpha} .
\end{align}
From this and \eqref{eq:htbound1},
we have
$h_1 z_{\max} \le \frac{2}{\alpha(1-\alpha)} \zeta^{\alpha} K^{1-\alpha} $.
In addition,
for any $i^* \in [K]$
we have
\begin{align}
		\nonumber
	z_t 
	&
	\le \frac{1 + \zeta^{\alpha}}{1-\alpha} \left(1 - q_{t, \tilde{I}(t)} \right)^{1-\alpha} 
	\le \frac{1 + \zeta^{\alpha}}{1-\alpha} \left(1 - q_{t, i^*} \right)^{1-\alpha}
	\\
	&
	\le 
	\frac{1 + \zeta^{\alpha}}{(1-\alpha)\Delta_{\min}^{1-\alpha}} 
	\left( \Delta_{\min} \sum_{i \in [K] \setminus \{ i^* \}} q_{ti} \right)^{1-\alpha}
	\le 
	\frac{1 + \zeta^{\alpha}}{(1-\alpha)\Delta_{\min}^{1-\alpha}} 
	\left( \linner \Delta, q_t \rinner \right)^{1-\alpha}.
\end{align}
By combining this with \eqref{eq:htbound2},
we obtain
\begin{align}
	h_t z_t
	\le
	\frac{2 \zeta^{\alpha}}{\alpha (1-\alpha)} 
	\Delta_{\min}^{\alpha-1}
	\left(
	\sum_{i \neq i^*} \Delta_i^{-\frac{\alpha }{1-\alpha}}
	\right)^{1-\alpha}
	\linner \Delta, q_t \rinner,
\end{align}
which implies that \eqref{eq:htztomega} holds with $\omega(\Delta)$ defined by \eqref{eq:defomegaGraph}.
\end{proof}

Note that we can obtain
$
\frac{\zeta}{\alpha(1-\alpha)}\left( \frac{K}{\zeta} \right)^{1-\alpha} 
= O \left(  \zeta \log \left(1 + \frac{K}{\zeta} \right) \right)$
by setting
$\alpha = 1 - \frac{1}{2 (1 + \log (K / \zeta))}$,
which recovers the minimax regret upper bound shown by \citet{eldowa2023minimax}.

\subsection{Contextual bandit}
\label{sec:contextual}
In the \textit{contextual bandit} problems,
or the bandit problems with expert advices,
each action $i$ is associated with an \textit{expert},
which provides an \textit{advice} $ \phi_{ti} \in \cP(M) $ in each round $t$.
After choosing an expert $I(t) \in [K]$,
the player can observe the advices $\phi_{ti}$ of all experts $i \in [K]$,
and pick $J(t) \in [M]$ following the distribution of $\phi_{t, I(t)}$.
Then the player gets feedback of the incurred loss $\ell'_{t, J(t)}$,
where $\ell'_t \in [0,1]^M$ is chosen by the environment before the player chooses $I(t)$.
Let 
$
  P_{t} \in \cP(M)
$
denote the distribution that $J(t)$ follows
given $p_t$ and $\left\{ \phi_{ti} \right\}_{i=1}^K$,
i.e.,
$
  P_{tj}
  =
  \sum_{i=1}^K
  p_{ti}
  \phi_{tij}
$.

Let $\alpha \ge 1/2$ and set
\begin{align}
  \label{eq:paramContextual}
  \beta_1 \ge
  \frac{8 K}{1-\alpha},
  ~
  \betab \ge 
  \frac{32 M}{(1-\alpha)^2 \beta_1},
  ~
  z_t = \frac{M q_{t*}^{1-\alpha}}{1-\alpha} ,
  ~
	\gamma_t = 0,
  ~
	\hat{\ell}_{ti}
	=
  \frac{
  \ell'_{t, J(t)}
  \phi_{ti, J(t)}}{ P_{t, J(t)} }.
\end{align}
If parameters are given by
\eqref{eq:paramContextual},
then
\eqref{eq:BOBWcondition} holds.
Further,
$h_t = - \psi(q_t)$
and
$z_t $ in \eqref{eq:paramContextual}
satisfy
$h_1 z_t \le \frac{M}{\alpha (1-\alpha)} K^{1-\alpha}$ and
\eqref{eq:htztomega} with
$\omega(\Delta)$ defined as
\begin{align}
  \label{eq:defomegaContextual}
  \omega(\Delta)
  =
  \frac{M}{\alpha (1-\alpha)} 
  \Delta_{\min}^{\alpha-1}
  \left(
    \sum_{i \neq i^*} \Delta_i^{-\frac{\alpha }{1-\alpha}}
  \right)^{1-\alpha}
  \le
  \frac{MK^{1-\alpha}}{\alpha (1-\alpha) \Delta_{\min}} 
  .
\end{align}
Hence,
Proposition~\ref{lem:BOBW} leads to the following regret bounds:
\begin{theorem}
  \label{thm:BOBWContextual}
  For contextual bandit problems of $M$ arms with $K$ experts,
  Algorithm~\ref{alg:FTRL} with 
  parameters given by \eqref{eq:paramContextual}
  achieves BOBW regret bounds in Proposition~\ref{lem:BOBW} with 
  $h_1 z_{\max} = O\left(
	\frac{M K^{1-\alpha}}{\alpha (1-\alpha)} 
\right)$
  and
  $\omega(\Delta)$ given by \eqref{eq:defomegaContextual}.
\end{theorem}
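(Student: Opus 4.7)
The plan is to apply Proposition~\ref{lem:BOBW}, following exactly the template used for the linear bandit case (Theorem~\ref{thm:BOBWLinear}): one must verify the BOBW decomposition condition~\eqref{eq:BOBWcondition} with $z_t$ as in \eqref{eq:paramContextual} and the self-bounding constraint~\eqref{eq:htztomega} with $\omega(\Delta)$ given by \eqref{eq:defomegaContextual}. The regret bounds then follow immediately from the proposition with the claimed values of $h_1 z_{\max}$ and $\omega(\Delta)$ substituted.

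For the stability part of \eqref{eq:BOBWcondition}, let $\tilde{I}(t) \in \argmax_{i \in [K]} q_{ti}$ and apply Lemma~\ref{lem:Tsallis-stab} with $i^* = \tilde{I}(t)$. Since $\ell'_t \in [0,1]^M$ and $\phi_{ti,J(t)} \ge 0$, the estimator $\hat{\ell}_{ti}$ is non-negative, so the lower-side hypothesis $\hat{\ell}_{ti}/\beta_t \ge -\tfrac{1-\alpha}{4}q_{ti}^{\alpha-1}$ is automatic for $i \ne \tilde{I}(t)$; at $i = \tilde{I}(t)$ the bound $\hat{\ell}_{t,\tilde{I}(t)} \le 1/q_{t,\tilde{I}(t)} \le K$ together with $\beta_1 \ge 8K/(1-\alpha)$ secures the upper-side hypothesis. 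Taking the conditional expectation in $J(t)$ gives $\E[\hat{\ell}_{ti}^2 \mid \cH_{t-1}] = \sum_j (\ell'_{tj})^2 \phi_{tij}^2/P_{tj}$, and the key reduction
\begin{align*}
  \sum_{i \ne \tilde{I}(t)} q_{ti}^{2-\alpha}\phi_{tij}^2
  \le q_{t*}^{1-\alpha} \sum_{i \ne \tilde{I}(t)} q_{ti}\phi_{tij}
  \le q_{t*}^{1-\alpha} P_{tj}
\end{align*}
(using $q_{ti} \le q_{t*}$ for $i \ne \tilde{I}(t)$ and $\phi_{tij} \le 1$), together with an analogous $O(q_{t*}^{1-\alpha})$ estimate for the $(1-q_{t,\tilde{I}(t)})^{2-\alpha}\E[\hat{\ell}_{t,\tilde{I}(t)}^2]$ contribution, collapses the stability to $O(M q_{t*}^{1-\alpha}/((1-\alpha)\beta_t)) = O(z_t/\beta_t)$.

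For $h_{t+1} = O(h_t)$, I would mirror the two-step decomposition of the MAB proof: introduce $q'_t \in \argmin_{p \in \cP(K)}\{\linner \sum_{s<t}\hat{\ell}_s, p \rinner + \beta_{t+1}\psi(p) + \betab \psib(p)\}$, apply Lemma~\ref{lem:boundqr1} with $\ell = 0$ to get $q'_{ti} \le \sqrt{2}q_{ti}$, and then apply Lemma~\ref{lem:boundqr3} (invoking non-negativity of $\hat{\ell}_t$) to get $q_{t+1,i} \le \sqrt{2} q'_{ti}$. Condition~\eqref{eq:boundqr1condibeta} for the first leg reduces, via $h_t \ge (1-\alpha)q_{t*}^{\alpha}/(4\alpha)$ from \eqref{eq:LBht}, to an inequality that is satisfied by $\betab \ge 32M/((1-\alpha)^2 \beta_1)$, with $\alpha \ge 1/2$ ensuring the $q_{t*}^{\alphab-\alpha}$ factors on the two sides align correctly. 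The hypothesis of Lemma~\ref{lem:boundqr3} reduces to $\sum_i q'_{ti}\hat{\ell}_{ti} \le \sqrt{2}\ell'_{t,J(t)} \le \sqrt{2}$, which is absorbed by $\beta_1 \ge 8K/(1-\alpha)$. Combining the two legs gives $q_{t+1,i} \le 2 q_{ti}$, so Lemma~\ref{lem:psiqr} yields $h_{t+1} \le 2h_t$. Condition~\eqref{eq:htztomega} is then handled as in the linear bandit case: $h_1 \le K^{1-\alpha}/\alpha$ from \eqref{eq:htbound1} and $z_{\max} \le M/(1-\alpha)$ give the claimed bound on $h_1 z_{\max}$, and for any $i^* \in [K]$ the inequality $q_{t*} \le 1-q_{t,i^*}$ combined with the H\"older bound \eqref{eq:htbound2} on $h_t$ yields $h_t z_t \le \omega(\Delta) \linner \Delta, q_t \rinner$ with $\omega(\Delta)$ exactly as in \eqref{eq:defomegaContextual}.

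The hard part will be the expected-stability computation above. Because there is no explicit exploration ($\gamma_t = 0$), $\hat{\ell}_{ti}$ is not uniformly bounded by $\tfrac{1-\alpha}{4}q_{t*}^{\alpha-1}\beta_t$, so Lemma~\ref{lem:Tsallis-stab2} is unavailable and one is forced to use Lemma~\ref{lem:Tsallis-stab} with its $(1-q_{t,\tilde{I}(t)})^{2-\alpha}$-weighted residual term instead of the more convenient $q_{t*}^{2-\alpha}$. Recovering the tight $q_{t*}^{1-\alpha}$ dependence therefore hinges on exploiting the specific contextual structure $\hat{\ell}_{ti} \propto \phi_{ti,J(t)}$ and the identity $\sum_i q_{ti}\phi_{tij} = P_{tj}$, together with a case split on whether $q_{t,\tilde{I}(t)} \ge 1/2$ to correctly handle the $\tilde{I}(t)$-coordinate.
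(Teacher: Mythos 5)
Your proof follows the paper's own argument essentially step for step: verify \eqref{eq:BOBWcondition} via Lemma~\ref{lem:Tsallis-stab} (with the case split on $q_{t,\tilde{I}(t)}$ that the paper leaves implicit when it writes $\tilde{q}_{ti}^{2-\alpha}$), exploit the identity $\sum_i q_{ti}\phi_{tij}=P_{tj}$ to collapse the expected stability to $O(Mq_{t*}^{1-\alpha}/\beta_t)$, establish $h_{t+1}=O(h_t)$ through the two-step Lemma~\ref{lem:boundqr1}/Lemma~\ref{lem:boundqr3}/Lemma~\ref{lem:psiqr} chain, and verify \eqref{eq:htztomega} exactly as in the linear bandit case. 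One small slip: Lemma~\ref{lem:boundqr1} concludes $q'_{ti}\le 2q_{ti}$ rather than $\sqrt{2}q_{ti}$ as you wrote, so the two-step composition gives a slightly larger constant than you claim, but this does not affect any of the $O(\cdot)$ conclusions.
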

\begin{proof}
From Proposition~\ref{lem:BOBW},
it suffices to verify that conditions \eqref{eq:BOBWcondition} and \eqref{eq:htztomega} hold.
\paragraph{Verifying condition \eqref{eq:BOBWcondition}}
As $q_{t,\tilde{I}(t)} \ge 1/K$,
we have
\begin{align}
	\label{eq:ellhattildeI}
	\hat{\ell}_{t, \tilde{I}(t)}
	=
	\frac{
	\ell'_{t, J(t)}
	\phi_{t,\tilde{I}(t), J(t)}}{ P_{t, J(t)} }
	\le
	\frac{\phi_{t,\tilde{I}(t),J(t)}}{\sum_{i=1}^K q_{ti} \phi_{ti,J(t)}}
	\le
	\frac{\phi_{t,\tilde{I}(t),J(t)}}{q_{t,\tilde{I}(t)} \phi_{t,\tilde{I}(t),J(t)}}
	\le
	\frac{1}{q_{t, \tilde{I}(t)}}
	\le
	K.
\end{align}
Hence,
we can apply Lemma~\ref{lem:Tsallis-stab} with $i^* = \tilde{I}(t)$ to obtain
\begin{align}
		\nonumber
	&
	\E \left[
		\linner 
		\hat{\ell}_t,
		q_{t} - q_{t+1}
		\rinner
		-
		\beta_{t} D(q_{t+1}, q_t)
		|
		\cH_{t-1}
	\right]
	\le
	\frac{4}{(1-\alpha) \beta_t}
	\E \left[
		\sum_{i = 1 }^K
		\hat{\ell}_{ti}^2 \tilde{q}_{ti}^{2-\alpha}
		|
		\cH_{t-1}
	\right]
	\\
		\nonumber
	&
	\le
	\frac{4}{(1-\alpha) \beta_t}
	\E \left[
		\sum_{i = 1}^K
		\frac{ \phi_{ti,J(t)}^2 }{P_{t,J(t)}^2}
		\tilde{q}_{ti}^{2-\alpha}
		|
		\cH_{t-1}
	\right]
	\le
	\frac{4}{(1-\alpha) \beta_t}
	\E \left[
		\sum_{i = 1}^K
		\frac{ \phi_{ti,J(t)} }{P_{t,J(t)}^2}
		\tilde{q}_{ti}^{2-\alpha}
		|
		\cH_{t-1}
	\right]
	\\
		\nonumber
	&
	=
	\frac{4 }{(1-\alpha) \beta_t}
	\sum_{j=1}^M
	\sum_{i = 1}^K
	\frac{ \phi_{tij} }{P_{tj}}
	\tilde{q}_{ti}^{2-\alpha}
	\le
	\frac{4 }{(1-\alpha) \beta_t}
	\sum_{j=1}^M
	\sum_{i = 1}^K
	\frac{ q_{ti} \phi_{tij} }{P_{tj}}
	{q}_{t*}^{1-\alpha}
	\\
	&
	=
	\frac{4 }{(1-\alpha) \beta_t}
	\sum_{j=1}^M
	\frac{ P_{tj} }{P_{tj}}
	{q}_{t*}^{1-\alpha}
	=
	\frac{4 M q_{t*}^{1-\alpha}}{(1-\alpha) \beta_t}
	=
	O
	\left(
		\frac{z_t}{\beta_t}
	\right).
\end{align}
Further,
$h_{t+1} = O(h_t)$ can be shown following the approach outlined in Section~\ref{sec:ProofMAB}.
In fact,
as we have
$\hat{\ell}_{ti} \ge 0$ and $\sum_{i = 1}^K q_{ti} \hat{\ell}_{ti} \le 1$
and $\beta_t \ge \frac{8K}{1-\alpha}$,
we can apply Lemma~\ref{lem:boundqr3} with $\omega = 2$, $\ell = \hat{\ell}_t$.
In addition,
\eqref{eq:ellhattildeI} and the definition of $\beta$ and $\betab$ in \eqref{eq:paramContextual} ensure that
we can apply Lemma~\ref{lem:boundqr1} with $\omega = 2$, $\ell = 0$, and $i^* = \tilde{I}(t)$.
Thus,
it has been confirmed that condition \eqref{eq:BOBWcondition} is satisfied.

\paragraph{Verifying condition \eqref{eq:htztomega}}
From the definition of $z_t$ in \eqref{eq:paramContextual},
and from \eqref{eq:htbound1},
we have
$h_1 z_{\max} \le \frac{d}{\alpha(1-\alpha)} K^{1-\alpha} $.
In addition,
for any $i^* \in [K]$
we have
\begin{align*}
	z_t 
	&
	\le \frac{M}{1-\alpha} \left(1 - q_{t, \tilde{I}(t)} \right)^{1-\alpha} 
	\le \frac{M}{1-\alpha} \left(1 - q_{t, i^*} \right)^{1-\alpha}
	\\
	&
	\le 
	\frac{M}{(1-\alpha)\Delta_{\min}^{1-\alpha}} 
	\left( \Delta_{\min} \sum_{i \in [K] \setminus \{ i^* \}} q_{ti} \right)^{1-\alpha}
	\le 
	\frac{M}{(1-\alpha)\Delta_{\min}^{1-\alpha}} 
	\left( \linner \Delta, q_t \rinner \right)^{1-\alpha}.
\end{align*}
By combining this with \eqref{eq:htbound2},
we obtain
\begin{align*}
	h_t z_t
	\le
	\frac{M}{\alpha (1-\alpha)} 
	\Delta_{\min}^{\alpha-1}
	\left(
	\sum_{i \neq i^*} \Delta_i^{-\frac{\alpha }{1-\alpha}}
	\right)^{1-\alpha}
	\linner \Delta, q_t \rinner,
\end{align*}
which implies that \eqref{eq:htztomega} holds with $\omega(\Delta)$ defined by \eqref{eq:defomegaContextual}.
\end{proof}

Note that we
obtain
$\frac{MK^{1-\alpha}}{\alpha (1-\alpha)} = O (M \log K)$
by setting
$\alpha = 1 - \frac{1}{4 \log K}$,
which recovers the regret upper bound by \citet[Corollary 13]{dann2023blackbox}.

\end{document}